\newcommand{\Cov}{\mathrm{cov}}
\newcommand{\pr}{\mathbb{P}}
\newcommand{\E}{\mathbb{E}}
\DeclareMathOperator{\argmin}{argmin}
\DeclareMathOperator{\Ramp}{Ramp}
\DeclareMathOperator{\ReLU}{ReLU}
\DeclareMathOperator{\Unif}{Unif}
\DeclareMathOperator{\Rad}{Rad}
\DeclareMathOperator{\NN}{NN}
\DeclareMathOperator{\poly}{poly}
\DeclareMathOperator{\TV}{TV}
\newcommand{\cF}{\mathcal{F}}
\newcommand{\cX}{\mathcal{X}}
\newcommand{\cD}{\mathcal{D}}
\newcommand{\cN}{\mathcal{N}}
\newcommand{\cP}{\mathcal{P}}
\newcommand{\bR}{\mathbb{R}}
\newcommand{\bN}{\mathbb{N}}
\newcommand{\cL}{\mathcal{L}}
\newcommand{\cB}{\mathcal{B}}
\theoremstyle{plain}
\newtheorem{definition}{Definition}
\theoremstyle{plain}
\newtheorem{theorem}{Theorem}
\theoremstyle{plain}
\theoremstyle{plain}
\theoremstyle{plain}
\theoremstyle{plain}
\theoremstyle{plain}
\theoremstyle{plain}
\newtheorem{lemma}{Lemma}
\theoremstyle{plain}
\newtheorem{corollary}{Corollary}
\theoremstyle{plain}
\theoremstyle{remark}
\newtheorem{remark}{Remark}
\theoremstyle{remark}
\theoremstyle{plain}
\begin{document}

\author{
Elisabetta Cornacchia\thanks{Institute of Mathematics, École Polytechnique Fédérale de Lausanne. Email: \textup{\tt elisabetta.cornacchia@epfl.ch}}
\and
Elchanan Mossel\thanks{Department of Mathematics and IDSS, Massachusetts Institute of Technology. Email: \textup{\tt elmos@mit.edu}}}

\title{A Mathematical Model for Curriculum Learning for Parities}
\date{}

\maketitle

\begin{abstract}
\textit{Curriculum learning (CL)} - training using samples that are generated and presented in a meaningful order - was introduced in the machine learning context around a decade ago. 
While CL has been extensively used and analysed empirically, there has been very little mathematical justification for its advantages. 
    We introduce a CL model for learning the class of $k$-parities on $d$ bits of a binary string with a neural network trained by stochastic gradient descent (SGD). We show that a wise choice of training examples involving two or more product distributions, allows to reduce significantly the computational cost of learning this class of functions, compared to learning under the uniform distribution. 
    Furthermore, we show that for another class of functions - namely the `Hamming mixtures' - CL strategies involving a bounded number of product distributions are not beneficial.  
\end{abstract}

\section{Introduction}
Several experimental studies have shown that humans and animals learn considerably better if the learning materials are presented in a curated, rather than random, order~\cite{elio1984effects,ross1990generalizing,avrahami1997teaching,shafto2014rational}. This is broadly reflected in the educational system of our society, where learning is guided by an highly organized curriculum. This may involve several learning steps: with easy concepts introduced at first and harder concepts built from previous stages.

Inspired by this, \cite{bengio2009curriculum} formalized a \textit{curriculum learning (CL)} paradigm in the context of machine learning and showed that for various learning tasks it provided improvements in both the training speed and the performance obtained at convergence. This seminal paper inspired many subsequent works, that studied curriculum learning strategies in various application domains, e.g. computer vision~\cite{sarafianos2017curriculum,dong2017multi}, computational biology~\cite{xiong2021modeling}, auto-ML~\cite{graves2017automated}, natural language modelling~\cite{shi2013k,zaremba2014learning,shi2015recurrent,campos2021curriculum}. 
While extensive empirical analysis of CL strategies have been carried out, there is a lack of theoretical analysis. In this paper, we make progress in this direction. 

A stylized family of functions that is known to pose computational barriers is the class of $k$-parities over $d$ bits of a binary string. In this work we focus on this class. 
To define this class: for each subset $S$ of coordinates, the parity over $S$ is defined as $+1$ if the number of negative bits in $S$ is even, and $-1$ otherwise, i.e. $\chi_S(x):= \prod_{i \in S} x_i$, $x_i \in \{\pm 1 \}$. The class of $k$-parities contains all $\chi_S $ such that $|S| =k$ and it has cardinality ${ d \choose k}$. Learning $k$-parities requires learning the support of $\chi_S$ by observing samples $(x, \chi_S(x))$, $x \in \{ \pm 1 \}^d$, with the knowledge of the cardinality of $S$ being $k$. This requires finding the right target function among the ${ d \choose k}$ functions belonging to the class. 

Learning parities is always possible, and efficiently so, by specialized methods (e.g. Gaussian elimination over the field of two elements). Moreover,~(\cite{AS20}) showed that there exists a neural net that learns parities of any degree if trained by SGD with small batch size. However, this is a rather unconventional net. In fact, under the uniform distribution, parities are not efficiently learnable by population queries with any polynomially small noise. 
The latter can be explained as follows. Assume we sample our binary string uniformly at random, i.e. for each $i \in \{ 1,..., d \}$, $x_i \sim \Rad(1/2)$\footnote{$z \sim \Rad(p)$ if $\pr (z=1) = 1- \pr (z=-1) = p $.}. Then, the covariance between two parities $\chi_S, \chi_{S'}$ is given by: 
\begin{align*}
\E_{x \sim \Rad(1/2)^{\otimes d} }\left[ \chi_S(x) \chi_{S'} (x) \right] = \begin{cases} 
& 1 \quad \text{ if } S =S',\\
& 0 \quad \text{ if  } S \neq S',
\end{cases}
\end{align*}
where $x \sim \Rad(1/2)^{\otimes d}$ denotes the product measure such that $x_i \overset{iid}{\sim} \Rad(1/2)$, $i \in \{1,...,d\}$. More abstractly, a parity function of $k$ bits is uncorrelated with {\em any} function of $k-1$ or less bits. 
This property makes parities hard to learn for any progressive algorithm, such as gradient descent. 
Indeed, when trying to learn the set of relevant features, a learner cannot know how close its progressive guesses are to the true set. In other words, all wrong guesses are indistinguishable, which suggests that the learner might have to perform exhaustive search among all the ${d \choose k}$ sets. 

The hardness of learning unbiased parities - and more in general any classes of functions with low cross-correlations - with gradient descent has been analysed e.g. in~\cite{AS20}, where the authors show a lower bound on the computational complexity of learning low cross-correlated classes with
gradient-based algorithms with bounded gradient precision.
For $k$-parities, this gives a computational lower bound of $d^{\Omega(k) }$ for any architecture and initialization.


However, if we look at different product distributions, then the inner product of a monomial and a component $x_i$ that is inside and outside the support becomes distinguishable.
Suppose the inputs are generated as $x \sim \Rad(p)^{\otimes d}$, for some $p \in (0,1)$. Then the covariance between $\chi_S$ and $\chi_{S'}$ is: 
\begin{align*}
        \E_{x \sim \Rad(p)^{\otimes d}} &\Big[ (\chi_S(x) - \E[\chi_S(x)])\cdot  (\chi_{S'}(x) - \E[\chi_{S'}(x)])\Big]\\
        &= \mu_p^{2k - | S \cap S'|} - \mu_p^{2k},
\end{align*}
where we denoted by $ \mu_p : = \E_{z \sim \Rad(p) } [z] = 2p-1$. 
This implies that if for instance $|p-0.5| > 0.1$, just computing correlations with each bit, will recover the parity with complexity linear in $d$ and exponential in $k$.
If we choose $p = 1-1/k$, say, we can get a complexity that is linear in $d$ and polynomial in $k$. Moreover, the statements above hold even for parities with random noise. 

This may lead one to believe that learning biased parities is easy for gradient descent based methods for deep nets. Indeed,~\cite{malach2021quantifying} showed that biased parities are learnable by SGD on a differentiable model consisting of a linear predictor and a fixed module implementing the parity.
However, if we consider fully connected networks, as our experiments show (Figure~\ref{fig:curriculum_gen}), while gradient descent for a $p$ far from a half converges efficiently to zero training loss, the learned function actually has {\em non-negligible error} when computed with respect to the uniform measure. This is intuitively related to the fact that, by concentration of measure, there are essentially no examples with Hamming weight\footnote{The Hamming weight of $x \in \{ \pm 1 \}^d $ is: $H(x) = \sum_{i=1}^d \mathds{1}(x_i =1)$.} close to $d/2$ in the training set sampled under $\Rad(p)^{\otimes d}$, and therefore it is not reasonable to expect for a general algorithm like gradient descent on fully connected networks (that does not know that the target function is a parity) to learn the value of the function on such inputs.

We thus propose a more subtle question: 
Is it possible to generate examples from different product distributions and present them in a specific order, in such a way that the error with respect to the unbiased measure becomes negligible? 

As we mentioned, training on examples sampled from a biased measure is not sufficient to learn the parity under the unbiased measure. However, it does identify the support of the parity. Our curriculum learning strategy is the following: We initially train on inputs sampled from $\Rad(p)^{\otimes d}$ with $p$ close to $1$, then we move (either gradually or by sharp steps) towards the unbiased distribution $ \Rad(1/2)^{\otimes d}$. We show that this strategy allows to learn the $k$-parity problem with a computational cost of $d^{O(1)}$ with SGD on the hinge loss or on the \textit{covariance loss} (see Def.~\ref{def:covariance_loss}). In our proof, we consider layer-wise training (similarly to e.g.~\cite{malach2021quantifying,malach2020computational,barak2022hidden}) and the result is valid for any (even) $k$ and $d$. 

As we mentioned earlier, the failure of learning parities under the uniform distribution from samples coming from a different product measure is due to concentration of Hamming weight.
This leads us to consider a family of functions that we call \textit{Hamming mixtures}. Given an input $x$, the output of a Hamming mixture is a parity of a subset $S$ of the coordinates, where the subset $S$ depends on the Hamming weight of $x$ (see Def.~\ref{def:Hamming_mixture}). Our intuition is based on the fact that given a polynomial number of samples from, say, the $p=1/4$ biased measure, it is impossible to distinguish between a certain parity $\chi_S$ and a function that is $\chi_S$, for $x$'s whose Hamming weight is at most $3/8 d $, and a different function $\chi_T$, for $x$'s 
whose Hamming weight is more than $3/8 d$, for some $T$ that is disjoint from $S$. In other words, a general algorithm does not know whether there is consistency between $x$'s with different Hamming weight. We show a lower bound for learning Hamming mixtures with curriculum strategies that do not allow to get enough samples with relevant Hamming weight.

Of course, curriculum learning strategies with enough learning steps allow to obtain samples from several product distributions, and thus with all relevant Hamming weights. Therefore, we expect that CL strategies with unboundedly many learning steps will be able to learn the Hamming mixtures.

While our results are restricted to a limited and stylized setting, we believe they may open new research directions. Indeed, we believe that our general idea of introducing correlation among subsets of the input coordinates to facilitate learning, may apply to more general settings. 
We discuss some of these future directions in the conclusion section of the paper. 

Importantly, we remark that a limitation of the curriculum strategy presented in this paper is that it requires an oracle that provides labeled samples from arbitrary product measures. However, in applications one usually has a fixed dataset and would like to select samples in a suitable order, to facilitate learning. We leave to future work the analysis of a setting where curriculum and non-curriculum have a common sampling distribution.





\paragraph{Contributions.} Our contributions are the following.
\begin{enumerate}
    \item We propose and formalize a mathematical model for curriculum learning; 
    \item We prove that our curriculum strategy allows to learn $k$-parities with SGD with the hinge loss or with the covariance loss on a two-layers fully connected network with a computational cost of $d^{O(1)}$;
    \item We empirically verify the effectiveness of our curriculum strategy for a set of fully connected architectures and parameters;
    \item We propose a class of functions - the \textit{Hamming mixtures} - that is provably not learnable by some curriculum strategies with finitely many learning steps. We conjecture that a \textit{continuous} curriculum strategy (see Def.~\ref{def:C-CL}) may allow to significantly improve the performance for learning such class of functions. 
\end{enumerate}

\subsection{Related Work}

\textbf{Learning parities on uniform inputs.}  
Learning $k$-parities over $d$ bits requires determining the set of relevant features among ${d \choose k}$ possible sets. The statistical complexity of this problem is thus $\theta(k \log(d))$. The computational complexity is harder to determine. $k$-parities can be solved in $d^{O(1)}$ time by specialized algorithms (e.g. Gaussian elimination) that have access to at least $d$ samples. In the statistical query (SQ) framework~\cite{kearns1998efficient} - i.e. when the learner has access only to noisy queries over the input distribution - $k$-parities cannot be learned in less then $\Omega(d^k)$ computations. \cite{AS20,shalev2017failures} showed that gradient-based methods suffer from the same SQ computational lower bound if the gradient precision is not good enough. On the other hand, \cite{AS20} showed that one can construct a very specific network architecture and initialization that can learn parities beyond this limit. This architecture is however far from the architectures used in practice. \cite{barak2022hidden} showed that SGD can learn sparse $k$-parities with SGD with batch size $d^{\theta(k)}$ on a small network.
Moreover, they empirically provide evidence of `hidden progress' during training, ruling out the hypothesis of SGD doing random search. 
\cite{andoni2014learning} showed that parities are learnable by a $d^{\theta(k)}$ network. The problem of learning \textit{noisy} parities (even with small noise) is conjectured to be intrinsically computationally hard, even beyond SQ models~\cite{alekhnovich2003more}. 

\textbf{Learning parities on non-uniform inputs.} Several works showed that when the input distribution is not the $\Unif \{ \pm 1 \}^d$, then neural networks trained by gradient-based methods can efficiently learn parities. \cite{malach2021quantifying} showed that biased parities are learnable by SGD on a differentiable model consisting of a linear predictor and fixed module implementing the parity. 
\cite{daniely2020learning} showed that sparse parities are learnable on a two layers network if the input coordinates outside the support of the parity are uniformly sampled and the coordinates inside the support are correlated. To the best of our knowledge, none of these works propose a curriculum learning model to learn parities under the uniform distribution. 


\textbf{Curriculum learning.} \textit{Curriculum Learning (CL)} in the context of machine learning has been extensively analysed from the empirical point of view~\cite{bengio2009curriculum,wang2021survey,soviany2022curriculum}. However, theoretical works on CL seem to be more scarce. In~\cite{saglietti2022analytical} the authors propose an analytical model for CL for functions depending on a sparse set of relevant features. In their model, easy samples have low variance on the irrelevant features, while hard samples have large variance on the irrelevant features. In contrast, our model does not require knowledge of the target task to select easy examples. In~\cite{weinshall2018curriculum,weinshall2020theory} the authors analyse curriculum learning strategies in convex models and show an improvement on the speed of convergence of SGD.
In contrast, our work covers an intrinsically non-convex problem.
Some works also analysed variants of CL: e.g. self-paced CL (SPCL), i.e. curriculum is determined by both prior knowledge and the training process~\cite{jiang2015self}, implicit curriculum, i.e. neural networks tend to consistently learn the samples in a certain order~\cite{toneva2018empirical}. To a different purpose,~\cite{abbe2021staircase,abbe2022merged} analyse staircase functions - sum of nested monomials of increasing degree - and show that the hierarchical structure of such tasks guides SGD to learn high degree monomials. Moreover,~\cite{refinetti2022neural,kalimeris2019sgd} show that SGD learns functions of increasing complexity during training. In a concurrent work~\cite{abbe2023generalization}, the authors propose a curriculum learning algorithm (named `Degree Curriculum') that consists of training on Boolean inputs of increasing Hamming weight, and they empirically show that it reduces the sample complexity of learning parities on small input dimension. However, the paper does not include theoretical analysis.

\section{Definitions and Main Results}
We define a curriculum strategy for learning a general Boolean target function. We will subsequently restrict our attention to the problem of learning parities or mixtures of parities. For brevity, we denote $[d] = \{ 1,...,d\}$. Assume that the network is presented with samples $(x, f(x))$, where $ x \in \{ \pm 1 \}^d$ is a Boolean vector and $f: \{ \pm 1 \}^d \to \bR$ is a target function that generates the labels. We consider a neural network $\NN(x;\theta)$, whose parameters are initialized at random from an initial distribution $P_0$, and trained by stochastic gradient descent (SGD) algorithm, defined by:
\begin{align} \label{eq:SGD}
    \theta^{t+1} = \theta^t - \gamma_t \frac{1}{B} \sum_{i=1}^B \nabla_{\theta^t} L(\theta^t, f,x^t_i),
\end{align}
for all $t \in \{0,...,T-1\}$, where $L$ is an almost surely differentiable loss-function, $\gamma_t$ is the learning rate, $B$ is the batch size and $T$ is the total number of training steps. For brevity, we write $L(\theta^t ,f,x):=L(\NN(.;\theta^t), f,x)$. We assume that for all $i \in [B]$, $x_i^t \overset{iid}{\sim} \cD^t$, where $\cD^t$ is a step-dependent input distribution supported on $\{ \pm  1 \}^d$. We define our curriculum learning strategy as follows. Recall that $z \sim \Rad(p)$ if $\pr(z =1) = 1-\pr(z=-1) = p$.
\begin{definition}[r-steps curriculum learning (r-CL)]  \label{def:r-CL}
    For a fixed $r \in \bN$, let $T_1,...T_r \in \bN$ and $p_1,...,p_r \in [0,1]$. Denote by $ \bar p: = (p_1,...,p_r)$ and $\bar T := (T_1,...,T_{r-1})$. We say that a neural network $\NN(x;\theta^t)$ is trained by SGD with a r-CL$(\bar T, \bar p)$ if $\theta^t$ follows the iterations in~\eqref{eq:SGD} with:
    \begin{align*}
            & \cD^t = \Rad(p_1) , \qquad 0 < t\leq T_1,\\
            &\cD^t = \Rad(p_2) , \qquad T_1 < t\leq T_2,\\
            & \cdots\\
            &\cD^t =\Rad(p_r) , \qquad  T_{r-1} < t\leq T.
    \end{align*}
    We say that $r$ is the number of \textit{curriculum steps}.
\end{definition}
\noindent 
We assume $r$ to be independent on $T$, in order to distinguish the $r$-CL from the \textit{continuous}-CL (see Def.~\ref{def:C-CL} below).
We hypothesize that $r$-CL may help to learn several Boolean functions, if one chooses appropriate $r$ and $\bar p$. However, in this paper we focus on the problem of learning unbiased $k$-parities. For such class, we obtained that choosing $r=2$, a wise $p_1 \in (0,1/2)$ and $p_2=1/2$ brings a remarkable gain in the computational complexity, compared to the standard setting with no curriculum. 
An interesting future direction would be studying the optimal $r$ and $\bar p$. Before stating our Theorem, let us clarify the generalization error that we are interested in. As mentioned before, we are interested in learning the target over the uniform input distribution.
\begin{definition}[Generalization error]
We say that SGD on a neural network $\NN(x ; \theta)$ learns a target function $f: \{\pm 1 \}^d \to \bR$ with $r$-CL$(\bar T, \bar p)$ up to error $\epsilon$, if it outputs a network $\NN(x;\theta^T)$ such that: 
\begin{align}
        \E_{x \sim \Rad(1/2)^{\otimes d}} \left[ L(\theta^T, f, x)\right] \leq \epsilon,
\end{align}
where $L$ is any loss function such that $ \E_{x \sim \Rad(1/2)^{\otimes d}} [L(f,f,x)] =0$.
\end{definition}
We state here our main theoretical result informally. We refer to Section~\ref{sec:parities_theoretical} for the formal statement with exact exponents and remarks. 

\begin{theorem}[Main positive result, informal]
    There exists a 2-CL strategy such that a 2-layer fully connected network of $d^{O(1)}$ size trained by SGD with batch size $d^{O(1)}$ can learn any $k$-parities (for $k$ even) up to error $\epsilon$ in at most $d^{O(1)}/\epsilon^2$ iterations.
\end{theorem}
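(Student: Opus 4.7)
My plan is to use a two-phase curriculum with $p_1$ bounded away from $1/2$ (say $p_1 = 1 - c/k$ for a small constant $c$, yielding $\mu_{p_1} = 1 - 2c/k$) and $p_2 = 1/2$, together with a two-layer fully connected network $\NN(x;\theta) = \sum_{j=1}^{m} a_j \sigma(w_j^\top x + b_j)$ of width $m = d^{O(1)}$. I would use layerwise training: during phase~1 only the hidden-layer weights are updated, while during phase~2 the hidden layer is frozen and only the output weights are trained. The network is initialised with $w_j$ close to $0$, $a_j$ random signs, and biases $b_j$ spread on a grid (e.g.\ uniformly on $\{-k,\dots,k\}$) to diversify the hidden units.

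Phase~1 is the support-recovery phase. At initialisation the expected covariance-loss gradient on the $i$-th input weight of neuron $j$ is, up to sign,
\begin{equation*}
a_j\,\sigma'(b_j)\,\E_{x \sim \Rad(p_1)^{\otimes d}}[\chi_S(x)\,x_i] + o(1),
\end{equation*}
and a direct Fourier computation gives $\E[\chi_S(x) x_i] = \mu_{p_1}^{k-1}$ if $i \in S$ and $\mu_{p_1}^{k+1}$ if $i \notin S$. The additive gap between in-support and out-of-support entries is therefore $|\mu_{p_1}|^{k-1}(1-\mu_{p_1}^2) = 1/\poly(k)$, independent of $d$. A Hoeffding argument shows that a batch of size $d^{O(1)}$ estimates this expected gradient to additive error $d^{-O(1)}$, so running SGD for $T_1 = d^{O(1)}$ steps at small learning rate drives the magnitudes of $w_{ij}$ on $i \in S$ to be a constant factor larger than on $i \notin S$, with high probability; I would tune $\gamma_1$ and $T_1$ so that the linearisation $\sigma'(w_j^\top x + b_j) \approx \sigma'(b_j)$ remains valid throughout phase~1 and the gap persists.

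Phase~2 fits the parity on the now-frozen feature map. Because the hidden-layer weights are concentrated on $S$, the map $x \mapsto (\sigma(w_j^\top x + b_j))_{j \le m}$ depends essentially only on $x|_S$, and by choosing the $b_j$'s sufficiently varied one guarantees the existence of output weights $a^\star$ with $\|a^\star\|_2 \le \poly(d)$ satisfying $\sum_j a^\star_j \sigma(w_j^\top x + b_j) = \chi_S(x)$ (any Boolean function on $k$ variables admits such a bounded-norm representation in a polynomial-width threshold/ReLU feature family when $k = O(\log d)$). Training the output layer on the hinge or covariance loss is a convex stochastic optimisation on $\Rad(1/2)^{\otimes d}$, so a standard online-SGD regret bound gives $\epsilon$-excess risk in $T_2 = O(1/\epsilon^2)$ steps with batch size $d^{O(1)}$; combined with phase~1 this yields total complexity $d^{O(1)}/\epsilon^2$.

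The hard part will be the clean coupling between the two phases. Phase~1 must provably separate in-support from out-of-support coordinates with high probability under actual mini-batch SGD (not just expected-gradient flow), which requires tight concentration of batch gradients together with control on the drift of $w_j$ so the initial linearisation of $\sigma'$ remains valid. Phase~2 then needs the resulting features to be expressive enough to realise $\chi_S$ with bounded-norm output weights, which requires the initial biases $\{b_j\}$ to produce sufficiently many distinct sign patterns on $\{\pm 1\}^k$ inside the hidden layer. The ``$k$ even'' restriction most likely enters through a symmetry argument in this representability step of phase~2, rather than being a fundamental obstruction to support recovery.
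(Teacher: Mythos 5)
Your high-level plan -- a two-phase curriculum, layerwise training with support recovery in phase~1 and a convex fit on frozen features in phase~2, biased-measure gradients separating in-support from out-of-support coordinates by a $\mu^{k-1}$ vs.\ $\mu^{k+1}$ gap -- matches the paper's strategy (particularly the covariance-loss version, Theorem~\ref{thm:positive_cov}). However, there are two places where your sketch has a genuine gap that the paper's proof resolves by a cleverer choice.

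First, you propose running phase~1 for $T_1 = d^{O(1)}$ SGD steps and then, correctly, flag as ``the hard part'' the need to control the drift of $w_j$ so that the linearisation $\sigma'(w_j^\top x + b_j) \approx \sigma'(b_j)$ remains valid and the gap persists over many updates. The paper avoids this difficulty entirely: it initialises all $w_{i,j}^{(0)}=0$ and performs exactly \emph{one} gradient step in phase~1, choosing the learning rate $\gamma_0$ large (of order $\mu^{-(k-1)}N$ or $N k^{-1}(\mu^{k-1}-\mu^{k+1})^{-1}$) so that this single step already places the first-layer weights essentially at their target values up to an additive error controlled by one application of Hoeffding on a single large batch. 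There is no linearisation to maintain, no gradient flow to couple; the only concentration needed is that one batch gradient is close to the population gradient at the fixed initialisation.

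Second, your phase~2 representability argument appeals to the generic fact that ``any Boolean function on $k$ variables admits a bounded-norm representation in a polynomial-width ReLU feature family when $k = O(\log d)$.'' This restriction is not in the theorem statement, which is claimed for any even $k \le d/2$ (hinge loss) or $k \le d$ (covariance loss). The paper circumvents it by exploiting that a $k$-parity, restricted to its support, is a \emph{symmetric} function -- it depends only on the number $t$ of $-1$'s among the $k$ support coordinates, taking $k+1$ values. The first gradient step makes each hidden neuron compute (approximately) $\sigma\!\left(\frac{1}{k}\sum_{j\le k} x_j + b\right)$, a function of $t$ alone. Placing $k+1$ biases on the grid $\{-1+2(i+1)/k : 0\le i\le k\}$ then gives $k+1$ piecewise-linear functions of $t$, and an explicit choice of $a^*$ with $\|a^*\|_\infty \le 2k$ makes the linear combination telescope to $(-1)^t = \chi_{[k]}(x)$ (Lemma~\ref{lem:span_2}; this is precisely where $k$ even enters, as you guessed). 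That gives $\|a^*\|_2 = O(k^{3/2})$ -- polynomial in $k$, with no logarithmic restriction -- so the online-SGD regret bound (Theorem~\ref{thm:convergence_SGD}) yields $T_2 = \tilde\theta(k^4/\epsilon^2)$. Note also that in the paper's hinge-loss variant the weights are \emph{not} concentrated on $S$ (they are all $\approx 1$ with a small $\theta(1/d)$ discrepancy off-support), so ``features depending essentially only on $x|_S$'' is the right intuition only for the covariance-loss route.
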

Let us analyse the computational complexity of the above. At each step, the number of computations performed by a 2-layer fully connected network is given by:
    \begin{align}
        (dN+N) \cdot B,
    \end{align}
where $d$ is the input size, $N$ is the number of hidden neurons and $B$ is the batch size. Multiplying by the total number of steps and substituting the bounds from the Theorem we get that we can learn the $k$-parity problem with a 2-CL strategy in at most $d^{O(1)}$ total computations. Specifically, $O(1)$ denotes quantities that do not depend on $k$ or on $d$, and the statement holds also for large $k,d$. 
We prove the Theorem in two slightly different settings, see Section~\ref{sec:parities_theoretical}. 

One may ask whether the $r$-CL strategy is beneficial for learning general target tasks (i.e. beyond parities). While we do not have a complete picture to answer this question, we propose a class of functions for which some $r$-CL strategies are not beneficial. We call those functions the \textit{Hamming mixtures}, and we define them as follows.

\begin{figure*}[h]
    \centering 
    \includegraphics[width = 0.45 \textwidth]{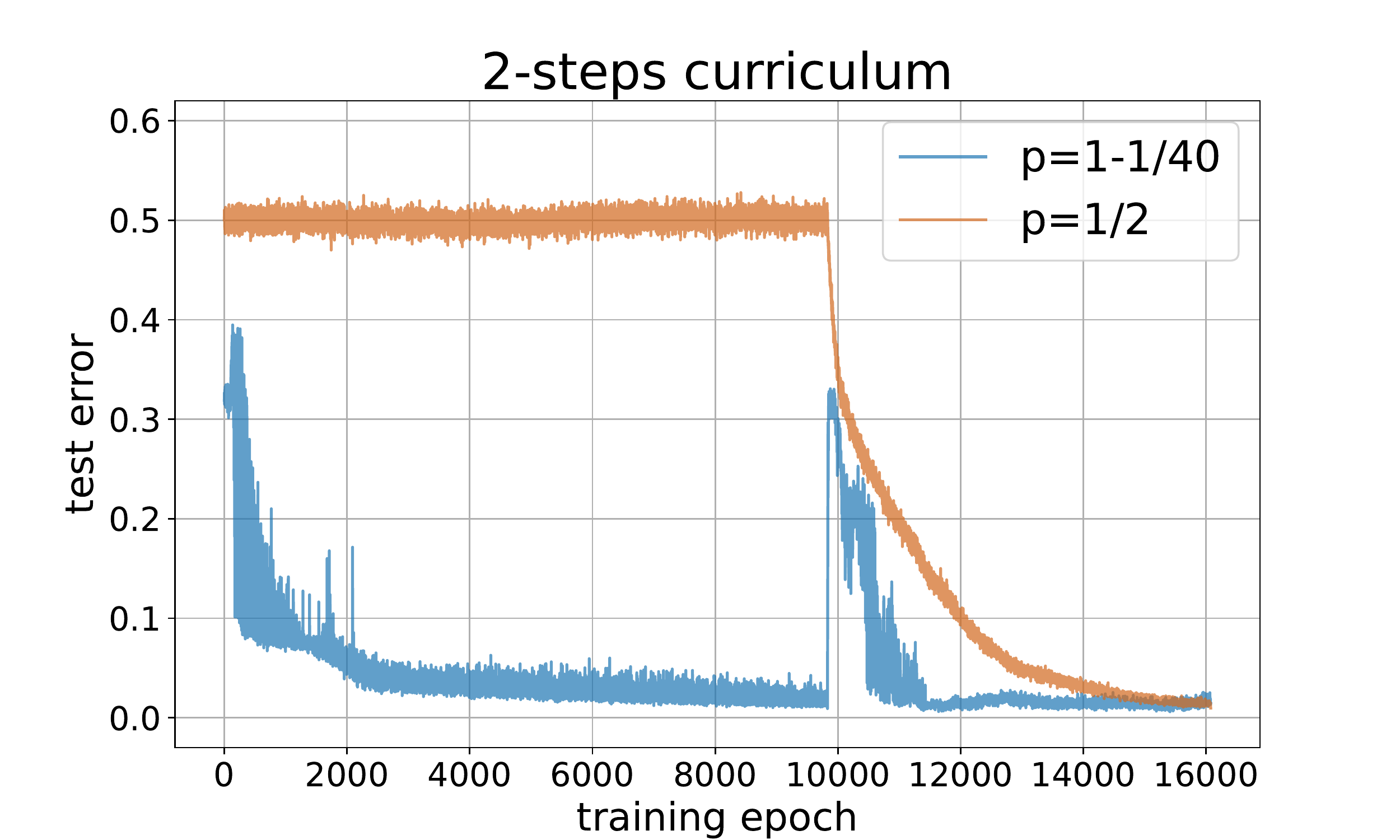}
    \includegraphics[width = 0.45 \textwidth]{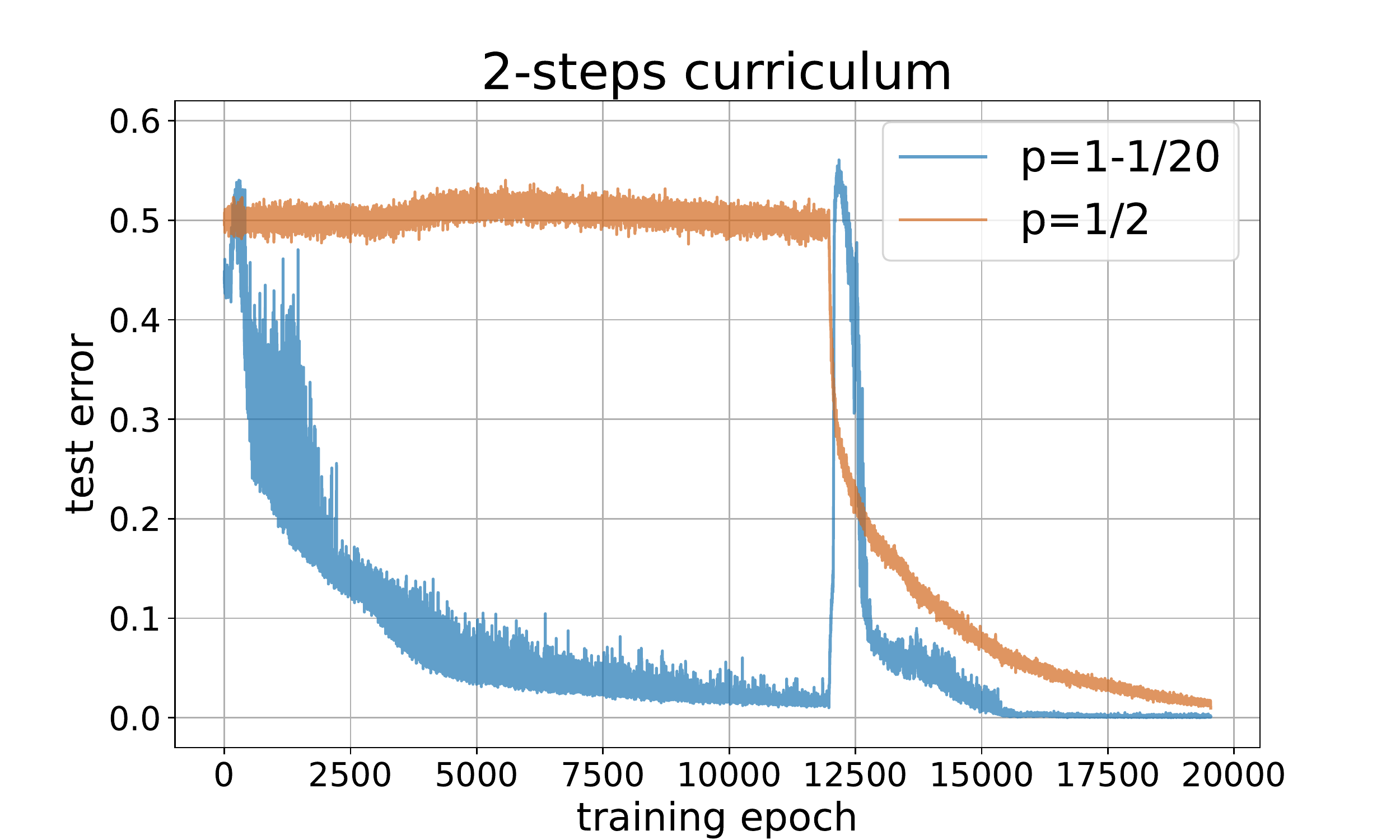}
    \includegraphics[width = 0.45 \textwidth]{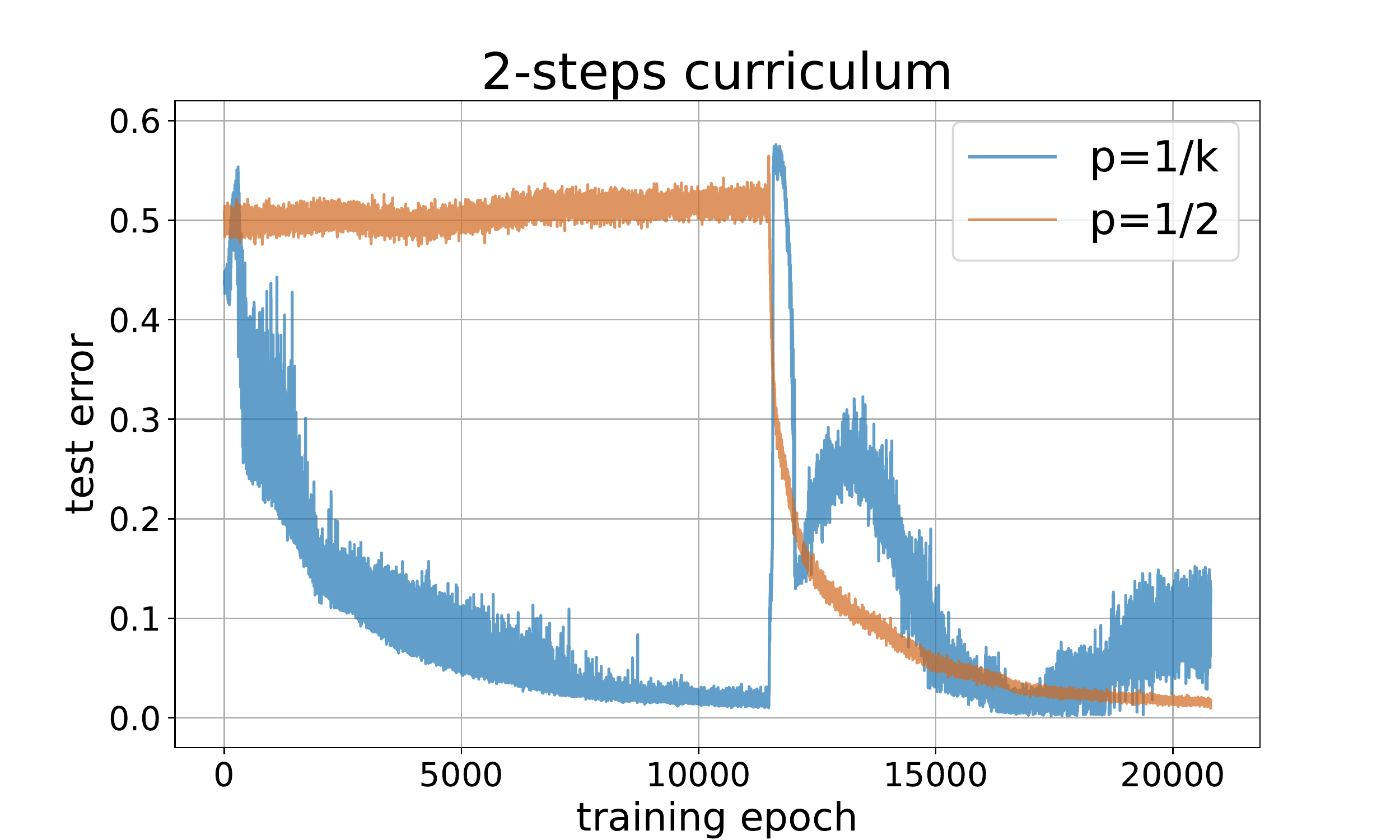}
    \includegraphics[width = 0.45 \textwidth]{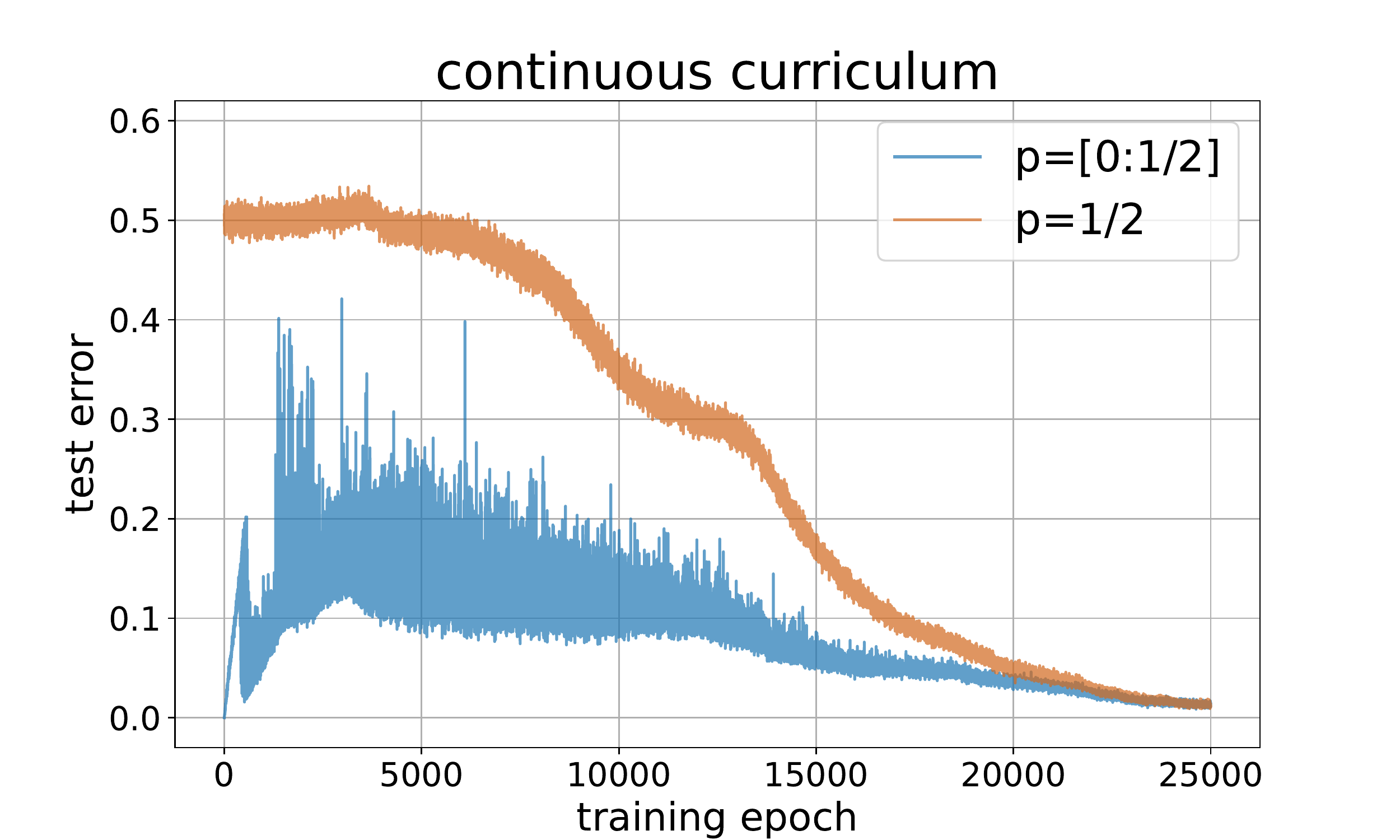}
    \includegraphics[width = 0.45 \textwidth]{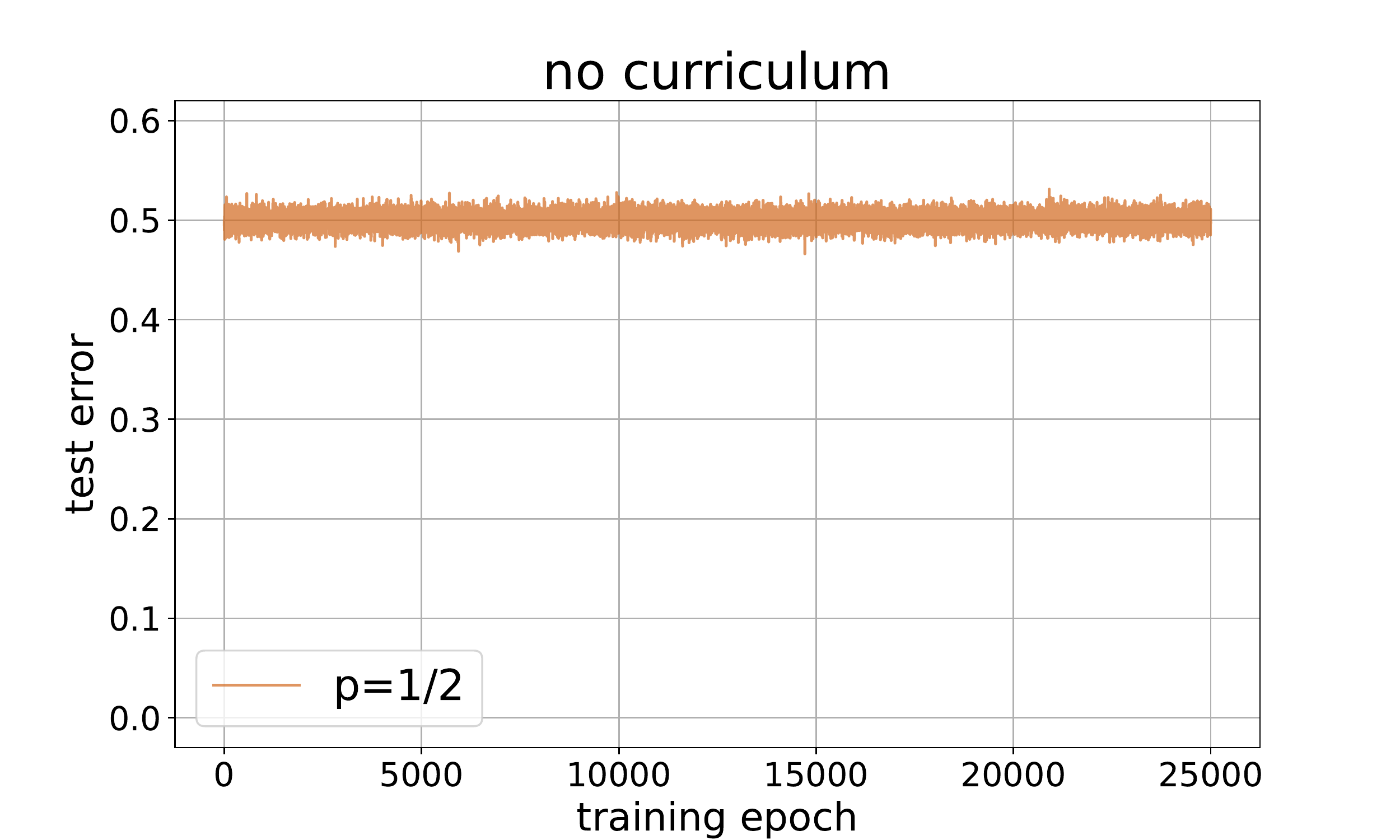}
    \caption{Learning $20$-parities with $2$-steps curriculum, with initial bias $p_1=39/40$ (top-left), $p_1=19/20$ (top-center), $p_1=1/20$ (top-right), with continuous curriculum (bottom-left) and with no curriculum (bottom-right). In all plots, we use a 2-layers ReLU MLP with batch size 1024, input dimension 100, and 100 hidden units. }
    \label{fig:curriculum_gen}
\end{figure*}
\begin{definition}[(S,T,$\epsilon)$-Hamming mixture] 
\label{def:Hamming_mixture}
    For $\epsilon \in [0,1]$, $S,T \in [d]$, we say that $G_{S,T,\epsilon}: \{\pm 1 \}^d \to \bR$ is a (S,T,$\epsilon)$-Hamming mixture if 
    \begin{align*}
        G_{S,T,\epsilon} (x):= \chi_S(x) 1(H(x) \leq \epsilon d) + \chi_T(x) 1(H(x) > \epsilon d),
    \end{align*}
    where $H(x) := \sum_{i=1}^d 1(x_i = 1) $ is the Hamming weight of $x$, $\chi_S(x) : = \prod_{i \in S} x_i$ and $\chi_T(x) := \prod_{i \in T} x_i$ are the parity functions over set $S$ and $T$ respectively.
\end{definition}
The intuition of why such functions are hard for some $r$-CL strategies is the following. Assume we train on samples $(x, G_{S,T.\epsilon}(x))$, with $S,T$ disjoint and $\epsilon \in (0,1/2)$. Assume that we use a $2$-CL strategy and we initially train on samples $x \sim \Rad(p)^{\otimes d} $ for some $p<\epsilon$. If the input dimension $d$ is large, then the Hamming weight of $x$ is with high probability concentrated around $ pd$ (e.g. by Hoeffding's inequality). Thus, in the first part of training
the network will see, with high probability, only samples of the type $(x, \chi_S(x))$, and it will not see the second addend of $G_{S,T, \epsilon}$. When we change our input distribution to $\Rad(1/2)^{\otimes d}$, the network will suddenly observe samples of the type $(x, \chi_T(x))$. Thus, the pre-training on $p$ will not help determining the support of the new parity $\chi_T$ (in some sense the network will ``forget'' the first part of training). This intuition holds for all $r$-CL such that $p_1,...,p_{r-1}<\epsilon$. We state our negative result for Hamming mixtures here informally, and refer to Section~\ref{sec:Hamming} for a formal statement and remarks.
\begin{theorem}[Main negative result, informal]
    For each $r$-CL strategy with $r$ bounded, there exists a Hamming mixture $G_{S,T,\epsilon}$ that is not learnable by any 
    fully connected neural network of $\poly(d)$ size and permutation-invariant initialization trained by the noisy gradient descent algorithm (see Def.~\ref{def:noisyGD}) with $\poly(d)$ gradient precision in $\poly(d)$ steps.
\end{theorem}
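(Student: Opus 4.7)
The plan is to combine a Hamming-weight concentration argument with a noisy-GD indistinguishability bound. Given any $r$-CL$(\bar T,\bar p)$ strategy with $r=O(1)$, I first pick $\epsilon\in(0,1/2)$ so that $|\epsilon-p_i|\ge\delta$ for every $i\in[r]$ and also $\epsilon\le 1/2-\delta$, where $\delta=\Omega(1/r)$; such an $\epsilon$ exists by a pigeonhole argument since only $O(r)$ points must be avoided inside the interval $(0,1/2)$. I then fix $k=k(d)$ growing mildly (e.g.\ $k=\Theta(\sqrt d)$), fix a set $S\subset[d]$ with $|S|=k$, and draw the adversarial target $T$ uniformly from the size-$k$ subsets of $[d]\setminus S$.

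First I apply Hoeffding's inequality together with a union bound over the $\poly(d)$ total training samples: with probability $1-e^{-\Omega(d)}$ every sample $x$ produced in a phase with $p_i<\epsilon$ has $H(x)\le\epsilon d$, and every sample in a phase with $p_i>\epsilon$ has $H(x)>\epsilon d$. Conditioning on this clean event $A$, the labels observed by the learner equal $\chi_S(x)$ throughout any phase with $p_i<\epsilon$ (carrying no information about $T$) and equal $\chi_T(x)$ throughout any phase with $p_i>\epsilon$, with $x$ drawn from $\Rad(p_i)^{\otimes d}$ restricted to the event $\{H(x)>\epsilon d\}$.

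The heart of the argument is then a hybrid/noisy-GD lower bound showing that the resulting parameter trajectory depends on the random $T$ only through an $o(1)$ bias. Permutation-invariance of $P_0$ lets me couple trajectories corresponding to two targets $T$ and $T'$ by a permutation of $[d]$ that fixes $S$ and maps $T\mapsto T'$, so that the coupled trajectories can diverge only via the difference in expected gradients during $p_i>\epsilon$ phases. An AS20-style computation bounds this gradient difference, for any parameter, by the cross-correlation $\bigl|\E_{x\sim\Rad(p_i)^{\otimes d}}[\chi_T(x)\chi_{T'}(x)\mid H(x)>\epsilon d]\bigr|$, which equals $\mu_{p_i}^{2k}$ up to a $1+o(1)$ conditioning factor, since the conditioning event has probability at least $1-e^{-\Omega(d)}$. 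Choosing $k$ so that $|\mu_{p_i}|^{2k}\le 1/\poly(d)$ for every $p_i>\epsilon$ that occurs in the curriculum, and summing the distinguishing advantage over the $r$ phases and the $\poly(d)$ noisy steps of precision $\tau=1/\poly(d)$, keeps the total advantage at $o(1)$.

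Finally, because the final parameters depend on $T$ only through an $o(1)$ bias, so does the learned function $f$; and since under the uniform measure $G_{S,T,\epsilon}(x)=\chi_T(x)$ with probability $1-e^{-\Omega(d)}$ (as $\epsilon<1/2-\delta$), averaging the squared error over the uniform prior on $T$ and using orthogonality of distinct parities under $\Rad(1/2)^{\otimes d}$ yields
$$\E_T\,\E_{x\sim\Rad(1/2)^{\otimes d}}\bigl[(f(x)-G_{S,T,\epsilon}(x))^2\bigr]\ \ge\ 1-o(1),$$
so some specific $T$ witnesses $\Omega(1)$ uniform-measure loss. The step I expect to be the main obstacle is the cross-correlation control when some $p_i$ sits near the endpoints $0$ or $1$: there the raw quantity $\mu_{p_i}^{2k}$ is close to $1$, but in those regimes the very event $\{H(x)>\epsilon d\}$ is a huge deviation for $\Rad(p_i)^{\otimes d}$, forcing the conditional distribution of $\chi_T$ to be essentially constant and therefore almost independent of $T$; I would handle these extreme cases by a direct TV-distance argument on the conditional label distribution rather than through the Fourier cross-correlation bound, which together with the main hybrid completes the proof.
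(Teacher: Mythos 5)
Your high-level plan (Hamming-weight concentration to freeze the labels on each phase, an SQ-type indistinguishability bound for noisy GD, permutation invariance, and orthogonality of parities under $\Rad(1/2)^{\otimes d}$) matches the paper's. However, there is a genuine gap in the central ``hybrid'' step. You claim that for a coupling of trajectories for two disjoint targets $T,T'$, the per-step gradient difference at \emph{any} parameter is bounded by the cross-correlation $\bigl|\E[\chi_T\chi_{T'}\mid H(x)>\epsilon d]\bigr|$. This is not true pointwise: with, say, the hinge-type loss the gradient for target $T$ has the form $\E_x[v(\theta,x)\chi_T(x)]+u(\theta)$, so the difference $\E_x[v(\theta,x)(\chi_T-\chi_{T'})(x)]$ is controlled by $\|v\|_2\,\|\chi_T-\chi_{T'}\|_2 = \|v\|_2\sqrt{2-2\E[\chi_T\chi_{T'}]}$, which is $\Theta(\|v\|_2)$ (not small) precisely when the cross-correlation is small; and if $\theta$ happens to already encode $\chi_T$ the difference is $\Theta(1)$. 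The actual AS20 mechanism is a \emph{variance over the random target} (Parseval) bound: $\frac{1}{|\mathcal P_{k_T}|}\sum_T\bigl(\E_x[v\chi_T]\bigr)^2\le \|v\|_2^2/|\mathcal P_{k_T}|$, which controls the TV distance between the trajectory for a random $T$ and a $T$-independent reference trajectory. You cannot get this from a pairwise $T$-vs-$T'$ coupling; you must average over $T$. Relatedly, a permutation coupling fixing $S$ and mapping $T\mapsto T'$ gives \emph{exact} distributional equality of the two runs (it shows the correlation is the same for every disjoint $T$), not a divergence that accumulates via small cross-correlations — you are conflating two incompatible couplings.

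The paper avoids these issues by a cleaner decomposition: it introduces an auxiliary trajectory $\psi$ trained on the \emph{pure} parity $\chi_V$ (with a uniformly random $k_S$-set $V$) during the biased phases and on $\chi_T$ during the uniform phase; Lemma~\ref{lem:TV} bounds $\TV(\theta^T;\psi^T)$ by the Hoeffding tail of the Hamming weight (handling the ``the label equals $\chi_S$ or $\chi_T$ on each phase'' step at the level of population gradients, not samples), and then Theorem~3 of~\cite{AS20} is applied as a black box to $\psi$ in the uniform phase, with permutation invariance used only to pass from the $V=S$ case to the average over disjoint $V$. Your plan also leaves unaddressed the regime $p_i\in(\epsilon,1/2)$ where the concentration flips the observed label to $\chi_T$ under a \emph{biased} distribution; this is exactly the regime where $T$ becomes SQ-learnable, and neither the ``choose $k$ so $\mu_{p_i}^{2k}\le 1/\poly(d)$'' trick nor a TV argument on the conditional labels repairs it (the paper sidesteps this by restricting $\epsilon$ so that $p_\ell<\epsilon$ for all $\ell<r$, which is implicit in its proof of Lemma~\ref{lem:TV}).
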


Inspired by the hardness of Hamming mixtures, we define another curriculum learning strategy, where, instead of having finitely many discrete curriculum steps, we gradually move the bias of the input distribution during training from a starting point $p_0$ to a final point $p_T$. We call this strategy a \textit{continuous}-CL strategy.

\begin{definition}[Continuous curriculum learning (C-CL)] \label{def:C-CL}
    Let $p_0, p_T \in [0,1]$. We say that a neural network $\NN(x;\theta^t)$ is trained by SGD with a C-CL$(p_0,p_T,T) $ if $\theta^t$ follows the iterations in~\eqref{eq:SGD} with:
    \begin{align}
        & \cD^t = \Rad \left(p_0+t \cdot \frac{p_T-p_0}{T} \right) \qquad t \in [T].
    \end{align}
\end{definition}
\noindent
We conjecture that a well chosen C-CL might be beneficial for learning any Hamming mixture. A positive result for C-CL and comparison between $r$-CL and C-CL are left for future work.


\section{Learning Parities}

\begin{figure*}[h]
\centering
    \includegraphics[width = 0.45 \textwidth]{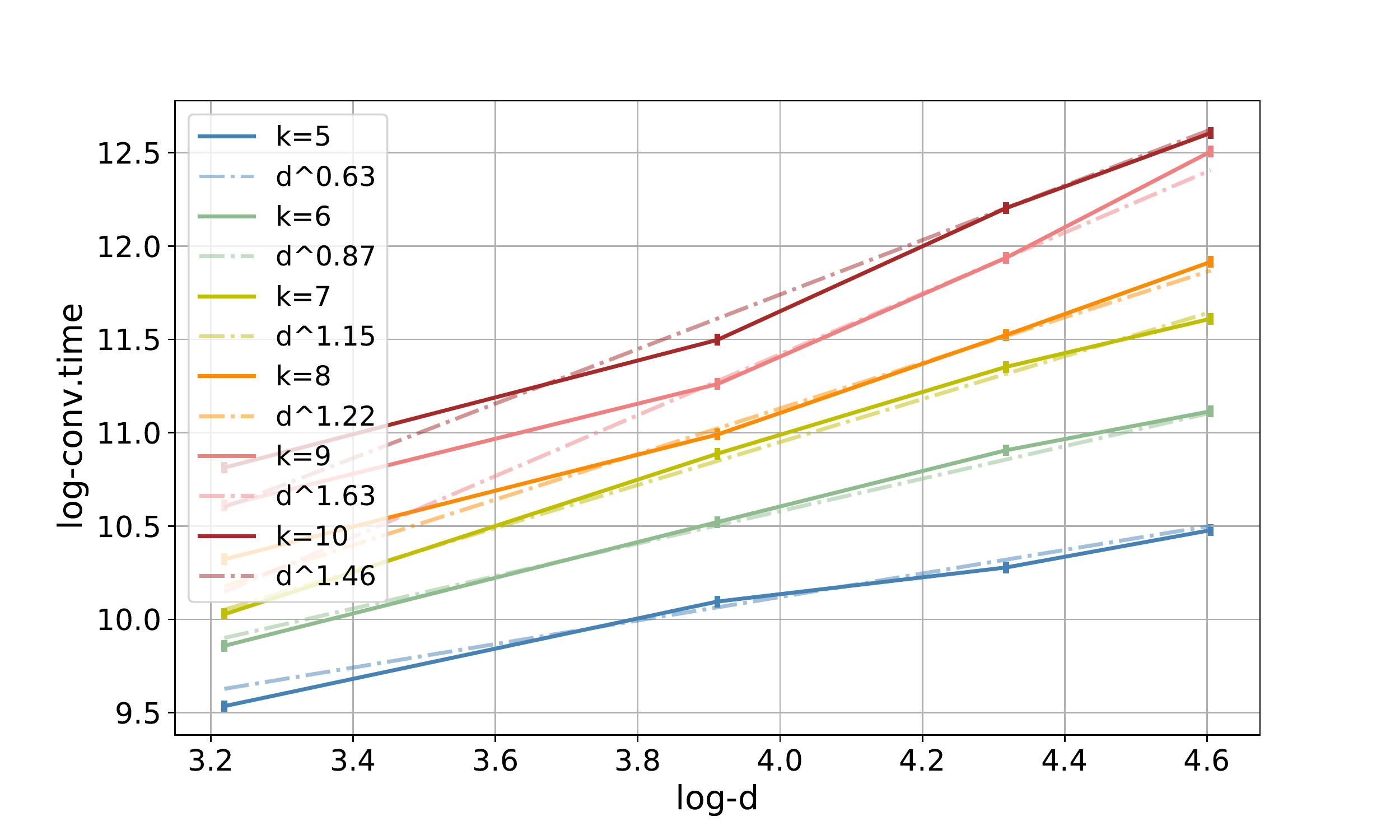}
    \includegraphics[width = 0.45 \textwidth]{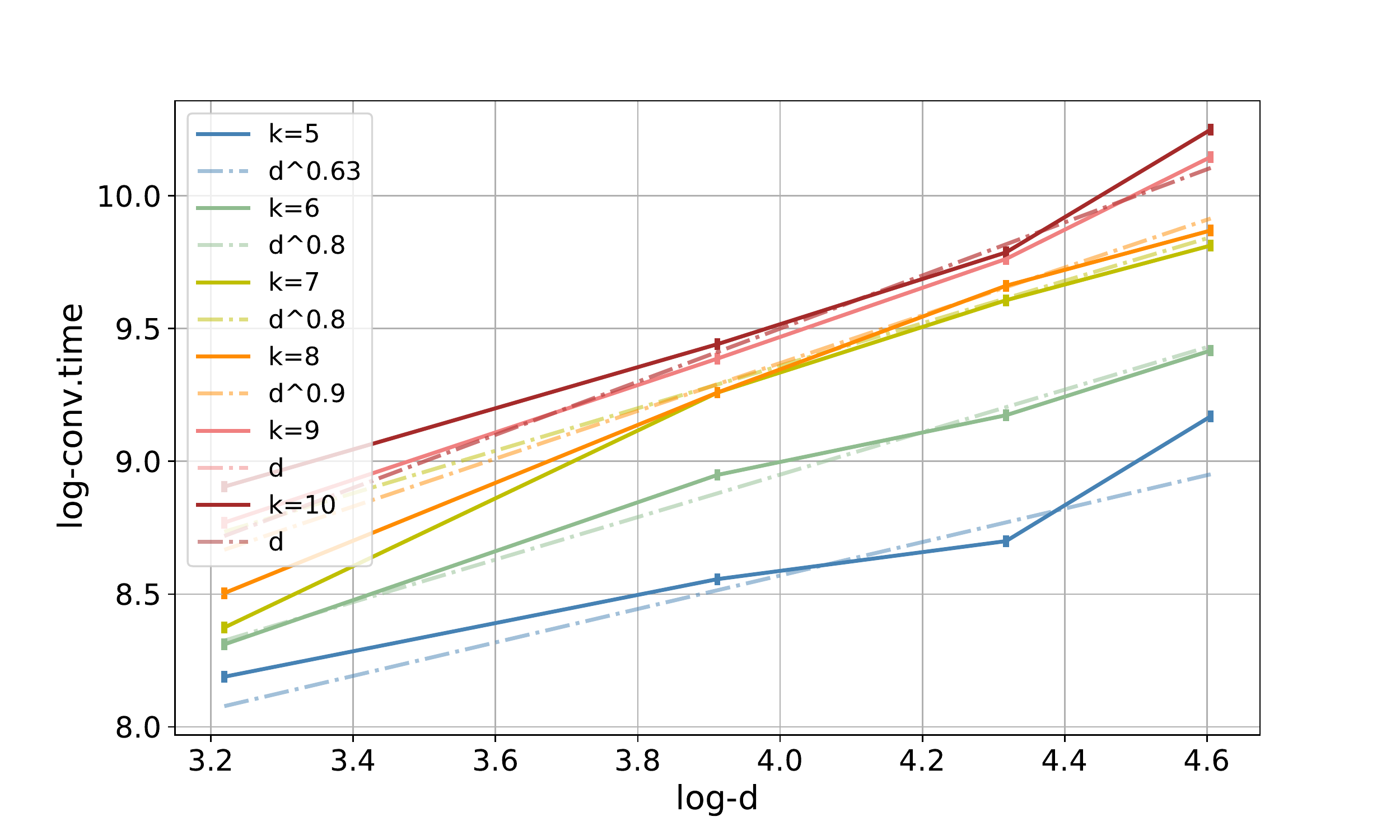}
    \caption{Convergence time for different values of $d$, $k$. Left: we take $p_1 = 1/16$ and a 2-layers $\ReLU$ architecture with with $h=2^k$ hidden units. Right: we take $p_1 = 1-\frac{1}{2k}$ and a 2-layers $\ReLU$ architecture with $h=d$ hidden units.}
    \label{fig:power_law}
\end{figure*}
\subsection{Theoretical Results}
\label{sec:parities_theoretical}
Our goal is to show that the curriculum strategy that we propose allows to learn $k$-parities with a computational complexity of $d^{O(1)}$. We prove two different results. In the first one, we consider SGD on the hinge loss and prove that a network with $\theta(d^2) $ hidden units can learn the $k$-parity problem in $d^{O(1)}$ computations, if trained with a well chosen $2$-CL strategy. Let us state our first Theorem.
\begin{theorem}[Hinge Loss] \label{thm:positive_result}
    Let $k,d$ be both even integers, such that $k \leq d/2$. Let $\NN(x;\theta) = \sum_{i=1}^N a_i \sigma(w_ix + b_i) $ be a 2-layers fully connected network with activation $\sigma(y) := \Ramp(y) $ (as defined in~\eqref{eq:ramp}) and $N = \tilde  \theta(d^2 \log(1/\delta))$\footnote{$\tilde \theta(d^{c}) = \theta(d^{c} \cdot  \poly(\log(d)))  $, for all $c \in \bR$.}. Consider training $\NN(x;\theta) $ with SGD on the hinge loss with batch size $B = \tilde \theta (d^{10}/\epsilon^2 \log(1/\delta) )$. Then, there exists an initialization, a learning rate schedule, and a 2-CL strategy such that after $T = \tilde \theta(d^6 /\epsilon^2)$ iterations, with probability $1-3\delta$, SGD outputs a network with generalization error at most $\epsilon$.
\end{theorem}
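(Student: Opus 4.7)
The plan is to decompose the analysis along the two curriculum phases, in the spirit of the layer-wise training arguments used in recent works on learning parities (e.g.\ \cite{malach2021quantifying,malach2020computational,barak2022hidden}). In Phase~1 the biased samples are used to encode the unknown support $S$ into the first-layer weights; in Phase~2 we switch to the uniform measure and train (essentially) the second layer to fit a parity on the identified support.

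First I would fix a bias parameter $p_1$ bounded away from $1/2$, for instance $p_1 = 1 - 1/(2k)$ so that $\mu_{p_1}^{k-1}$ is a positive $\Theta(1)$ constant. Starting from a symmetric initialization of the first-layer weights $w_i$ and a sign-symmetric choice of the $a_i$'s, I would perform a small number of SGD steps (potentially just one) on the hinge loss with the large batch $B = \tilde\Theta(d^{10}/\epsilon^2)$. The population gradient with respect to $w_{ij}$ reduces (after expanding the Ramp derivative) to a linear combination of terms of the form
\begin{align*}
\E_{x \sim \Rad(p_1)^{\otimes d}}\!\left[\chi_S(x)\, x_j\, g_i(x)\right],
\end{align*}
where $g_i(x)$ is bounded. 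Expanding in the biased Fourier basis, this is $\Theta(\mu_{p_1}^{k-1})$ for $j \in S$ and $O(\mu_{p_1}^{k+1})$ for $j \notin S$, giving a constant signal-to-noise ratio per coordinate. The huge batch size ensures by a Hoeffding bound that the empirical gradient is within $1/\poly(d)$ of the population gradient, so that after Phase~1 each $w_i$ restricted to $S^c$ is of order $1/\poly(d)$ while its restriction to $S$ is $\Theta(1)$ and (up to sign) proportional to $\mathbf{1}_S$.

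In Phase~2 I would freeze the first layer (or equivalently take a sufficiently small learning rate there) and train the second layer with $p_2 = 1/2$. Because each hidden unit's pre-activation is now effectively $\sigma(c_i H_S(x) + b_i)$ with $H_S(x) = \sum_{j \in S} x_j$, approximating $\chi_S$ reduces to approximating the univariate function $h : \{-k, -k+2, \ldots, k\} \to \{\pm 1\}$ defined by $h(m) = (-1)^{(k-m)/2}$. This is a bounded piecewise-linear function on $[-k,k]$ and admits an exact representation as a linear combination of $O(k)$ Ramp units with biases in $[-k,k]$; with $N = \tilde\Theta(d^2\log(1/\delta))$ hidden units and random biases, enough units fall into the relevant range with probability $1-\delta$ to realize an $\epsilon$-approximation in sup-norm. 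Training only the output weights on the hinge loss is a convex problem whose gradients are bounded by $\poly(d)$ (since Ramp is bounded and the input is on the hypercube), so standard online convex optimization yields expected hinge loss $\leq \epsilon$ after $\tilde\Theta(d^6/\epsilon^2)$ SGD steps; since the hinge loss upper-bounds the $0/1$ loss (and in fact any Lipschitz surrogate with $\E[L(f,f,x)] = 0$), this transfers to generalization error under $\Rad(1/2)^{\otimes d}$.

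The main obstacle is making Phase~1 airtight: one must show that features learned from $\Rad(p_1)^{\otimes d}$ are still useful as a representation under $\Rad(1/2)^{\otimes d}$. Concretely, (i) the off-support contamination $w_{ij}$ for $j \notin S$ must be driven down to size $o(1/k)$, because under the uniform measure the fluctuations of $\sum_{j \notin S} w_{ij} x_j$ are of order $\sqrt{d} \cdot \max_{j \notin S} |w_{ij}|$ and could otherwise destroy the univariate structure on which Phase~2 depends; and (ii) the magnitudes of the $w_{ij}$ for $j \in S$ must be controlled so that the biases continue to lie in the informative range. These requirements are what pin down the $d^{10}$ batch size and the $d^2$ width. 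The remaining ingredients---biased Fourier calculus for Phase~1, universal approximation by Ramp units for the Phase~2 representability claim, and convex online-SGD convergence for the Phase~2 training---are standard and combine to give the stated $\tilde\Theta(d^6/\epsilon^2)$ iteration complexity.
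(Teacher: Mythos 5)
Your high-level decomposition (Phase 1: support detection with biased samples; Phase 2: convex optimization of the output layer under $\Rad(1/2)^{\otimes d}$, concluded via online convex SGD convergence) matches the paper's outline, and your Phase 2 argument is in the right spirit. However there is a genuine gap in Phase 1 that the paper's actual proof is specifically engineered to avoid, and your fix would not work.

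With the hinge loss, a single SGD step from the zero-weight initialization produces population gradients $G_{w_{ij}} = -a_i\,\mu^{k-1}$ for $j \in S$ and $G_{w_{ij}} = -a_i\,\mu^{k+1}$ for $j \notin S$ (the off-support gradient is the expectation of $\chi_{[k]\cup\{j\}}$, which is $\mu^{k+1}$, not zero). The learning rate is a scalar, so after the step the off-support weights are a multiplicative factor $\mu^2$ times the on-support weights — they cannot be driven to $1/\poly(d)$ for any choice of $p_1$ bounded away from $1/2$. In particular with $p_1 = 1-1/(2k)$ you get $\mu^2 \approx 1 - \Theta(1/k)$, so $w_{ij}$ off support is $\Theta(1)$, and $\sum_{j \notin S} w_{ij} x_j$ has fluctuations of order $\sqrt{d}$ under the uniform measure; your claim (i) that this contamination is $o(1/k)$ fails, and the reduction to a univariate function of $H_S(x)$ does not go through. (This reduction does work for the \emph{covariance} loss, where the off-support population gradient vanishes exactly — that is essentially what the paper's Theorem~\ref{thm:positive_cov} does.)

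The paper's Theorem~\ref{thm:positive_result} handles the nonzero off-support gradient differently: it picks $\mu = \sqrt{1 - \frac{1}{2(d-k)}}$ (so $p_1 = 1 - \Theta(1/d)$, not $1-\Theta(1/k)$) and $\gamma_0 = 2N\mu^{-(k-1)}$, so that after one step $w_{ij} \approx 1$ on support and $w_{ij} \approx 1 - \frac{1}{2(d-k)}$ off support. The pre-activations then become $\sum_{j=1}^d x_j - \frac{1}{2(d-k)}\sum_{j>k} x_j + b_i$, which is \emph{not} a function of $H_S(x)$ alone but of the pair (total sum, off-support sum). The paper constructs an explicit $a^*$ (Lemma~\ref{lem:span}) so that the Ramp units on this bivariate argument exactly span $\chi_{[k]}$, exploiting $\chi_{[k]}(x) = (-1)^t(-1)^s$ where $t$ counts the $-1$'s in $x$ and $s$ counts the $+1$'s off support; the bias grid is designed so that enough units hit the informative range. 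To repair your proposal you would need to either (a) switch to the covariance loss (which is the paper's other theorem) or (b) replace your univariate-approximation Phase~2 with a representability argument that works when all $d$ coordinates still enter the neuron, which is the route the paper takes.
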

For our second Theorem, we consider another loss function, that is convenient for the analysis, namely the \textit{covariance loss}, for which we give a definition here.
\begin{definition} [Covariance loss]\label{def:covariance_loss}
    Let $f: \cX \to \bR$ be a target function and let $\hat f: \cX \to \bR $ be an estimator. Let
    \begin{align*}
        \Cov &(f,\hat f,x,P_\cX) :=   \\
            &: = \Big(f(x) - \E_{x' \sim P_\cX}[f(x') ] \Big)  \cdot \Big(\hat f(x) - \E_{x' \sim P_\cX}[\hat f(x') ] \Big),
    \end{align*}
    where $P_\cX$ is an input distribution supported in $\cX$.
    We define the covariance loss as 
       \begin{align*}
            L_{\Cov}(f,\hat f,x,P_\cX)& := \max\{ 0,1- \Cov(f, \hat f,x,P_\cX) \}.
       \end{align*}

\end{definition}

\begin{remark}
    We will consider optimization over the covariance loss through SGD with large batch size ($B = \tilde \theta(d^2 k^3)$). At each step, we use the batch to estimate first the inner expectations (i.e. $E_{x}[f(x)]$ and $E_{x}[\NN(x;\theta^t)]$) and then the gradients. The expectation of the labels (i.e. $E_{x}[f(x)]$) does not need to be estimated at each training step and could be estimated once per curriculum step. One could also use part of the batch at each step to estimate the inner expectations and part of the batch to estimate the gradients.
\end{remark}
We show that SGD on the covariance loss can learn the $k$-parity problem in $d^{O(1)}$ computations using a network with only $O(k)$ hidden units. The reduction of the size of the network, compared to the hinge loss case, allows to get a tighter bound on the computational cost, see Remark~\ref{rem:2_theorems}.
\begin{theorem}[Covariance Loss] \label{thm:positive_cov}
       Let $k,d$ be integers, such that $k \leq d$ and $k$ even. Let $\NN(x;\theta) = \sum_{i=1}^N a_i \sigma(w_ix + b_i) $ be a 2-layers fully connected network with activation $\sigma(y) := \ReLU(y) $ and $N = \tilde \theta(k)$. Consider training $\NN(x;\theta) $ with SGD on the covariance loss with batch size $B =\tilde \theta(d^2 k^3/\epsilon^2 \log(1/\delta))$. Then, there exists an initialization, a learning rate schedule, and a 2-CL strategy such that after $T = \tilde \theta(k^4/\epsilon^2 )$ iterations, with probability $1-3\delta$, SGD outputs a network with generalization error at most $\epsilon$.
\end{theorem}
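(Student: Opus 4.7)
The plan is to mirror the two phases of the 2-CL strategy. In Phase 1, under $\Rad(p_1)^{\otimes d}$ with $p_1 = c/k$ for a small constant $c > 0$ (so $\mu_{p_1} := 2p_1-1$ is bounded away from both $0$ and $\pm 1$), a single SGD step on the first-layer weights encodes the support $S$ of the hidden parity $\chi_S$ into the weights. In Phase 2, under $\Rad(1/2)^{\otimes d}$, the first layer is frozen and only the second-layer weights $a_i$ are trained, which reduces to a convex problem. I would initialize $w_i^{(0)} = 0$ and $a_i^{(0)}$ uniformly in $\{\pm 1/\sqrt{N}\}$; the biases $b_i^{(0)}$ are placed on a grid that is (a) entirely in the interior of the positive half-line, so that $\sigma'(b_i^{(0)}) = 1$ and the Phase-1 gradient flows, and (b) dense enough over the eventual range of $\beta_i \sum_{j \in S} x_j$ so that the frozen-first-layer hypothesis class contains the target parity.

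\textbf{Phase 1 (support identification).} Since $w_i^{(0)} = 0$, the population gradient of the covariance loss with respect to $w_i$ is, up to sign, $a_i \sigma'(b_i)$ times the vector of covariances $\bigl(\Cov(\chi_S, x_j)\bigr)_{j=1}^d$ under $\Rad(p_1)^{\otimes d}$. A direct computation gives $\Cov(\chi_S, x_j) = \mu_{p_1}^{k-1}(1-\mu_{p_1}^2) = \Theta(1/k)$ for $j \in S$ and $0$ for $j \notin S$. With batch size $B = \tilde{\theta}(d^2 k^3/\epsilon^2 \log(1/\delta))$, Hoeffding's inequality and a union bound over the $Nd$ coordinates give coordinate-wise concentration of the empirical gradient with additive error $o(1/k)$ on a high-probability event $\mathcal{E}_1$. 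One gradient step with a suitable learning rate therefore produces $w_i^{(1)} = \beta_i \mathds{1}_S + \xi_i$ with $|\beta_i| = \Theta(1/k)$ and $\|\xi_i\|_\infty$ negligible in comparison.

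\textbf{Phase 2 (convex learning on the uniform distribution).} The restricted network depends on $x$ only through $Z_S := \sum_{j \in S} x_j \in \{-k,-k+2,\ldots,k\}$, which takes at most $k+1$ values; since $k$ is even, $\chi_S(x) = (-1)^{(k-Z_S)/2}$ is itself a function of $Z_S$. With $N = \tilde{\theta}(k)$ ReLU units, the biases $b_i^{(0)}$ placed to realize a standard piecewise-linear interpolation, and the signs of $\beta_i$ inherited from the i.i.d.\ $\pm 1/\sqrt{N}$ initialization of $a_i^{(0)}$, there exists $a^*$ with $\NN(x; a^*) \equiv \chi_S(x)$. Under $\Rad(1/2)^{\otimes d}$ we have $\E[\chi_S] = 0$, so the covariance loss in $a$ is convex (in fact piecewise linear). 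I would invoke a standard SGD-on-convex-loss guarantee: the inner expectations $\E[\chi_S]$ and $\E[\NN(x;\theta^t)]$ are estimated by splitting each mini-batch; boundedness of $\sigma$ and of $\chi_S$ on the active hypothesis class yields gradient variance $\tilde{O}(k^2)$; with $T = \tilde{\theta}(k^4/\epsilon^2)$ iterations and learning rate $\gamma_t = \tilde{O}(1/\sqrt{t})$, the expected covariance loss drops below $\epsilon$ on a second high-probability event $\mathcal{E}_2$.

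\textbf{Main obstacles.} The principal difficulty is the tight Phase-1 concentration: the per-coordinate signal is only $\Theta(1/k)$ and must be separated from noise uniformly over all $d$ off-support coordinates, which is what forces $B = \tilde{\theta}(d^2 k^3/\epsilon^2)$ and ultimately fixes the $d^{O(1)}$ computational budget. A secondary technical issue is the bias subtraction required by the covariance loss: the inner expectations $\E[f]$ and $\E[\NN]$ are not observed but estimated from the batch, and the cleanest route is to reserve a sub-batch for these scalars and propagate the resulting sampling error through the convergence analysis via a Hoeffding bound. Finally, care is needed to ensure that after Phase 1 the signs of the $\beta_i$ split roughly evenly between $+$ and $-$ so that both positive and negative ReLU contributions are available to interpolate $\chi_S$; this follows from a standard Chernoff bound on the i.i.d.\ signs of the $a_i^{(0)}$.
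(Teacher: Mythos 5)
Your proposal has the same two-phase skeleton as the paper's proof: zero-initialize the first layer, take one SGD step under a suitably biased product measure so that the first-layer weights pick up the support of the parity (the key quantity being exactly the covariance $\mu^{k-1}(1-\mu^2)=\Theta(1/k)$ that you compute), place the biases on a grid of $\Theta(k)$ breakpoints so that the frozen-first-layer hypothesis class contains $\chi_S$ via a piecewise-linear interpolation, and then close with the standard convex-SGD convergence bound applied to the second layer. The choice $p_1 = c/k$ is the mirror image of the paper's $p_1 = 1-\theta(1/k)$ (both put $|\mu|=1-\Theta(1/k)$); that is an equivalent choice, though your parenthetical that this keeps $\mu_{p_1}$ ``bounded away from $\pm 1$'' is self-contradictory — with $p_1 = c/k$ one has $\mu_{p_1}\to -1$ as $k$ grows, and the whole point is that $|\mu|^{k-1}$ must remain $\Theta(1)$, which forces $|\mu|$ to approach $1$ at rate $1/k$.

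The one place your route genuinely departs from the paper's is the initialization of the second-layer weights and the claim that you need ``the signs of the $\beta_i$ split roughly evenly between $+$ and $-$.'' This is unnecessary and is a minor misconception: the paper takes $a_i^{(0)}$ all equal and of the same sign, so after Phase~1 every hidden unit has the \emph{same} positive first-layer weight $\approx 1/k$ on the support coordinates, and the sign alternation needed to build the parity lives entirely in the target second-layer vector $a^*$ (see the alternating coefficients $a^*_i = (-1)^i\cdot\Theta(k)$ in Lemma~\ref{lem:span_2}). A piecewise-linear interpolation of a function of $\sum_{j\in S}x_j$ on the $k+1$ lattice points only requires ReLUs with a common slope sign and a grid of biases; mixing slope signs via i.i.d.\ $\pm 1/\sqrt N$ initialization of $a_i^{(0)}$ buys you nothing and would also break the clean reset $a^{(1)}=0$ that the convex convergence theorem (Theorem~\ref{thm:convergence_SGD}) assumes, forcing an extra correction step. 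Apart from these cosmetic points, the plan is correct and aligns with the paper.
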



The proofs of Theorem~\ref{thm:positive_result} and Theorem~\ref{thm:positive_cov} follow a similar outline. Firstly, we prove that training the first layer of the network on one batch of size $d^{O(1)}$ sampled from a biased input distribution (with appropriate bias), allows to recover the support of the parity. We then show that training the second layer on the uniform distribution allows to achieve the desired generalization error under the uniform distribution. 
We refer to Appendices~\ref{sec:proof_positive_hinge} and~\ref{sec:proof_positive_cov} for restatements of the Theorems and their full proofs.

\begin{remark} \label{rem:2_theorems}
Let us look at the computational complexity given by the two Theorems. Theorem~\ref{thm:positive_result} tells that we can learn $k$-parities in $ d N B + (T-1)N = \tilde \theta (d^{19})$ computations. We remark that our result holds also for large $k$ (we however need to assume $k,d$ even and $k \leq d/2$, for technical reasons). On the other hand, Theorem~\ref{thm:positive_cov} tells that we can learn $k$-parities in $ \tilde \theta(d^3 k^8)$, which is much lower than the bound given by Theorem~\ref{thm:positive_result}. Furthermore, the proof holds for all $k\leq d$. The price for getting this tighter bound is the use of a loss that (to the best of our knowledge) is not common in the machine learning literature, and that is particularly convenient for our analysis.
\end{remark}
\begin{remark}
We remark that our proofs extend to the gradient descent model with bounded gradient precision, used in~\cite{AS20}, with gradient precision bounded by $d^{O(1)}$. Thus, for large $k,d$, our result provides a separation to their $d^{\Omega(k)}$ computational lower bound for learning $k$-parities under the uniform distribution with no curriculum. 
\end{remark}

\begin{remark}
Let us comment on the $p_1$ (i.e. the bias of the initial distribution) that we used. In both Theorems we take $p_1$ close to $1$. In Theorem~\ref{thm:positive_result} we take $p_1 \approx 1-\theta(1/d)$, and the proof is constructed specifically for this value of $p_1$. In Theorem~\ref{thm:positive_cov}, the proof holds for any $p_1 \in (1/2,1)$ and the asymptotic complexity in $d$ does not depend on the specific choice of $p_1$. However, to get $\poly(k)$ complexity we need to take $p_1 = 1- \theta(1/k)$, while we get $\exp(k)$ complexity for all $p_1 = \theta_{d,k}(1)$.
\end{remark}
Our theoretical analysis captures a fairly restricted setting: in our proofs we use initializations and learning schedules that are convenient for the analysis. We conduct experiments to verify the usefulness of our CL strategy in more standard settings of fully connected architectures.


\subsection{Empirical Results}
In all our experiments we use fully connected $\ReLU$ networks and we train them by SGD on the square loss~\footnote{Code: \url{https://github.com/ecornacchia/Curriculum-Learning-for-Parities}}.

In Figure~\ref{fig:curriculum_gen}, we compare different curriculum strategies for learning $20$-parities over $100$ bits, with a fixed architecture, i.e. a $2$-layer $\ReLU$ network with $100$ hidden units. We run a $2$-steps curriculum strategy for $3$ values of $p_1$, namely $p_1 = 39/40,19/20,1/20$. In all the $2$-CL experiments we train on the biased distribution until convergence, and then we move to the uniform distribution. We observe that training with an initial bias of $p_1 =39/40$ allows to learn the $20$-parity in $16,000$ epochs. One can see that during the first part of training (on the biased distribution), the test error under the uniform distribution stays at $1/2$ (orange line), and then drops quickly to zero when we start training on the uniform distribution. This trend of hidden progress followed by a sharp drop has been already observed in the context of learning parities with SGD in the standard setting with no-curriculum~\cite{barak2022hidden}. Here, the length of the `hidden progress' phase is controlled by the length of the first phase of training. Interestingly, when training with continuous curriculum, we do not have such hidden progress and the test error under the uniform distribution decreases slowly to zero. With no curriculum, the network does not achieve non-trivial correlation with the target in $25,000$ epochs. 

In Figure~\ref{fig:power_law} we study the convergence time of a $2$-CL strategy on a $2$-layers $\ReLU$ network for different values of the input dimension ($d$) and size of the parity ($k$). We take two slightly different settings. In the plot on the left, we take a fixed initial bias $p_1 = 1/16$ and $h=2^k$ hidden units. On the right we take $p_1 =1-\frac{1}{2k}$ initial bias and an architecture with $h=d$ hidden units. The convergence time is computed as $T_1+T_2$, where $T_1$ and $T_2$ are the number of steps needed to achieve training error below $0.01$ in the first and second part of training, respectively. We compute the convergence time for $k=5,6,7,8,9,10$ and $d=25,50,75,100$, and for each $k$ we plot the convergence time with respect to $d$ in log-log scale. Each point is obtained by averaging over $10$ runs. We observe that for each $k$, the convergence time scales (roughly) polynomially as $d^{c_k}$, with $c_k$ varying mildly with $k$.
\begin{figure}[t]
\centering
\includegraphics[width = 0.45\textwidth]{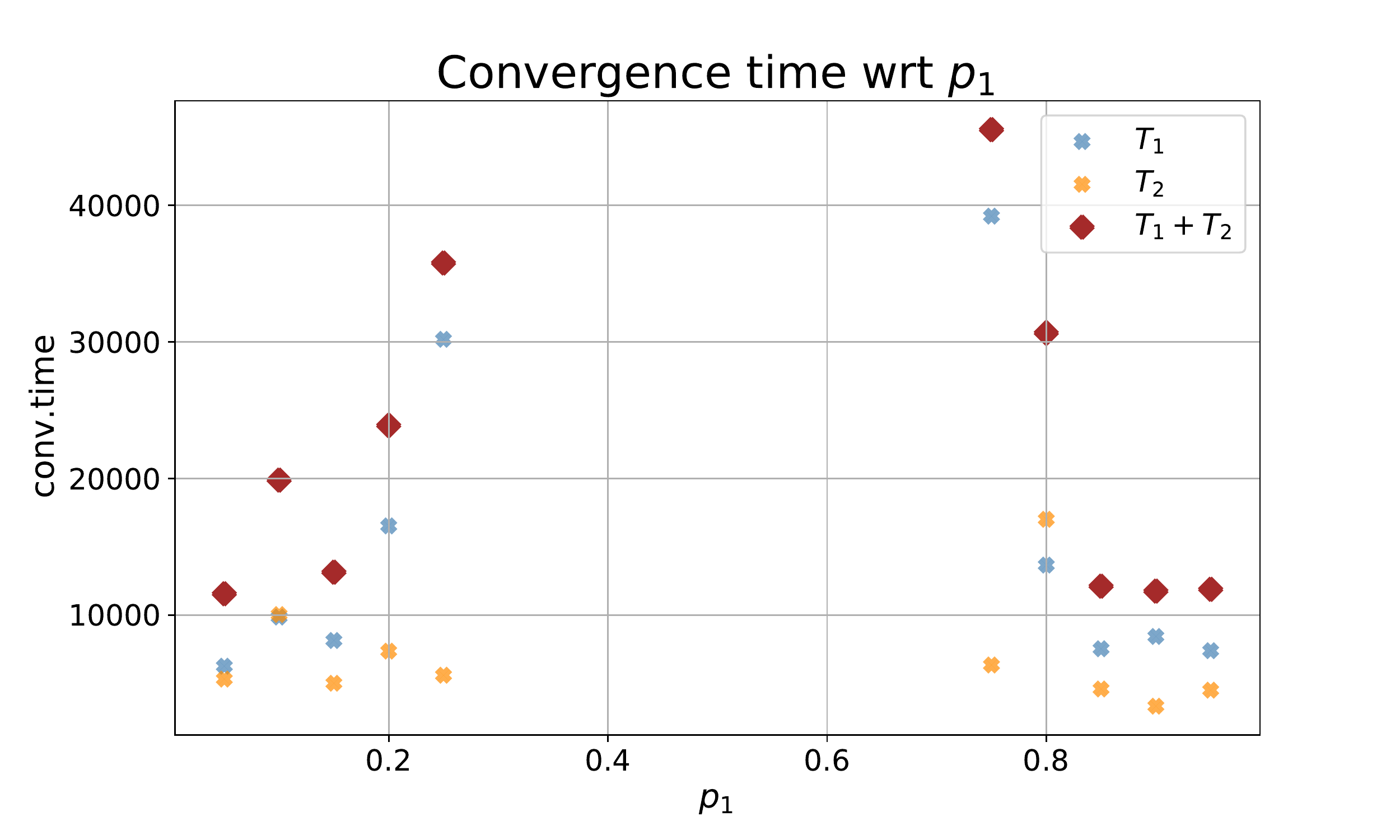}
\caption{Convergence time with respect to the initial bias $p_1$. We compute the convergence time for learning a $10$-parity over $100$ bits with a 2-layer $\ReLU$ network. We omitted all points with convergence time above $100,000$.}
\label{fig:best_p}
\end{figure}

In Figure~\ref{fig:best_p}, we study the convergence time of a $2$-CL strategy for different values of the initial bias $p_1$. We consider the problem of learning a $10$-parity over $100$ bits with a $2$-layers $\ReLU $ network with $h=100$ hidden units. As before, we computed the convergence time as $T_1+T_2$, where $T_1$ and $T_2$ are the number of steps needed to achieve training error below $0.01$ in the first and second part of training, respectively. We ran experiments for $p_1 = 0.001, 0.05,0.1,0.15,...,0.95,0.999$. We omitted from the plot any point for which the convergence time exceeded $100,000$ iterations: these correspond to $p_1$ near $1/2$ and $p_1 =0.001,0.999$. Each point is obtained by averaging over $10$ runs. We observe that the convergence time is smaller for $p_1$ close to $0$ or to $1$. Moreover, $T_2$ has modest variations across different $p_1$'s.

\section{Learning Hamming Mixtures} 
\label{sec:Hamming}
In this section we consider the class of functions defined in Def.~\ref{def:Hamming_mixture} and named Hamming mixtures. 
We consider a specific descent algorithm, namely the noisy GD algorithm with batches (used also in~\cite{AS20,abbe2021power}). We give a formal definition here of noisy GD with curriculum. 
\begin{definition}[Noisy GD with CL] \label{def:noisyGD}
Consider a neural network $\NN(.;\theta)$, with initialization of the weights $\theta^{0}$. Given an almost surely differentiable loss function, the updates of the noisy GD algorithm with learning rate $\gamma_t$ and gradient range $A$ are defined by
\begin{align} \label{eq:noisy_GD}
    \theta^{t+1} = \theta^{t}- \gamma_t \left( \E_{x^t}  [ \nabla_{\theta^t} L(\theta^t,f,x^t) ]_A + Z^{t}\right), 
\end{align}
where for all $t \in \{0,...,T-1\}$, $Z^{t}$ are i.i.d. $\cN(0, \tau^2)$, for some $\tau$, and they are independent from other variables,
$x^t \sim \cD^t$, for some time-dependent input distribution $\cD^t$, $f$ is the target function, from which the labels are generated, and by $[.]_A$ we mean that whenever the argument is exceeding $A$ (resp. $-A$) it is rounded to $A$ (resp. $-A$). We call $A/\tau$ the \textit{gradient precision}. In the noisy-GD algorithm with $r$-CL, we choose $\cD^t$ according to Def.~\ref{def:r-CL}.
\end{definition}

\noindent
Let us state our hardness result for learning Hamming mixtures with $r$-CL strategies with $r$ bounded.
      
\begin{theorem} \label{thm:negative_result}
    Assume the network observes samples generated by $G_{S,T, \epsilon}(x)$ (see Def.~\ref{def:Hamming_mixture}), where $|S|=k_S$, $|T|=k_T$ such that $k_S,k_T = o(\sqrt{d})$, and $|S \cap T | =0$. Then, for any $r$-CL$(\bar T,\bar p)$ with $r$ bounded and $p_r=1/2$, there exists an $\epsilon$ such that the noisy GD algorithm with $r$-CL$( \bar T,\bar p)$ (as in~\eqref{eq:noisy_GD}) on a fully connected neural network with $|\theta|$ weights and permutation-invariant initialization, after $T$ training steps, outputs a network $\NN(x, \theta^T)$ such that
    \begin{align*}
        \Big|& \E_{x\sim \Rad(1/2)^{\otimes d}} \left[ G_{S,T,\epsilon}(x) \cdot \NN(x; \theta^T) \right] \Big| \\
        &\leq \frac{AT \sqrt{|\theta|}}{\tau } \left(\frac{1}{d^{k_T/2}}+ e^{-d \delta^2} \right)+ \frac{2 k_S k_T}{d} + O(d^{-2}),
    \end{align*}      
    where $A,\tau$ are the gradient range and the noise level in the noisy-GD algorithm and $\delta$ is a constant.
\end{theorem}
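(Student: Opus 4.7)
First I would choose $\epsilon$. Since $r$ is bounded and $p_r=1/2$, I can partition $(0,1/2)$ into roughly $r$ equal sub-intervals and apply pigeonhole to find one that contains no $p_i$ with $i<r$; let $\epsilon$ be its midpoint and $\delta=\Omega(1/r)$ the resulting minimum distance $|p_i-\epsilon|$. Hoeffding's inequality then gives that under $\Rad(p_i)^{\otimes d}$, $\sign(H(x)-\epsilon d)$ coincides with $\sign(p_i-\epsilon)$ with probability at least $1-e^{-2d\delta^2}$. Hence the effective label $G_{S,T,\epsilon}(x)$ seen by the network in curriculum step $i<r$ equals either $\chi_S(x)$ or $\chi_T(x)$, depending on the sign of $p_i-\epsilon$, on all but an $e^{-\Omega(d\delta^2)}$ fraction of inputs, while under the final $\Rad(1/2)^{\otimes d}$ the effective label is $\chi_T(x)$ (since $\epsilon<1/2$).

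Next I would couple the actual trajectory $\{\theta^t\}$ to an auxiliary noisy-GD trajectory $\{\tilde\theta^t\}$ that shares the same Gaussian noises $Z^t$ but replaces, at each step, the target label by the effective single parity selected above. The per-coordinate difference between the clipped expected gradients of the two trajectories is at most $A\cdot e^{-\Omega(d\delta^2)}$, and a telescoping argument on the weights, converted via the $1/\tau$ noise-smoothing standard in noisy-GD distinguishability arguments~\cite{AS20}, yields the cumulative discrepancy $\frac{AT\sqrt{|\theta|}}{\tau}e^{-d\delta^2}$ appearing in the bracketed factor of the bound.

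The main obstacle is then to bound $|\E_{x\sim\Rad(1/2)^{\otimes d}}[\chi_T(x)\NN(x;\tilde\theta^T)]|$ for the single-parity trajectory. Here I would invoke the cross-predictability/noisy-GD lower bound of~\cite{AS20}, routing the permutation invariance as follows: the initialization is $\mathrm{Sym}(d)$-invariant; any pre-training phase with effective target $\chi_S$ preserves $\mathrm{Sym}(S^c)$-invariance, which acts transitively on the $k_T$-subsets of $S^c$ containing $T$; any pre-training phase with effective target $\chi_T$ under a biased $\Rad(p_i)^{\otimes d}$ can be analyzed by observing that, after averaging over the $\mathrm{Sym}(S^c)$-orbit of $T$, the projection of the clipped expected gradient onto $\chi_T$ in $L^2(\Rad(1/2)^{\otimes d})$ is $O(d^{-k_T/2})$ per coordinate, as a consequence of Parseval together with the small squared cross-predictability of the $k_T$-parity family under the uniform measure. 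A union bound over $T$ steps and $|\theta|$ noise coordinates then yields $\frac{AT\sqrt{|\theta|}}{\tau\,d^{k_T/2}}$, the remaining bracketed term.

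Finally I would bound the residual $|\E_{x\sim\Rad(1/2)^{\otimes d}}[(G_{S,T,\epsilon}(x)-\chi_T(x))\NN(x;\tilde\theta^T)]| = |\E[(\chi_S-\chi_T)\mathbf{1}(H\le\epsilon d)\NN]|$ by combining the $\mathrm{Sym}(S^c)$-residual symmetry of $\tilde\theta^T$ with a Fourier expansion of $\mathbf{1}(H\le\epsilon d)$ on $\{\pm1\}^d$: only a small family of low-degree Fourier modes of the indicator can simultaneously pair nontrivially with $\chi_S$ or $\chi_T$ and with a symmetric $\NN$, and $|S\cap T|=0$ together with $k_S,k_T=o(\sqrt d)$ then yield the combinatorial residual $\frac{2k_Sk_T}{d}+O(d^{-2})$. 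Summing the three contributions produces the claimed inequality. The most delicate point will be routing permutation invariance through biased-target pre-training phases: one must verify that the information about $T$ which noisy GD extracts from $\Rad(p_i)^{\otimes d}$ with $p_i\neq 1/2$, while potentially large in $L^2(\Rad(p_i)^{\otimes d})$, projects to only $O(d^{-k_T/2})$ signal in $L^2(\Rad(1/2)^{\otimes d})$.
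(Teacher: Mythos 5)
Your high-level architecture (pigeonhole to pick $\epsilon$; couple to an auxiliary trajectory that trains on a single effective parity per phase; TV bound through the Gaussian noise; route a \cite{AS20}-style lower bound through permutation symmetry) matches the paper. Your symmetry routing — fixing $S$ and using the residual $\mathrm{Sym}(S^{c})$-invariance preserved by the $\chi_S$-phases — is a genuinely different and cleaner choice than what the paper does in the main theorem. The paper instead randomizes the \emph{first}-phase target to a uniformly random $k_S$-set $V$, couples to the true trajectory only on the event $\{V=S\}$, uses full $\mathrm{Sym}(d)$-equivariance (with the random $V$ restoring exchangeability) to apply \cite{AS20} averaged over $T$, and then pays a combinatorial price: the correction term $\frac{2k_Sk_T}{d}+O(d^{-2})$ is precisely $\Pr_V(|V\cap T|>0)$, the event on which the conditioning-on-disjointness step breaks. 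Your fixed-$S$, residual-symmetry route is essentially how the paper proves Corollary~\ref{cor:negative_nointersection}, and if carried out correctly it yields a bound with $\binom{d-k_S}{k_T}^{-1/2}$ in place of $d^{-k_T/2}$ and \emph{no} $\frac{2k_Sk_T}{d}$ term at all.

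That said, there are two concrete gaps. First, your final step misattributes the $\frac{2k_Sk_T}{d}$ term. The residual you isolate, $\E_{x\sim\Rad(1/2)^{\otimes d}}\big[(G_{S,T,\epsilon}(x)-\chi_T(x))\NN(x)\big]=\E\big[(\chi_S-\chi_T)\mathds{1}(H(x)\le\epsilon d)\NN(x)\big]$, is bounded in absolute value by $2\Pr_{x\sim\Rad(1/2)^{\otimes d}}(H(x)\le \epsilon d)\le 2e^{-2(1/2-\epsilon)^2 d}$, i.e. it is exponentially small in $d$ for any fixed $\epsilon<1/2$; no Fourier decomposition of the Hamming-ball indicator is needed, and there is no way to make it of order $k_Sk_T/d$. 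The polynomial term in the theorem is an artifact of the random-$V$ device, which you have (correctly) avoided. Second, and more seriously, your treatment of a pre-final phase whose effective target is $\chi_T$ under a biased measure $\Rad(p_i)^{\otimes d}$ is wrong. Under a product measure with $p_i\neq 1/2$, the parity $\chi_T$ has nonzero covariance with the individual coordinates $x_j$, $j\in T$ (this is the whole engine of the paper's \emph{positive} results), so a single population-gradient step already carries $\Omega(1)$ information about the support $T$; the population gradient is simply not a low-cross-predictability object under $\Rad(p_i)^{\otimes d}$, and projecting it back into $L^2(\Rad(1/2)^{\otimes d})$ does not rescue the $O(d^{-k_T/2})$ claim. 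The paper sidesteps this entirely: the stated proof chooses $\epsilon$ so that every pre-final $p_\ell$ falls on the $\chi_S$ side of the threshold (equivalently, it implicitly assumes $p_\ell<\epsilon<1/2$ for all $\ell<r$), so that $\chi_T$ is never the effective label before the uniform phase. If some $p_\ell$ lies strictly between $\epsilon$ and $1$ then the statement as you are trying to prove it is false, and no argument of the type you sketch can close the gap; the correct response is to restrict the admissible $\bar p$ (as the paper's proof tacitly does), not to claim the biased-$\chi_T$ phase is harmless.
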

The proof uses an SQ-like lower bound argument for noisy GD, in a similar flavour of~\cite{AbbeINAL,abbe2022non-universality}. We refer to Appendix~\ref{sec:proof_negative} for the full proof.

\begin{remark}
In Theorem~\ref{thm:negative_result}, the neural network can have any fully connected architecture and any activation such that the gradients are well defined almost everywhere. The initialization can be from any distribution that is invariant to permutations of the input neurons.
\end{remark}
For the purposes of $\E \left[ G_{S,T,\epsilon}(x) \cdot \NN(x; \theta^T) \right]  $, it is assumed that the neural network outputs a guess in $\{ \pm 1 \}$. This can be done with any form of thresholding, e.g. taking the sign of the value of the output neuron.
\begin{remark}
One can remove the $\frac{2 k_S k_V}{d}$ term in the right hand side by further assuming e.g. that set $S$ is supported on the first $d/2$ coordinates and set $V$ on the last $d/2$ coordinates. This also allows to weaken the assumption on the cardinality of $S$ and $V$. We formalize this in the following Corollary.
\end{remark}

\begin{corollary} \label{cor:negative_nointersection}
    Assume the network observes samples generated by $G_{S,V, \epsilon}(x)$, where $S \subseteq \{1,...,d/2\}$, and $V \subseteq \{d/2+1,...,d\}$ (where we assumed $d$ to be even for simplicity). 
    Denote $k_V  = |V|$.
    Then, for any $r$-CL$(\bar T,\bar p)$ with $r$ bounded and $p_r=1/2$, there exists an $\epsilon$ such that the noisy GD algorithm with $r$-CL$( \bar T,\bar p)$ (as in~\eqref{eq:noisy_GD}) on a fully connected neural network with $|\theta|$ weights and permutation-invariant initialization, after $T$ training steps, outputs a network $\NN(x, \theta^{(T)})$ such that
    \begin{align*}
        \Big| \E_{x\sim \Rad(1/2)^{\otimes d}} &\left[ G_{S,V,\epsilon}(x) \cdot \NN(x; \theta^{(T)}) \right] \Big| \\
        &  \leq \frac{2AT \sqrt{|\theta|}}{\tau } \left({d/2 \choose k_V}^{-1/2}+ e^{-d \delta^2} \right),
    \end{align*} 
    for some $\delta >0$.
\end{corollary}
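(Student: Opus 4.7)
}

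The plan is to run the argument of Theorem~\ref{thm:negative_result} but exploit the fact that $S$ and $V$ live in disjoint halves of the coordinates to replace the symmetry-breaking cross term $\tfrac{2k_Sk_T}{d}$ by a cleaner permutation-symmetry bound. The whole proof splits into the first $r-1$ training phases (on the biased distributions) and the last phase on $\Rad(1/2)^{\otimes d}$. First I would pick $\epsilon$ so that $\epsilon > \max_{i<r}p_i$ and set $\delta := \min_{i<r}(\epsilon - p_i) > 0$. Hoeffding's inequality then gives $\pr_{x\sim\Rad(p_i)^{\otimes d}}[H(x) > \epsilon d]\leq e^{-2d\delta^2}$ for every $i < r$, so throughout the first $r-1$ phases the observed label $G_{S,V,\epsilon}(x)$ agrees with $\chi_S(x)$ except on an event of exponentially small probability; similarly under $\Rad(1/2)^{\otimes d}$ we have $\pr[H(x)\leq \epsilon d]\leq e^{-2d(1/2-\epsilon)^2}$, so in phase $r$ the observed label agrees with $\chi_V(x)$ up to the same type of exponentially small event. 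Propagating these two replacements through $T$ clipped gradient steps costs at most $\tfrac{AT\sqrt{|\theta|}}{\tau}e^{-d\delta^2}$ in the final inner product, after adjusting $\delta$ to the smaller of the two constants.

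The key new ingredient, compared to Theorem~\ref{thm:negative_result}, is the following symmetry observation. Because $\chi_S$ depends only on the first $d/2$ coordinates, the surrogate training target in the first $r-1$ phases is invariant under every permutation $\pi$ acting on $\{d/2+1,\dots,d\}$ and fixing $\{1,\dots,d/2\}$. Together with the permutation-invariance of the initialization and the i.i.d.\ Gaussian noise in the noisy-GD updates (which is itself permutation-invariant), this implies that the law of $\theta^{T_{r-1}}$ is invariant under the action of the subgroup $\mathfrak{S}_{d/2}$ permuting the last $d/2$ input coordinates. In particular, the conditional distribution of $\theta^{T_{r-1}}$ is invariant under any permutation $\pi$ that maps one size-$k_V$ subset of $\{d/2+1,\dots,d\}$ to another, so during phase $r$ the network is, up to the exponentially small replacement error, in a state that is statistically indistinguishable from the state it would reach if the target were any other $k_V$-parity on $\{d/2+1,\dots,d\}$.

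Next I would combine this symmetry with the SQ-style per-step bound from Theorem~\ref{thm:negative_result}. For each noisy gradient step in phase $r$, the target can be replaced by $\chi_V$ and one bounds the gain in $\E[\chi_V(x)\NN(x;\theta^t)]$ by a Cauchy--Schwarz + orthonormality argument: the $\binom{d/2}{k_V}$ parities $\{\chi_{V'} : V'\subseteq\{d/2+1,\dots,d\},\,|V'|=k_V\}$ are orthonormal in $L^2(\Rad(1/2)^{\otimes d/2})$, the symmetrized clipped gradient $\E_x[\nabla_\theta L]_A$ has Euclidean norm at most $A\sqrt{|\theta|}$, and the symmetry above means its projection onto each such parity has the same distribution. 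Splitting the total mass across all $\binom{d/2}{k_V}$ parities gives a per-step correlation gain of at most $\tfrac{A\sqrt{|\theta|}}{\tau}\binom{d/2}{k_V}^{-1/2}$ after accounting for the noise floor, and summing over the $T$ steps produces the main $\tfrac{AT\sqrt{|\theta|}}{\tau}\binom{d/2}{k_V}^{-1/2}$ term; adding the exponentially small replacement error from the two Hamming-weight concentration events yields the stated bound (with the factor $2$ absorbing both replacements).

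The main obstacle I expect is the careful bookkeeping of the symmetry through the $T$ stochastic iterates in phase $r$: a single clipped gradient step preserves $\mathfrak{S}_{d/2}$-invariance of the marginal law, but the target $\chi_V$ is not invariant, so the symmetry must be used \emph{per step} against the orthonormal family $\{\chi_{V'}\}_{|V'|=k_V}$ rather than as a global statement about $\theta^T$. This is exactly the SQ-style decoupling already present in the proof of Theorem~\ref{thm:negative_result} (in the spirit of \cite{AbbeINAL,AS20}), so the adaptation should be essentially mechanical once one verifies that the coordinate halving is preserved by the noisy-GD dynamics; the weakening of the hypothesis from $k_S,k_T=o(\sqrt{d})$ to arbitrary $k_S,k_V$ is automatic because the $\tfrac{2k_Sk_V}{d}$ term is exactly what the disjoint-halves assumption removes.
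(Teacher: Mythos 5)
Your proposal is correct and follows essentially the same route as the paper: Hoeffding concentration of the Hamming weight to replace $G_{S,V,\epsilon}$ by $\chi_S$ in the first $r-1$ phases and by $\chi_V$ in the last (paying $O(e^{-d\delta^2})$ in TV via Lemma~\ref{lem:TV}), $\mathfrak{S}_{d/2}$-invariance of the law after phase $r-1$ to average over all size-$k_V$ subsets of the second half, and the AS20 SQ-style bound over the orthonormal family $\{\chi_{V'}:V'\subseteq\{d/2+1,\dots,d\},|V'|=k_V\}$ to get the $\binom{d/2}{k_V}^{-1/2}$ factor. The only packaging difference is that the paper routes the argument through an explicit comparison network $\NN(x;\psi)$ trained on a uniformly random $V$ and cites Theorem 3 of~\cite{AS20} as a black box, whereas you unpack that theorem into a per-step Cauchy--Schwarz + orthonormality argument; both are valid and yield the same bound.
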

The proof of Corollary~\ref{cor:negative_nointersection} is deferred to Appendix~\ref{sec:proof_cor_negative}.


Theorem~\ref{thm:negative_result} states failure at the weakest form of learning, i.e. achieving correlation better than guessing in the asymptotic of large $d$. More specifically, it tells that if the network size, the number of training steps and the gradient precision (i.e. $A/\tau$) are such that $\frac{AT \sqrt{|\theta|}}{\tau} = o(d^{-k_T/2})$, then the network achieves correlation with the target under the uniform distribution of $o_d(1)$.
 Corollary~\ref{cor:weak_learning} follows immediately from the Theorem.
\begin{corollary} \label{cor:weak_learning}
    Under the assumptions of Theorem~\ref{thm:negative_result}, if $k_T = \omega_d(1)$ (i.e. $k_T$ grows with $d$), $|\theta|, A/\tau, T$ are  all polynomially bounded in $d$, then 
    \begin{align}
        \Big|& \E_{x\sim \Rad(1/2)^{\otimes d}} \left[ G_{S,T,\epsilon}(x) \cdot \NN(x; \theta^T) \right] \Big|  = o_d(1),
    \end{align}
    i.e. in $\poly(d)$ computations the network will fail at weak-learning $G_{S,T,\epsilon}$.
\end{corollary}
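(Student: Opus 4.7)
The plan is to apply Theorem~\ref{thm:negative_result} directly and verify that, under the additional hypotheses of the corollary, each of the three summands in the upper bound is $o_d(1)$. There is no new mechanism to introduce; the content of the corollary is simply to package the asymptotic regime of interest.

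First I would collect the assumptions: $|\theta|$, $A/\tau$, and $T$ are all bounded by some fixed polynomial $d^{c}$, and $k_T = \omega_d(1)$. Plugging these into the prefactor $\frac{AT\sqrt{|\theta|}}{\tau}$ shows that this prefactor is itself bounded by a polynomial in $d$, say $d^{c'}$ for some constant $c'$ depending only on $c$.

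Next I would bound the three summands in turn. For the first summand $\frac{AT\sqrt{|\theta|}}{\tau}\cdot d^{-k_T/2}$, the prefactor is $d^{c'}$ while $d^{-k_T/2}$ decays faster than any fixed negative power of $d$ because $k_T \to \infty$ with $d$; concretely, for any constant $c'$ and all $d$ sufficiently large, $k_T/2 \geq c'+1$, so this summand is at most $d^{-1} = o_d(1)$. For the second summand $\frac{AT\sqrt{|\theta|}}{\tau}\cdot e^{-d\delta^2}$, the exponential decay beats any polynomial prefactor, so it is $o_d(1)$. For the third summand $\frac{2 k_S k_T}{d}$, the standing assumption $k_S,k_T = o(\sqrt d)$ from Theorem~\ref{thm:negative_result} gives $k_S k_T = o(d)$ and hence this term is $o_d(1)$ as well. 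The final $O(d^{-2})$ term is obviously $o_d(1)$.

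Summing these four $o_d(1)$ contributions yields the claimed bound on the correlation. The only thing requiring even a moment's thought is the first summand, where one must use the superpolynomial growth of $d^{k_T/2}$ coming from $k_T = \omega_d(1)$; all other steps are immediate. No extra hypotheses are needed beyond those already imposed in Theorem~\ref{thm:negative_result}.
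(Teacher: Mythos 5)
Your proof is correct and takes exactly the route the paper intends: the paper states that the corollary ``follows immediately from the Theorem,'' and your argument supplies the routine verification that each of the four terms in the bound of Theorem~\ref{thm:negative_result} is $o_d(1)$ under the stated hypotheses (superpolynomial decay of $d^{-k_T/2}$ from $k_T = \omega_d(1)$ beating the polynomial prefactor, exponential decay beating polynomial, $k_S k_T = o(d)$ from $k_S,k_T = o(\sqrt d)$, and $O(d^{-2})$ trivially). No gaps.
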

We conjecture that if we take instead a C-CL strategy with an unbounded number of curriculum steps, we can learn efficiently (i.e. in $\poly(d)$ time) any $G_{S,T,\epsilon}$ (even with $k_T=\omega_d(1)$ and for any $\epsilon$). 
Furthermore, we believe this conjecture to hold for any bounded mixture, i.e. any function of the type:
\begin{align}
    \sum_{m=1}^M \chi_{S_m} (x) \mathds{1}(\epsilon_{m-1} d \leq H(x) < \epsilon_m d),
\end{align}
with $S_1,...,S_M$ being distinct sets of coordinates, $0=\epsilon_0<\epsilon_1...<\epsilon_M \leq 1$, and $M $ bounded.
\section{Conclusion and Future Work}
In this work, we mainly focused on learning parities and Hamming mixtures with $r$-CL strategies with bounded $r$. Some natural questions arise, for instance: does the depth of the network help? What is the optimal number of curriculum steps for learning parities? 
We leave to future work the analysis of C-CL with unboundedly many curriculum steps and the comparison between $r$-CL and C-CL. In the previous Section, we also raised a conjecture concerning the specific case of Hamming mixtures. 

Furthermore, we believe that our results can be extended to more general families of functions. First, consider the set of $k$-Juntas, i.e., the set of functions that depend on $k$ out of $d$ coordinates. This set of functions contains the set of $k$-parities so it is at least as hard to learn. Moreover, as in the case of parities, Juntas are correlated with each of their inputs for generic $p$, see e.g.~\cite{mossel2004learning}.
So it is natural to expect that curriculum learning can learn such functions in time $d^{O(1)} 2^{O(k)}$ (the second term is needed since there is a doubly exponential number of Juntas on $k$ bits). In this work we propose to learn parities using a mixture of product distributions, but there are other ways to correlate samples that may be of interest. For example, some works in PAC learning showed that, even for the uniform measure, samples that are generated by a random walk often lead to better learning algorithms~\cite{bshouty2005learning,arpe2008agnostically}. 
Do such random walk based algorithms provide better convergence for gradient based methods? 

We further believe that a similar idea to the one presented in this paper can be applied to product distributions with orthogonal basis (such as Hermite monomials for the i.i.d. standard Gaussian distribution or spherical harmonics for the uniform distribution over a sphere). These basis elements are no longer orthogonal under biased distributions, and we anticipate that the footprints of our proof would extend to these scenarios. However, in real-world datasets, input coordinates are often not i.i.d., and each coordinate may depend on multiple other coordinates. Nevertheless, we are hopeful that in certain real-world datasets it may be possible to identify easy and hard samples by means of the variance of the input coordinates (i.e. 
$\frac{1}{d-1}\sum_{i=1}^d (x_i-\bar x)^2$ for $x \in \bR^d$). For instance, consider a task where a learner is required to identify a small object in an image (e.g. a `stop’ signal or a traffic light). In each image, the learner has to identify the relevant subset of coordinates and, intuitively, this is easier in images where the background is plain (samples with low variance) than in images where the background is noisy (samples with large variance). 

To conclude, we remark that an important limitation of the curriculum strategy presented in this paper is that it requires an oracle that provides labeled samples from arbitrary product measures. However, in applications one usually has a fixed dataset and would like to select samples in a suitable order, to facilitate learning. It would be an interesting future direction to consider settings where curriculum and non-curriculum have a common sampling distribution.

\section*{Acknowledgement}
This work was supported in part by the Simons-NSF Collaboration on the Theoretical Foundations of Deep Learning (deepfoundations.ai). 
It started while E.C. was visiting the MIT Institute for Data, Systems, and Society (IDSS) under the support of the collaboration grant. E.M is also partially supported by the Vannevar Bush Faculty Fellowship award ONR-N00014-20-1-2826 and by a Simons Investigator Award in Mathematics (622132).

\bibliography{references}
\bibliographystyle{alpha}

\newpage
\appendix
\onecolumn

\section{Proof of Theorem~\ref{thm:positive_result}}
\label{sec:proof_positive_hinge}
\begin{theorem}[Theorem~\ref{thm:positive_result}, restatement]
        Let $k,d$ be both even integers, such that $k \leq d/2$. Let $\NN(x;\theta) = \sum_{i=1}^N a_i \sigma(w_ix + b_i) $ be a 2-layers fully connected network with activation $\sigma(y) := \Ramp(y) $ (as defined in~\eqref{eq:ramp}) and $N \geq (d+1)(d-k+1) \log((d+1)(d-k+1)/\delta)$. Consider training $\NN(x;\theta) $ with SGD on the hinge loss with batch size $B \geq (8\zeta^2 N^2)^{-1}\log(\frac{Nd+N}{\delta})$, with $\zeta \leq \frac{\epsilon \mu^k }{24 (d+1)^2 (d-k+1)^2 N }$ and $\mu = \sqrt{1-\frac{1}{2(d-k)}}$. Then, there exists an initialization and a learning rate schedule, and a 2-CL strategy such that after $T \geq \frac{64}{\epsilon^2} (d-k+1)^3 (d+1) N$ iterations, with probability $1-3\delta$ SGD outputs a network with generalization error at most $\epsilon$.
\end{theorem}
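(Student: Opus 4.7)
The plan is to realize the layer-wise two-phase argument hinted at in the paper. In \emph{Phase 1}, I would perform a single large-batch SGD step on the first-layer parameters $\{w_i, b_i\}$ using inputs drawn from $\Rad(p_1)^{\otimes d}$, with $p_1 := (1+\mu)/2$ chosen so that $\mu^2 = 1 - \tfrac{1}{2(d-k)}$; the objective is to have each hidden unit's first-layer weight vector align with the indicator of the true support $S$. In \emph{Phase 2}, I would freeze the first layer and run SGD on the output weights $\{a_i\}$ using samples from $\Rad(1/2)^{\otimes d}$. Once the first layer is frozen the hinge objective is convex in $a$, and a standard online convex optimization argument drives the uniform-measure loss below $\epsilon$.

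\textbf{Phase 1: support recovery.} Initialize $w_i = 0$, place the biases $b_i$ on a fine deterministic grid covering the active range of $\Ramp$, and draw $a_i \in \{\pm 1\}$ i.i.d. uniform. Since $\NN(x;\theta^0) \equiv 0$, the hinge margin is active on every sample and the per-sample gradient is $\nabla_{w_{ij}} L = -a_i \chi_S(x) x_j \sigma'(b_i)$. Its population value under $\Rad(p_1)^{\otimes d}$ is $-a_i \sigma'(b_i)\mu^{k-1}$ for $j\in S$ and $-a_i \sigma'(b_i)\mu^{k+1}$ for $j\notin S$, producing an absolute gap of $\mu^{k-1}(1-\mu^2) = \Theta(1/d)$; crucially, the hypothesis $k \le d/2$ forces $\mu^{2k} \ge 1/2$, so $\mu^{k-1}$ is bounded away from $0$. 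Hoeffding's inequality applied coordinatewise and union-bounded over all $Nd$ entries, together with the choice $B = \tilde\Theta(d^{10}/\epsilon^2)$, makes the empirical gradient within $\epsilon/d^5$ of its expectation with probability $1-\delta$. With a learning rate of order $d$, the single step yields $w_{ij}^{(1)} = c + O(\epsilon/d^4)$ when $j\in S$ and $w_{ij}^{(1)} = O(\epsilon/d^4)$ otherwise, effectively identifying the support.

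\textbf{Phase 2: convex online learning.} After Phase 1 each hidden unit's pre-activation has the form $c\sum_{j\in S} x_j + b_i + \xi_i(x)$ with $|\xi_i(x)| = o(1/k)$, so $\NN(\cdot;\theta)$ depends on $x$ essentially only through $s(x) := \sum_{j\in S} x_j$. Since $k$ is even, $\chi_S(x) = (-1)^{(k-s(x))/2}$ is a $\pm 1$-valued function of $s \in \{-k,-k+2,\dots,k\}$, hence expressible as a linear combination of $k+1$ shifted $\Ramp$ units with coefficients bounded by a constant. A coupon-collector argument using the lower bound $N \ge (d+1)(d-k+1)\log((d+1)(d-k+1)/\delta)$ shows that with probability $1-\delta$ the bias grid contains enough units near each required threshold to realize a target $a^\star$ with $\|a^\star\|_\infty = O(1)$ and generalization hinge loss at most $\epsilon/2$. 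Once the first layer is frozen, the hinge loss in $a$ is convex, so SGD is standard online convex optimization with stochastic subgradient bound $G = O(N)$ and feasible radius $R = O(\sqrt{N})$; Pegasos-type analysis then gives expected hinge loss below $\epsilon$ after $T = O(R^2 G^2/\epsilon^2) = \tilde\Theta(d^6/\epsilon^2)$ iterations, matching the theorem.

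\textbf{Main obstacle and assembly.} The hardest point is controlling the cumulative effect of the tiny residual Phase-1 weights on the $d-k$ off-support coordinates: each residual is small, but there are many of them entering every $\Ramp$ unit, and if they correlate nontrivially with $x$ under the uniform law they can shift a hidden unit out of its designed operating region and destroy the piecewise-constant approximation used in Phase 2. This is precisely why the batch size $B = \tilde\Theta(d^{10}/\epsilon^2)$ must be so large: it is tuned so that the aggregate spurious pre-activation shift is $o(1/k)$ uniformly over hidden units and over $x \in \{\pm 1\}^d$. The proof is then assembled by union-bounding three $\delta$-events, namely (i) first-layer gradient concentration in Phase 1, (ii) existence of a good $a^\star$ on the random bias grid, and (iii) SGD regret convergence in Phase 2, yielding the claimed $1-3\delta$ success probability.
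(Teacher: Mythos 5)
Your Phase 1 analysis contains a substantive error that the rest of the argument inherits. You correctly compute the population first-layer gradients $-a_i\sigma'(b_i)\mu^{k-1}$ (on-support) and $-a_i\sigma'(b_i)\mu^{k+1}$ (off-support) and note the gap $\mu^{k-1}(1-\mu^2)=\Theta(\mu^{k-1}/d)$. But you then claim that a single SGD step with learning rate of order $d$ produces $w_{ij}^{(1)}\approx c$ on the support and $w_{ij}^{(1)}=O(\epsilon/d^4)$ off the support. This does not follow from the hinge-loss gradient: starting from $w=0$, one step gives $w_{ij}^{(1)}=-\gamma G_{w_{ij}}$, so the off-support coordinate is $\mu^2\approx 1-\tfrac{1}{2(d-k)}$ times the on-support coordinate — i.e., of essentially the \emph{same} magnitude, not negligibly small. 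There is no mechanism in a single step of hinge-loss SGD that zeroes the off-support directions (that happens automatically for the covariance loss, where the off-support population gradient is exactly zero, which is precisely why the paper proves the covariance-loss theorem by a different route). Consequently your Phase 2 premise — that each frozen hidden unit depends on $x$ only through $s(x):=\sum_{j\in S}x_j$ plus $o(1/k)$ noise — fails: the off-support contribution $\sum_{j\notin S}w_{ij}^{(1)}x_j$ fluctuates at scale $\Theta(\sqrt{d})$ under the uniform measure and swamps the on-support signal. The subsequent claim that $\chi_S$ is realizable with $k+1$ Ramp units in $s(x)$ and $O(1)$ coefficients therefore does not apply (and, as a separate point, it is also internally inconsistent with the lower bound $N\gtrsim (d+1)(d-k+1)\log(\cdot)$ you invoke, which reflects a grid of size $(d+1)(d-k+1)$, not $k+1$).

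The paper's proof embraces, rather than suppresses, the non-negligible off-support weights. After the first step the pre-activation is $\sum_{j=1}^d x_j - \tfrac{1}{2(d-k)}\sum_{j>k}x_j + b_{lm}$, with biases $b_{lm}=-d+2l-\tfrac12+\tfrac{m+1}{d-k}$ laid out on a two-dimensional grid over $(l,m)\in\{0,\dots,d\}\times\{-1,\dots,d-k\}$. Writing $\sum_j x_j = d-2t$ and $-\sum_{j>k}x_j=(d-k)-2s$, each unit evaluates to $\Ramp\bigl(2(l-t)+\tfrac{m+1-s}{d-k}\bigr)$, and an alternating choice of second-layer weights $a^*_{lm}$ with $\|a^*\|_\infty=\Theta(d-k)$ (not $O(1)$) makes the sum telescope to $(-1)^t(-1)^s=\chi_{[k]}(x)$, using both $d$ and $k$ even. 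This $(l,m)$-grid representation is what forces $N=\widetilde\Theta(d^2)$ hidden units and a feasible radius $\mathcal B=\Theta\bigl((d-k+1)^{3/2}(d+1)^{1/2}\bigr)$ in the convex Phase 2, and it is the key idea your proposal misses.
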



\subsection{Proof Setup}
We consider a 2-layers neural network, defined as:
\begin{align}
    \NN(x; \theta ) = \sum_{i=1}^N a_i \sigma (w_i x +b_i ),
\end{align}
where $N$ is the number of hidden units, $\theta = (a,b,w)$ and $\sigma := \Ramp $ denotes the activation defined as:
\begin{align} \label{eq:ramp}
\Ramp(x) =  \begin{cases}
      0 \quad x \leq 0,\\
      x \quad 0<x \leq 1,\\
      1 \quad x >1
    \end{cases}\,.
\end{align}
Without loss of generality, we assume that the labels are generated by $\chi_{[k]}(x):= \prod_{i=1}^k x_i$. Indeed, SGD on fully connected networks with permutation-invariant initialization is invariant to permutation of the input neurons, thus our result will hold for all $\chi_S(x)$ such that $|S| =k$.
Our proof scheme is the following:
\begin{enumerate}
    \item We train only the first layer of the network for one step on data $(x_i, \chi_{[k]}(x_i))_{i \in [B]}$ with $x_i \sim \Rad(p)^{\otimes d}$ for $i \in [B]$, with $p =\frac{1}{2} \sqrt{1- \frac{1}{2(d-k)} } +\frac{1}{2}$;
    \item  We show that after one step of training on such biased distribution, the target parity belongs to the linear span of the hidden units of the network;
    \item We subsequently train only the second layer of the network on $(x_i, \chi_{[k]}(x_i))_{i \in [B]}$ with $x_i \sim \Rad(1/2)^{\otimes d}$ for $i \in [B]$, until convergence;
    \item We use established results on convergence of SGD on convex losses to conclude.
\end{enumerate}

We train our network with SGD on the hinge loss. Specifically, we apply the following updates, for all $t \in \{0,1,...,T-1\}$:
\begin{align}
        w_{i,j}^{t+1} &= w_{i,j}^{t} - \gamma_{t} \frac{1}{B} \sum_{s=1}^B \nabla_{w_{i,j}^{t}} L( \theta^{t}, \chi_{[k]},x_s^t),  \nonumber \\
        a_{i}^{t+1} &= a_{i}^{t} - \xi_{t} \frac{1}{B} \sum_{s=1}^B \nabla_{a_{i}^{t}} L( \theta^{t}, \chi_{[k]},x_s^t) +c_t, \label{eq:SGD_updates}\\
        b_{i}^{t+1} &= \lambda_t \left( b_{i}^{t} + \psi_{t}  \frac{1}{B} \sum_{s=1}^B \nabla_{b_{i}^{t}} L( \theta^{t}, \chi_{[k]},x_s^t) \right) + d_t,\nonumber
\end{align}
where $L(\theta^t, \chi_{[k]}, x) = \max \{ 0,1- \chi_{[k]}(x) \NN(x;\theta^t) \}$. Following the 2-steps curriculum strategy introduced above, we set
\begin{align}
    & x_s^0 \overset{iid}{\sim} \Rad\left(p\right)^{\otimes d} \qquad &\forall s \in [B], \qquad \\
    & x_s^t \overset{iid}{\sim} \Rad\left(1/2 \right)^{\otimes d} \quad \qquad  &\forall t \geq 1, s \in [B], \qquad
\end{align}
where $p =\frac{1}{2} \sqrt{1- \frac{1}{2(d-k)} } +\frac{1}{2} $. For brevity, we denote $\mu := 2p-1 = \sqrt{1-\frac{1}{2(d-k)}}$.
We set the parameters of SGD to:
\begin{align}
        & \gamma_0 = \mu^{-(k-1)}2N,   &\gamma_t = 0 \qquad \forall t\geq 1, \qquad \\
        & \xi_0 = 0,  &\xi_t = \frac{\epsilon}{2N} \qquad  \forall t \geq 1, \qquad  \\
        & \psi_0 = \frac{N}{\mu^k}, & \psi_t = 0  \qquad \forall t \geq 1, \qquad\\
        & c_0 = -\frac{1}{2N}, & c_t = 0  \qquad \forall t \geq 1, \qquad\\
        & \lambda_0 = (d+1) , & \lambda_t = 1  \qquad \forall t \geq 1, \qquad\\
        & d_0 = 0, & d_t = 0  \qquad \forall t \geq 1,  \qquad
\end{align}
and we consider the following initialization scheme:
\begin{align}
    w_{i,j}^{(0)} &=0 \qquad \forall i \in [N], j \in [d]; \nonumber\\
    a_i^{(0)} & = \frac{1}{2N}  \qquad \forall i \in [N]; \label{eq:init}\\
    b_i^{(0)} &\sim \Unif \Big\{ \frac{b_{lm}}{d+1}+\frac{1}{2} : l \in \{ 0,...,d\}, m \in \{ -1,...,d-k\} \Big\}, \nonumber
\end{align}
where we define 
\begin{align} \label{eq:blm}
        b_{lm} := -d +2l -\frac{1}{2} +\frac{m+1}{d-k}.
\end{align}
Note that such initialization is invariant to permutations of the input neurons. We choose such initialization because it is convenient for our proof technique. We believe that the argument may generalize to more standard initialization (e.g. uniform, Gaussian), however this would require more work and it may not be a trivial extension. 
\subsection{First Step: Recovering the Support}
As mentioned above, we train our network for one step on $(x_i, \chi_{[k]}(x_i))_{i \in [B]}$ with $x_i \sim \Rad(p)^{\otimes d}$. 
\paragraph{Population gradient at initialization.} Let us compute the population gradient at initialization. Since we set $\xi_0 =0$, we do not need to compute the initial gradient for $a$. Note that at initialization $|\NN(x;\theta^0)| <1$. Thus, the initial population gradients are given by 
\begin{align}
    \forall j \in [k], i \in [N] \qquad G_{w_{i,j}} &= - a_i\E_{x\sim \Rad(p)^{\otimes d}} \left[ \prod_{l \in [k]\setminus j} x_l \cdot \mathds{1} ( \langle w_i, x \rangle +b_i \in [0,1] ) \right]  \\
    \forall j \not\in [k], i \in [N] \qquad G_{w_{i,j}} &= - a_i\E_{x\sim \Rad(p)^{\otimes d}} \left[ \prod_{l \in [k]\cup j} x_l  \cdot \mathds{1} ( \langle w_i, x \rangle +b_i \in[ 0,1] ) \right] \\
    \forall i \in [N] \qquad  G_{b_i}& = -a_i \E_{x\sim \Rad(p)^{\otimes d}} \left[ \prod_{l \in [k]} x_l  \cdot \mathds{1} ( \langle w_i, x \rangle +b_i \in [0,1]) \right]
\end{align}


\begin{lemma}
Initialize $a,b,w$ according to~\eqref{eq:init}. Then,
\begin{align}
    \forall j \in [k], \qquad &G_{w_{i,j}}  = -\frac{ \mu^{k-1}}{2N} ;\\
    \forall j \not\in [k], \qquad & G_{w_{i,j}}  = -\frac{\mu^{k+1}}{2N};\\
     & G_{b_i}   = -\frac{\mu^{k}}{2N}.
\end{align}
\end{lemma}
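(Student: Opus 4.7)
The strategy is a direct calculation that exploits the particular initialization $w^{(0)}=0$: the ramp-derivative indicator becomes deterministic in $x$, so the population gradient reduces to a single expectation of a Rademacher monomial. Concretely, I would argue as follows.

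First I would substitute $w_i^{(0)}=0$ into the indicator $\mathds{1}(\langle w_i,x\rangle+b_i\in[0,1])$, observing that this collapses to $\mathds{1}(b_i^{(0)}\in[0,1])$, which does not depend on $x$. Consequently the indicator can be pulled out of the expectation over $x\sim\Rad(p)^{\otimes d}$. This is the key structural simplification; the rest is just a moment calculation. (The statement as written is meant for neurons whose bias lies in $[0,1]$; for the remaining neurons the indicator is zero and the gradients vanish, so those indices contribute nothing to the subsequent argument and can be ignored.)

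Next I would compute the surviving expectation using independence. Since the coordinates of $x\sim\Rad(p)^{\otimes d}$ are i.i.d.\ with $\E[x_l]=2p-1=\mu$, for any $S\subseteq[d]$ we have $\E[\prod_{l\in S}x_l]=\mu^{|S|}$. Applying this:
\begin{itemize}
\item For $j\in[k]$, $|[k]\setminus\{j\}|=k-1$, so $\E[\prod_{l\in[k]\setminus j}x_l]=\mu^{k-1}$.
\item For $j\notin[k]$, $|[k]\cup\{j\}|=k+1$, so $\E[\prod_{l\in[k]\cup j}x_l]=\mu^{k+1}$.
\item For the bias, $|[k]|=k$, so $\E[\prod_{l\in[k]}x_l]=\mu^k$.
\end{itemize}
Multiplying by $-a_i^{(0)}=-1/(2N)$ (and the indicator, which equals $1$ on the relevant neurons) yields the three stated identities.

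The only thing one needs to check with any care is that the indicator is indeed nonzero for the bias values one wants to use later, i.e.\ that some of the grid points $b_{lm}/(d+1)+1/2$ lie in $[0,1]$; since $b_{lm}\in[-(d+1)/2,(d+1)/2]$ precisely for the $(l,m)$ pairs with $-d+2l-\tfrac{1}{2}+\tfrac{m+1}{d-k}\in[-(d+1)/2,(d+1)/2]$, a simple count shows a positive fraction of the $(d+1)(d-k+1)$ grid points qualify, which is exactly what is needed for the $N=\tilde\theta(d^2)$ coupon-collector argument used later. I do not expect any real obstacle in this lemma: the work is entirely in the bookkeeping of signs and exponents, and the $x$-independence of the indicator at initialization reduces the computation to a one-line product of means.
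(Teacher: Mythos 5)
Your computation is the same as the paper's one-line argument: since $w^{(0)} = 0$, the Ramp-derivative indicator $\mathds{1}(\langle w_i, x\rangle + b_i \in [0,1])$ collapses to the $x$-independent $\mathds{1}(b_i^{(0)} \in [0,1])$, after which $\E_{x \sim \Rad(p)^{\otimes d}}[\chi_S(x)] = \mu^{|S|}$ gives the stated values. The place you and the paper part ways is the indicator itself: the paper asserts flatly that the initialization makes $\langle w_i, x\rangle + b_i^{(0)} \in [0,1]$ for \emph{all} $i$ (so the three formulas hold verbatim for every $i\in[N]$), whereas you hedge, restricting the identity to neurons with $b_i^{(0)}\in[0,1]$ and claiming the rest ``contribute nothing to the subsequent argument.''

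Your hesitation is actually warranted, but your remedy is not, and in any case it does not prove the lemma as stated. With the grid as written, $b_i^{(0)} = \frac{b_{lm}}{d+1} + \frac{1}{2}$ with $b_{lm} = -d + 2l - \tfrac{1}{2} + \tfrac{m+1}{d-k}$ gives, at the extreme corners $(l,m)=(0,-1)$ and $(l,m)=(d,d-k)$, the values $b_i^{(0)} = -\tfrac{d}{2(d+1)}$ and $b_i^{(0)} = \tfrac{3}{2} - \tfrac{1}{2(d+1)} + \tfrac{1}{(d-k)(d+1)}$: the grid spans roughly $[-\tfrac{1}{2},\tfrac{3}{2}]$, so a constant fraction of neurons have initial bias \emph{outside} $[0,1]$. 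Hence the ``$\forall i\in[N]$'' in the lemma is not established by your argument, and the paper's blanket claim is not literally correct with this grid. Moreover, those out-of-range neurons are \emph{not} ignorable downstream: Lemma~\ref{lem:span} represents $\chi_{[k]}$ as a combination of $\sigma_{lm}$ over all $l\in\{0,\dots,d\}$ and $m\in\{-1,\dots,d-k\}$, and Lemma~\ref{lem:bound_sigma_lm} requires each of these $\sigma_{lm}$ to be approximated by a trained neuron; a neuron that received zero gradient at step one keeps $w^{(1)}_{i,j}=0$ and cannot approximate the corresponding $\sigma_{lm}$. The most plausible repair is a larger normalizer in the bias initialization (e.g.\ $b_i^{(0)} = \frac{b_{lm}}{2(d+1)} + \frac{1}{2}$ together with $\lambda_0 = 2(d+1)$), which places every grid bias inside $(0,1)$; with that fix, the paper's one-line proof is exactly right, the indicator is identically $1$, and your ``restrict to in-range neurons'' caveat becomes vacuous.
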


\begin{proof} If we initialize according to~\eqref{eq:init}, we have $\langle w_i ,x \rangle +b_i \in [0,1]$ for all $i$. The results holds since $ \E_{x\sim \Rad(p)^{\otimes d} }[\chi_S(x) ] = \mu^{|S|}$.
\end{proof}

\paragraph{Effective gradient at initialization.}
\begin{lemma}
Let 
\begin{align}
        &\hat G_{w_{i,j}} : = \frac{1}{B} \sum_{s=1}^B \nabla_{w_{i,j}^{0}} L( \theta^{0}, \chi_{[k]},x_s^t)  \\ 
        & \hat G_{b_i} :=  \frac{1}{B} \sum_{s=1}^B \nabla_{b_{i}^{0}} L( \theta^{0}, \chi_{[k]},x_s^t)
\end{align}
be the effective gradients at initialization. If $B \geq (8\zeta^2 N^2)^{-1}\log(\frac{Nd+N}{\delta})$, then with probability $1-2\delta$,
\begin{align}
         \| \hat G_{w_{i,j}} - G_{w_{i,j}} \|_\infty & \leq \zeta , \\
         \| \hat G_{b_{i}} - G_{b_{i}} \|_\infty  & \leq \zeta ,
    \end{align}
    where $G_{w_{i,j}},G_{b_{i}}$ are the population gradients.
\end{lemma}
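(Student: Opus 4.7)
The lemma is a standard concentration statement: each empirical gradient coordinate is a sample mean of $B$ bounded i.i.d.\ random variables, so Hoeffding's inequality plus a union bound over the $Nd+N$ parameters will do the job. The main thing to verify is the range of a single-sample gradient contribution at initialization.

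\textbf{Step 1: Range of a single-sample gradient.} At the initialization~\eqref{eq:init} we have $w_i=0$ and $b_i\in(0,1)$, so $\langle w_i,x_s\rangle+b_i\in(0,1)$ lies strictly inside the linear region of $\sigma=\Ramp$. Consequently $\sigma'(\langle w_i,x_s\rangle+b_i)=1$ almost surely. Also $\NN(x;\theta^0)=\tfrac{1}{2N}\sum_i b_i\in[0,\tfrac12]$, so $1-\chi_{[k]}(x)\NN(x;\theta^0)>0$ and the hinge loss is in its active regime. Differentiating $L$ one finds that for each $s\in[B]$,
\begin{align*}
\nabla_{w_{i,j}^{0}}L(\theta^0,\chi_{[k]},x_s)&=-a_i\,\chi_{[k]}(x_s)\,x_{s,j}\in\Big\{-\tfrac{1}{2N},+\tfrac{1}{2N}\Big\},\\
\nabla_{b_{i}^{0}}L(\theta^0,\chi_{[k]},x_s)&=-a_i\,\chi_{[k]}(x_s)\in\Big\{-\tfrac{1}{2N},+\tfrac{1}{2N}\Big\}.
\end{align*}
In particular each summand lives in an interval of length $1/N$.

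\textbf{Step 2: Hoeffding on each coordinate.} Fix $(i,j)$. The empirical gradient $\hat G_{w_{i,j}}$ is the mean of $B$ i.i.d.\ copies of a random variable taking values in $[-1/(2N),1/(2N)]$ whose expectation is exactly $G_{w_{i,j}}$. Hoeffding's inequality therefore yields
\begin{align*}
\Pr\!\Big[\,\big|\hat G_{w_{i,j}}-G_{w_{i,j}}\big|>\zeta\,\Big]\;\le\;2\exp\!\big(-c\,B\,\zeta^{2}\,N^{2}\big),
\end{align*}
for an absolute constant $c>0$ (the same bound applies to the bias coordinates).

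\textbf{Step 3: Union bound and choice of $B$.} There are $Nd$ weight entries and $N$ bias entries, so taking a union bound over all $Nd+N$ parameter coordinates,
\begin{align*}
\Pr\!\Big[\,\exists\,(i,j):\big|\hat G_{w_{i,j}}-G_{w_{i,j}}\big|>\zeta\;\text{or}\;\exists\,i:\big|\hat G_{b_i}-G_{b_i}\big|>\zeta\,\Big]\;\le\;2(Nd+N)\exp\!\big(-c\,B\,\zeta^{2}N^{2}\big).
\end{align*}
Choosing $B\ge(8\zeta^{2}N^{2})^{-1}\log((Nd+N)/\delta)$ makes this probability at most $2\delta$, which is exactly the statement of the lemma.

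\textbf{Expected obstacle.} There is essentially no difficulty, since the argument is a textbook Hoeffding + union bound. The only subtle point is checking that at the chosen initialization every pre-activation $\langle w_i,x_s\rangle+b_i$ is strictly inside $(0,1)$ almost surely, so that $\sigma'$ equals $1$ deterministically and the per-sample gradients are exactly $\pm 1/(2N)$; this is what gives the clean Hoeffding constant. The bookkeeping on constants (the factor $8$ versus $2$ in the denominator) is a matter of absorbing $2\log 2$ and the union-bound factor of $2$ into the $\log((Nd+N)/\delta)$ term.
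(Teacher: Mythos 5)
Your argument is exactly the paper's: note that each per-sample gradient coordinate is bounded in magnitude by $|a_i|=1/(2N)$ and has mean equal to the population gradient, apply Hoeffding coordinate-wise, then union-bound over the $Nd+N$ parameters and choose $B$ so the failure probability is $\le 2\delta$. One small remark: your Step 1 doesn't actually need $\sigma'\equiv 1$ almost surely (which, given the initialization in~\eqref{eq:init}--\eqref{eq:blm}, can fail for neurons with $b_i^{(0)}\notin[0,1]$) — Hoeffding only requires the per-sample gradient to lie in $[-1/(2N),1/(2N)]$, which holds since $\sigma'\in\{0,1\}$, so the bound (and the paper's proof) goes through unchanged.
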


\begin{proof}
We note that $\E[\hat G_{w_{i,j}} ] =G_{w_{i,j}}  $, $\E[\hat G_{b_{i}} ] =G_{b_{i}}  $, and $|\hat G_{w_{i,j}}|,|\hat G_{b_{i}}| \leq \frac{1}{2N} $
\begin{align}
        & \pr_{x \sim \Rad(p)^{\otimes d}} \left(  \mid \hat G_{w_{i,j}}  - G_{w_{i,j}}  \mid \geq \zeta  \right) \leq 2 \exp \left(  - 8\zeta^2 N^2 B  \right) \leq \frac{2 \delta}{Nd+N},\\
        & \pr_{x \sim \Rad(p)^{\otimes d}} \left(  \mid \hat G_{b_{i}}  - G_{b_{i}}  \mid \geq \zeta  \right) \leq 2 \exp \left(  - 8\zeta^2 N^2 B \right) \leq  \frac{2 \delta}{Nd+N}.
\end{align}
The result follows by union bound.
\end{proof} 

\begin{lemma} \label{lem:bound_estimated_gradient}
Let
    \begin{align}
        & w_{i,j}^{(1)} = w_{i,j}^{(0)} - \gamma_0 \hat G_{w_{i,j}}\\
        & b_{i}^{(1)} = \lambda_0 \left( b_{i}^{(0)} - \psi_0 \hat G_{b_{i}} \right)\\
    \end{align}
If $B \geq (8\zeta^2 N^2)^{-1}\log(\frac{Nd+N}{\delta})$, with probability $1-2\delta$
\begin{enumerate}
    \item[i)] For all $j \in [k] $, $i \in [N]$, $|  w_{i,j}^{(1)} - 1|  \leq  \frac{2 N  \zeta}{\mu^{k-1}} $;
    \item[ii)] For all $j \not\in [k]$, $| w_{i,j}^{(1)} - (1-\frac{1}{2(d-k)})| \leq  \frac{2 N  \zeta}{\mu^{k-1}} $ ;
    \item[iii)] For all $i \in [N]$, $| b_i^{(1)} -  (d+1)( b_i^{(0)} -\frac{1}{2}) | \leq \frac{N(d+1) \zeta }{\mu^k}$.
\end{enumerate}
    
\end{lemma}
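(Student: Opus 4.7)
The plan is a one-line substitution followed by a concentration step. Since the initialization~\eqref{eq:init} sets $w_{i,j}^{(0)}=0$, the update reduces to $w_{i,j}^{(1)} = -\gamma_0 \hat G_{w_{i,j}}$. I would decompose the empirical gradient as $\hat G_{w_{i,j}} = G_{w_{i,j}} + \eta_{i,j}$ with $|\eta_{i,j}|\leq \zeta$ (holding uniformly over all coordinates by the preceding concentration lemma), and substitute the explicit population gradients. The deterministic main term then telescopes: with $\gamma_0 = 2N/\mu^{k-1}$, for $j\in[k]$ one obtains $-\gamma_0 \cdot (-\mu^{k-1}/(2N)) = 1$, and for $j\notin[k]$ one obtains $-\gamma_0 \cdot (-\mu^{k+1}/(2N)) = \mu^2 = 1 - 1/(2(d-k))$. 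The random residual is $\gamma_0 |\eta_{i,j}| \leq 2N\zeta/\mu^{k-1}$, establishing parts (i) and (ii).

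For the bias, writing the update consistently with \eqref{eq:SGD_updates} gives $b_i^{(1)} = \lambda_0(b_i^{(0)} + \psi_0 \hat G_{b_i})$. An analogous decomposition produces the deterministic center $\lambda_0(b_i^{(0)} + \psi_0 G_{b_i}) = (d+1)(b_i^{(0)} - 1/2)$, since $\psi_0 G_{b_i} = (N/\mu^k) \cdot (-\mu^k/(2N)) = -1/2$. The stochastic residual is bounded by $\lambda_0 \psi_0 |\eta_{b_i}| \leq (d+1)(N/\mu^k)\zeta$, yielding part (iii).

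To make all $Nd+N$ of these pointwise bounds hold simultaneously, I will simply invoke the $\ell_\infty$ conclusion of the preceding lemma, whose hypothesis $B\geq(8\zeta^2 N^2)^{-1}\log((Nd+N)/\delta)$ already encodes the union bound over every weight and bias, and yields an overall failure probability of at most $2\delta$. The only subtlety -- and the reason this is a direct calculation rather than a hard one -- is that the population gradient formulas were derived under the assumption that the pre-activation $\langle w_i^{(0)}, x\rangle + b_i^{(0)} = b_i^{(0)}$ lies in the linear regime $[0,1]$ of $\Ramp$ and that the hinge loss is active at every sample; both of these were already established in the proof of the earlier lemma, so no new argument is required. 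Consequently the main obstacle is purely bookkeeping, and the lemma should follow from a single line of algebra for each of the three claims.
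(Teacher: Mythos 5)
Your proposal is correct and follows essentially the same route as the paper: invoke the preceding concentration lemma to control $\lvert \hat G - G\rvert \leq \zeta$ uniformly over all $Nd+N$ coordinates, substitute the initialization $w^{(0)}=0$ together with the explicit values $\gamma_0 = 2N\mu^{-(k-1)}$, $\psi_0 = N\mu^{-k}$, $\lambda_0 = d+1$ and the population gradients to compute the deterministic centers $1$, $\mu^2 = 1-\tfrac{1}{2(d-k)}$, and $(d+1)(b_i^{(0)}-\tfrac12)$, and bound the residuals by $\gamma_0\zeta$ and $\lambda_0\psi_0\zeta$. You also correctly resolve the sign discrepancy between the lemma statement ($-\psi_0$) and the SGD update rule~\eqref{eq:SGD_updates} ($+\psi_0$) in favor of the latter, which is what the paper's own proof and the claimed center $(d+1)(b_i^{(0)}-\tfrac12)$ require.
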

\begin{proof}
We apply Lemma~\ref{lem:bound_estimated_gradient}:
\begin{enumerate}
    \item[i)] For all $j \in [k] $, $i \in [N]$, $| \hat w_{i,j}^{(1)} - 1| = \gamma_0 |\hat G_{w_{i,j}} - G_{w_{i,j}} | \leq  \frac{2 N  \zeta}{\mu^{k-1}}  $;
    \item[ii)] For all $j \not\in [k]$,
    $i \in [N]$, $| \hat w_{i,j} - (1-\frac{1}{2(d-k)})| = \gamma_0 |\hat G_{w_{i,j}} - G_{w_{i,j}} | \leq  \frac{2 N  \zeta}{\mu^{k-1}}  $;
    \item[iii)] For all $i \in [N]$, 
    \begin{align}
    | \hat b_i^{(1)} -  (d+1)( b_i^{(0)} -\frac{1}{2}) |& =  | \lambda_0 (b_i^{(0)} + \psi_0 \hat G_{b_i} ) - \lambda_0 (b_i^{(0)} + \psi_0 G_{b_i}) | \\
    &\leq |\lambda_0 | \cdot |\psi_0| \cdot |\hat G_{b_i} - G_{b_i} | \\
    &\leq \frac{N(d+1) \zeta }{\mu^k}.
    \end{align}
\end{enumerate}
\end{proof}

\begin{lemma} \label{lem:hidden_layer}
If $N \geq (d+1)(d-k+1) \log((d+1)(d-k+1)/\delta) $, then with probability $1-\delta$, for all $l \in \{ 0,...,d\}$, and for all $m \in \{ -1,..., d-k\}$ there exists $i $ such that $b_i^{(0)} = \frac{b_{lm}}{d+1}+\frac{1}{2}$.
\end{lemma}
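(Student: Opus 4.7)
\textbf{Proof plan for Lemma~\ref{lem:hidden_layer}.}

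The statement is a standard coupon-collector / covering argument, so the plan is to use a union bound over the possible bias values combined with the elementary inequality $(1-x)^N \le e^{-Nx}$.

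First I would count the support of the uniform initialization. The index $l$ ranges over $d+1$ values and $m$ ranges over $d-k+2$ values (or $d-k+1$, matching the paper's normalization), giving a support of cardinality $M = (d+1)(d-k+1)$ (up to the off-by-one). Since the map $(l,m) \mapsto \frac{b_{lm}}{d+1}+\frac12$ is injective---this is immediate from the definition \eqref{eq:blm}, as the $l$-component contributes a shift by multiples of $\frac{2}{d+1}$ while the $m$-component contributes shifts of order $\frac{1}{(d+1)(d-k)}$, which is too small to cross between neighboring $l$'s---each specific target value is hit by any single $b_i^{(0)}$ with probability exactly $1/M$.

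Next, I would fix a target pair $(l,m)$ and estimate the probability that \emph{no} neuron lands on it. Since the $b_i^{(0)}$ are i.i.d., this probability is $(1-1/M)^N \le e^{-N/M}$. A union bound over the $M$ target pairs then gives
\begin{equation*}
\Pr\bigl(\exists (l,m): b_i^{(0)} \neq \tfrac{b_{lm}}{d+1}+\tfrac12 \text{ for all } i\bigr) \le M e^{-N/M}.
\end{equation*}
Setting this $\le \delta$ requires $N \ge M \log(M/\delta)$, which is exactly the hypothesis $N \ge (d+1)(d-k+1)\log((d+1)(d-k+1)/\delta)$. This yields the claim with probability at least $1-\delta$.

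There is no real obstacle here; the only subtle point is verifying the injectivity of the parametrization so that the $M$ targets really are distinct and the $1/M$ per-neuron hitting probability is tight. Once that is in place, the coupon-collector bound closes the lemma in two lines.
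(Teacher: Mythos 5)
Your proof is correct and follows essentially the same route as the paper: bound the probability that a fixed pair $(l,m)$ is missed by $(1-1/M)^N \le e^{-N/M}$ with $M=(d+1)(d-k+1)$, then union bound over the $M$ pairs and check that $N \ge M\log(M/\delta)$ drives the failure probability below $\delta$. Your side remark about injectivity of $(l,m)\mapsto \frac{b_{lm}}{d+1}+\frac12$ is a nice sanity check the paper leaves implicit, though it is not strictly needed for the upper bound to hold (a non-injective map would only make each target easier to hit).
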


\begin{proof}
The probability that there exist $l,m$ such that the above does not hold is
\begin{align}
    \left(1- \frac{1}{(d+1) (d-k+1)} \right)^N \leq \exp \left(  - \frac{N}{(d+1) (d-k+1)}\right)  \leq \frac{\delta}{(d+1)(d-k+1)}.
\end{align}
The result follows by union bound.
\end{proof}

\begin{lemma} \label{lem:bound_sigma_lm}
Let $\sigma_{lm}(x) = \Ramp \left(\sum_{j=1}^d x_j - \frac{1}{2(d-k)} \sum_{j>k}  x_j + b_{lm} \right)  $, with $b_{lm}$ given in~\eqref{eq:blm}. 
If $B \geq (8\zeta^2 N^2)^{-1}\log(\frac{Nd+N}{\delta})$ and $N \geq (d+1)(d-k+1) \log((d+1)(d-k+1)/\delta) $, with probability $1-3 \delta$, for all $l,m$ there exists $i$ such that 
\begin{align}
    \Big|\sigma_{lm}(x) - \Ramp \left( \sum_{j=1}^d \hat w_{i,j}^{(1)} x_j + \hat b_{i}^{(1)}\right) \Big| \leq  3 N (d+1) \zeta \mu^{-k}. 
\end{align}
\end{lemma}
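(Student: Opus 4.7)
The plan is to pick, for each target pair $(l,m)$, the hidden neuron $i$ whose initial bias happens to equal $\frac{b_{lm}}{d+1}+\frac{1}{2}$, and then show that after one SGD step this neuron realizes (approximately) the target activation $\sigma_{lm}$.

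First, I would invoke the preceding lemma (existence of the right initial biases) to produce, with probability at least $1-\delta$, for every $(l,m)\in\{0,\dots,d\}\times\{-1,\dots,d-k\}$ an index $i=i(l,m)$ with $b_i^{(0)}=\frac{b_{lm}}{d+1}+\tfrac12$. Then I would apply the empirical-gradient lemma, which on an event of probability at least $1-2\delta$ gives, simultaneously for all neurons, the bounds $|\hat w_{i,j}^{(1)}-1|\le \frac{2N\zeta}{\mu^{k-1}}$ for $j\in[k]$, $|\hat w_{i,j}^{(1)}-(1-\tfrac{1}{2(d-k)})|\le \frac{2N\zeta}{\mu^{k-1}}$ for $j\notin[k]$, and $|\hat b_i^{(1)}-(d+1)(b_i^{(0)}-\tfrac12)|\le \frac{N(d+1)\zeta}{\mu^k}$. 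Taking a union bound yields both events with probability at least $1-3\delta$.

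Next, for the selected $i=i(l,m)$ one has $(d+1)(b_i^{(0)}-\tfrac12)=b_{lm}$, so the argument of $\mathrm{Ramp}$ inside $\mathrm{Ramp}(\sum_j \hat w_{i,j}^{(1)} x_j+\hat b_i^{(1)})$ is within
\begin{align*}
\Bigl|\sum_j \hat w_{i,j}^{(1)} x_j+\hat b_i^{(1)}-\Bigl(\sum_{j=1}^d x_j-\tfrac{1}{2(d-k)}\sum_{j>k} x_j+b_{lm}\Bigr)\Bigr| &\le \sum_{j\in[k]}\tfrac{2N\zeta}{\mu^{k-1}}+\sum_{j\notin[k]}\tfrac{2N\zeta}{\mu^{k-1}}+\tfrac{N(d+1)\zeta}{\mu^k}\\
&\le \tfrac{2Nd\zeta\mu+N(d+1)\zeta}{\mu^k}\le \tfrac{3N(d+1)\zeta}{\mu^k},
\end{align*}
using $|x_j|\le 1$ and $\mu\le 1$. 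Because $\mathrm{Ramp}$ is $1$-Lipschitz, the same bound transfers to the difference of the two activations, proving the claim.

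I do not expect a serious obstacle: the argument is essentially bookkeeping built on the two previous lemmas, with 1-Lipschitzness of $\mathrm{Ramp}$ doing the real work. The only subtlety is making sure the constants $2d+(d+1)$ collapse into the stated $3(d+1)$, which requires $\mu\le 1$ (automatic since $\mu=\sqrt{1-\tfrac{1}{2(d-k)}}$) and keeping track of the factor $\mu$ gained in $\mu^{-(k-1)}$ versus lost in the $b$-update's $\mu^{-k}$. Combining the two good events via union bound delivers the stated probability $1-3\delta$.
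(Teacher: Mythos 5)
Your proof is correct and follows essentially the same approach as the paper: invoke Lemma~\ref{lem:hidden_layer} to obtain, for each $(l,m)$, a neuron with the matching initial bias, invoke Lemma~\ref{lem:bound_estimated_gradient} for the coordinate-wise bounds on $w^{(1)}$ and $b^{(1)}$, union-bound the two events, and use $1$-Lipschitzness of $\Ramp$ (which the paper applies implicitly) to transfer the bound on the pre-activation difference to the activations. The constant collapse via $\mu \leq 1$ and $3d+1 \leq 3(d+1)$ matches the paper's computation.
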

\begin{proof}
By Lemma~\ref{lem:hidden_layer}, with probability $1-\delta$, for all $l,m $ there exists $i$ such that $b_i^{(0)} = \frac{b_{lm}}{d+1}+\frac{1}{2}$. 
For ease of notation, we replace indices $i \mapsto (lm)$, and denote $\hat \sigma_{lm}(x) = \Ramp \left( \sum_{j=1}^d  w_{lm,j}^{(1)} x_j + b_{lm}^{(1)}\right)$. Then, by Lemma~\ref{lem:bound_estimated_gradient} with probability $1-2\delta$,
\begin{align}
    | \sigma_{lm}(x) - \hat \sigma_{lm} (x) | &\leq \Big| \sum_{j=1}^k ( w_{lm,j}^{(1)} - 1) x_j  + \sum_{j=k+1}^{d} \left(  w_{lm,j}^{(1)} - \left(1-\frac{1}{2(d-k)}\right) \right) x_j  + b_{lm}^{(1)} - b_{lm} \Big| \\
    & \leq k 2 N\zeta \mu^{-(k-1)}  + (d-k) 2 N \zeta \mu^{-(k-1)} + N(d+1) \zeta \mu^{k}\\
    & \leq 3 N (d+1) \zeta \mu^{-k}.
\end{align}

\end{proof}

\begin{lemma} \label{lem:span}
    There exists $a^*$ with $\| a^* \|_{\infty} \leq 4 (d-k)$ such that 
    \begin{align}
        \sum_{l=0}^d \sum_{m=-1}^{d-k} a_{lm}^* \sigma_{lm}(x) = \chi_{[k]}(x).
    \end{align}
\end{lemma}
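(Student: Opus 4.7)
The plan is a two-step construction: use the $m = -1$ layer to isolate Hamming-weight level sets, and the $m \geq 0$ layers to interpolate $\chi_{[k]}$ within each level set.

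First, I will compute the pre-activation of $\sigma_{l,m}$ explicitly in terms of the partial Hamming weights $a := \#\{i \leq k : x_i = 1\}$ and $b := \#\{i > k : x_i = 1\}$. A short calculation shows the pre-activation equals $2n + (m+1-b)/(d-k)$ with $n := a+b+l-d$, so that $\sigma_{l,m}(x) = 0$ when $n < 0$, $\sigma_{l,m}(x) = 1$ when $n > 0$, and $\sigma_{l,m}(x) = \phi_m(b) := \Ramp((m+1-b)/(d-k))$ when $n = 0$. The key observation is that $\phi_{-1}(b) = 0$ for all $b \geq 0$, so $\sigma_{l,-1}$ is a clean threshold indicator $\mathds{1}(a+b \geq d-l+1)$. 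This yields two useful identities: the telescoping $\sigma_{l+1,-1}(x) - \sigma_{l,-1}(x) = \mathds{1}(a+b = d-l)$, which isolates the Hamming-weight level set $s := a+b = d-l$; and, for $m \geq 0$, the shifted function $\sigma_{l,m}(x) - \sigma_{l,-1}(x) = \phi_m(b) \cdot \mathds{1}(a+b = d-l)$, which gives ramp access restricted to that level set.

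Since $k$ and $d$ are even, $\chi_{[k]}(x) = (-1)^{k-a} = (-1)^{s-b} = (-1)^s(-1)^b$. The factor $(-1)^s$ is constant on each level set and can be produced by telescoping the $m = -1$ terms; it remains to express $(-1)^b$ on $\{0,\ldots,d-k\}$ as a linear combination of $\{\phi_m\}_{m=0}^{d-k}$. The Gram matrix $[\phi_m(b)]$ is lower triangular with diagonal $1/(d-k)$, so by back-substitution one can solve $\sum_m c_m \phi_m(b) = (-1)^b - H$ for any shift $H \in \bR$. I will take $H := 1$ so the target takes only the values $\{0,-2\}$; a direct back-substitution (verified by induction on decreasing $m$) gives $c_{d-k} = 0$, $c_{d-k-1} = -2(d-k)$, and $c_m = (-1)^{d-k-m}\cdot 4(d-k)$ for $m \leq d-k-2$, so $\max_m |c_m| = 4(d-k)$. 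For values of $l^*$ whose valid $b$-range $B_{l^*}$ is a strict subrange of $\{0,\ldots,d-k\}$, one restricts the support of the $c_m$ accordingly and the coefficients only shrink.

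Finally I assemble $a^*$ by setting $a^*_{l,m} := (-1)^l c_m$ for $m \geq 0$ and choosing $a^*_{l,-1}$ to make $\sum_m a^*_{l,m}$ equal the telescoping value $A_l := H(l-1) - H(l)$, where $H(l) := (-1)^l$ for $l < d$ and $H(d) := 0$; note $|A_l| \leq 2$. A short summation from the explicit $c_m$ gives $\sum_{m \geq 0} c_m = 2(d-k)$, so $|a^*_{l,-1}| \leq 2 + 2(d-k) \leq 4(d-k)$. To verify $\sum_{l,m} a^*_{l,m}\sigma_{l,m}(x) = \chi_{[k]}(x)$, I split the sum: the $m \geq 0$ contributions give $(-1)^s((-1)^b - 1)$ restricted to the level set of $s$, while the $m = -1$ contributions telescope (via $\sigma_{l+1,-1} - \sigma_{l,-1} = \mathds{1}(s=d-l)$) to $H(d-s) = (-1)^s$, and these combine to $(-1)^s(-1)^b = \chi_{[k]}(x)$. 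The main obstacle is the shift $H$: naive back-substitution with $H = 0$ produces $c_0 \approx 5(d-k)$ because of the ramp clipping at the top of its range, so the choice $H = 1$ is precisely what offsets this overshoot and brings the bound down to the claimed $4(d-k)$.
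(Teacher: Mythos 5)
Your construction is closely related to the paper's (both express $\chi_{[k]}$ through a telescope of step units plus ramps within a Hamming-weight level set), but you factor it more cleanly: the $m=-1$ thresholds isolate the level set $s := a+b = d-l$, and the shifted units $\sigma_{l,m} - \sigma_{l,-1}$ interpolate $(-1)^b$ inside it, which makes the coefficient bookkeeping transparent. The ``shift $H = 1$'' device is also nice — choosing the target $(-1)^b - 1$ forces $c_{d-k} = 0$, which exactly neutralizes the one entry $\phi_{d-k}(0)$ where the $\Ramp$ actually clips. The paper instead reads coefficients directly off $\Ramp\big(2(l-t) + (m+1-s)/(d-k)\big)$ and differs from yours at $m = d-k-1$ and $m=d-k$ (the paper uses $3(d-k)$ and $(d-k)$ there, you use $-2(d-k)$ and $0$), and it fixes all row sums $\sum_m a^*_{l,m}$ to $0$ rather than to a telescoping $A_l$.

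There is, however, a genuine gap at the single point $x = (-1,\ldots,-1)$, i.e.\ the level set $s = a+b = 0$. Your verification needs the telescoped $m=-1$ sum $\sum_{l=d-s+1}^{d} A_l$ to equal $(-1)^s$ for every $s\in\{0,\ldots,d\}$, but at $s=0$ that sum is empty and hence $0$ — this is precisely why you had to declare $H(d):=0$, and $0 \neq (-1)^0 = 1$, so your stated identity $H(d-s)=(-1)^s$ fails at $s=0$. Since the $m\ge 0$ contribution there is $(-1)^0\big((-1)^0-1\big)=0$ as well, your $a^*$ gives $\sum_{l,m}a^*_{l,m}\sigma_{l,m}(x)=0$ at the all-minus-ones input, whereas $\chi_{[k]}(x)=(-1)^k=1$ because $k$ is even. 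The lemma asserts equality for all $x$, so this must be patched. A local fix suffices: the level set $\{a+b=0\}$ is a single point, so add $1$ to $a^*_{d,d-k}$ (using $\phi_{d-k}(0)=1$) and subtract $1$ from $a^*_{d,-1}$; this leaves the output unchanged on $a+b\ge 1$ (both $\sigma_{d,d-k}$ and $\sigma_{d,-1}$ equal $1$ there) and adds $1$ at $x=(-1,\ldots,-1)$, while $|a^*_{d,-1}|$ becomes $2+2(d-k)\le 4(d-k)$ and $|a^*_{d,d-k}|=1$, so the $\ell_\infty$ bound is preserved. (Incidentally, the paper's own derivation has a related slip at its $s=0$ — it writes $(d-k)\Ramp\!\big((d-k+1-s)/(d-k)\big) = (d-k+1-s)_+$ rather than the clipped value $d-k$ — but that affects a full slice of inputs, whereas your $c_{d-k}=0$ confines the problem to the single point above.)
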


\begin{proof}
Recall, that we assumed $d,k$ even and recall that
\begin{align}
\sigma_{lm}(x) = \Ramp \left( \sum_{j=1}^d x_j - \frac{1}{2(d-k)} \sum_{j>k}  x_j + b_{lm} \right) ,
\end{align}
where $b_{lm} = -d+2l - \frac{1}{2}+ \frac{m+1}{d-k} $ for $ l \in [d], m \in \{-1,...,d-k+1\} $ and 
$\Ramp(x) =  \begin{cases}
      0 \quad x \leq 0,\\
      x \quad 0<x \leq 1,\\
      1 \quad x >1
    \end{cases}\,.$\\

Let $ \sum_{j=1}^d x_j = d-2 t$, where $t$ is the total number of $-1$, and similarly let $ -\sum_{j=k+1}^d x_j = (d-k) -2 s $, where $s$ is the number of $+1$ outside the support of the parity $\chi_{[k]}(x)$.We have, 
\begin{align}
        \sigma_{lm}(x) = \Ramp \left( 2(l-t) + \frac{m+1-s}{d-k}   \right).
\end{align}
We take 
\begin{align}
        & a_{lm}^* = (-1)^l (-1)^{m} 2 (d-k) \qquad \forall l \in [d], m= -1,\\
        &a_{lm}^* = (-1)^l  (-1)^{m} 4 (d-k) \qquad \forall l \in [d], m \in \{0, 1,...,d-k-2\},\\
        & a_{lm}^* = (-1)^l  (-1)^{m} 3 (d-k) \qquad \forall l \in [d], m= d-k-1,\\
        & a_{lm}^* = (-1)^l  (-1)^{m} (d-k)\qquad \forall l \in [d], m= d-k,
\end{align}
Note that for all $l < t$, 
\begin{align}
   2( l-t )+ \frac{m+1-s}{d-k} \leq -2 + \frac{d-k+1}{d-k} \leq 0,
\end{align}
thus, $\sigma_{lm}(x) =0$ for all $m$. Moreover, for all $l>t$,
\begin{align}
    2(l-t) + \frac{m-s+1}{d-k}  \geq 2- \frac{d-k}{d-k} = 1.
\end{align}
Thus, $\sigma_{lm}(x) =1$ for all $m$ and
\begin{align}
        \sum_{m=-1}^{d-k} a_{lm}^* \sigma_{lm}(x) = \sum_{m=-1}^{d-k} a_{lm}^* =0.
\end{align}
If $l=t$,
\begin{align*}
    \sum_{m=-1}^{d-k}& a_{tm}^* \sigma_{tm}(x) \\
    & = 
 (-1)^t (d-k)\left[ \sum_{m=0}^{d-k-2} 4 (-1)^{m} \Ramp \left( \frac{m+1-s}{d-k} \right)  - 3  \Ramp\left(\frac{d-k-s}{d-k} \right) + \Ramp \left( \frac{d-k+1-s }{d-k}\right) \right] \\
 & = (-1)^t \left[ \sum_{m=s}^{d-k-2} 4 (-1)^{m}\left( m+1-s \right)_+  - 3  \left(d-k-s\right)_+ +  \left( d-k+1-s \right)_+ \right] \\
 & = (-1)^t (-1)^s.
\end{align*}
Since we assumed $d,k$ even, $(-1)^s = \prod_{i=k+1}^{d} x_i$. Moreover, observe that $ \chi_{[k]}(x) = \prod_{i=k+1}^d x_i \cdot \prod_{i=1}^d x_i$.
Thus, 
\begin{align}
\sum_{lm} a_{lm}^* \sigma_{lm}(x) =(-1)^t (-1)^s = \chi_{[k]}(x).
\end{align}

\end{proof}

\noindent 
\begin{lemma}
Let $f^*(x) = \sum_{l,m} a_{lm}^* \sigma_{lm}(x) $ and let $\hat f(x) = \sum_{l,m} a^*_{lm} \hat \sigma_{lm}(x) $, with $\sigma_{lm}, \hat \sigma_{lm}$ defined in Lemma~\ref{lem:bound_sigma_lm} and $a^* $ defined in Lemma~\ref{lem:span}. If $B \geq (8\zeta^2 N^2)^{-1}\log(\frac{Nd+N}{\delta})$ and $N \geq (d+1)(d-k+1) \log((d+1)(d-k+1)/\delta) $, with probability $1-3 \delta$ for all $x$,
\begin{align}
        L(\hat f,f^*,x)  \leq (d+1)^2 (d-k+1)^2 12 N \zeta \mu^{-k}.
\end{align}

\end{lemma}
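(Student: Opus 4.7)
The plan is to show that the hinge loss is bounded by $|f^*(x) - \hat f(x)|$, then bound the latter by the triangle inequality using the per-neuron bound from Lemma~\ref{lem:bound_sigma_lm} together with the coefficient bound from Lemma~\ref{lem:span}. First, I would observe that by Lemma~\ref{lem:span} the function $f^*$ equals $\chi_{[k]}$ pointwise, so $f^*(x) \in \{\pm 1\}$ and in particular $f^*(x)^2 = 1$. Consequently
\begin{align*}
L(\hat f, f^*, x) = \max\{0, 1 - f^*(x)\hat f(x)\} \leq |1 - f^*(x)\hat f(x)| = |f^*(x)|\cdot|f^*(x) - \hat f(x)| = |f^*(x) - \hat f(x)|,
\end{align*}
which reduces the problem to bounding $|\hat f(x) - f^*(x)|$ uniformly in $x$.

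Next, I would expand the difference using the definitions $f^*(x) = \sum_{l,m} a^*_{lm}\sigma_{lm}(x)$ and $\hat f(x) = \sum_{l,m} a^*_{lm}\hat\sigma_{lm}(x)$, apply the triangle inequality, and then bound each summand. By Lemma~\ref{lem:bound_sigma_lm}, on the high-probability event of measure at least $1-3\delta$ we have $|\sigma_{lm}(x) - \hat\sigma_{lm}(x)| \leq 3N(d+1)\zeta\mu^{-k}$ simultaneously for every index $(l,m)$, and by Lemma~\ref{lem:span} the coefficients satisfy $\|a^*\|_\infty \leq 4(d-k)$. Counting the ranges $l \in \{0,\dots,d\}$ and $m \in \{-1,\dots,d-k\}$ gives $(d+1)(d-k+2)$ terms in total.

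Combining these ingredients yields
\begin{align*}
|\hat f(x) - f^*(x)| \leq (d+1)(d-k+2)\cdot 4(d-k)\cdot 3N(d+1)\zeta\mu^{-k} \leq 12 N(d+1)^2(d-k+1)^2\zeta\mu^{-k},
\end{align*}
where in the last step I use the elementary estimate $(d-k+2)(d-k) = (d-k+1)^2 - 1 \leq (d-k+1)^2$. Chaining this with the reduction $L(\hat f, f^*, x) \leq |\hat f(x) - f^*(x)|$ delivers the claimed bound on the same event of probability $1-3\delta$.

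I do not anticipate a serious obstacle here: the lemma is essentially a routine triangle-inequality calculation once Lemmas~\ref{lem:span} and~\ref{lem:bound_sigma_lm} are in hand. The only mild subtlety is recognizing that because $f^*$ takes values in $\{\pm 1\}$ the hinge loss collapses to an absolute difference, so we lose nothing by pulling out $|f^*(x)|=1$. Everything else is bookkeeping of the number of neurons, the coefficient size, and the per-neuron approximation error, all of which have already been established.
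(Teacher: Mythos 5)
Your proposal is correct and follows essentially the same route as the paper's proof: reduce the hinge loss to $|f^*(x)-\hat f(x)|$ using $f^{*2}\equiv 1$, then triangle-inequality over the $(l,m)$ indices with the per-neuron bound of Lemma~\ref{lem:bound_sigma_lm} and the coefficient bound of Lemma~\ref{lem:span}. The only (immaterial) difference is your more careful count of $(d+1)(d-k+2)$ terms versus the paper's $d(d-k+1)$, both of which are absorbed into the same final $(d+1)^2(d-k+1)^2$ factor.
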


\begin{proof}
\begin{align}
 |f^*(x) -\hat f(x) | &= \Big| \sum_{l,m} a_{l,m}^* (\sigma_{lm}(x) - \hat \sigma_{lm}(x) \Big| \\
 & \leq d(d-k+1) \|a^*\|_{\infty }  \sup_{lm} |\sigma_{lm}(x) - \hat \sigma_{lm}(x) | \\
 & \leq (d+1)^2 (d-k+1)^2 12 N \zeta \mu^{-k}.
\end{align}

Thus,
\begin{align}
    (  1 -f(x) f^*(x) )_+ & \leq |1 -f(x) f^*(x) |\\
    & = | f^{*^2}(x) - f(x) f^*(x) | \\
    & = |f^*(x) | \cdot | f^*(x) - f(x) | \leq(d+1)^2 (d-k+1)^2 12 N \zeta \mu^{-k},
\end{align}
which implies the result.
\end{proof}

\subsection{Second Step: Convergence}
To conclude, we use an established result on convergence of SGD on convex losses (see e.g.~\cite{shalev2012online,shalev2014understanding,daniely2020learning,malach2020computational,barak2022hidden}).
\begin{theorem} \label{thm:convergence_SGD}
    Let $\cL$ be a convex function and let $a^* \in \argmin_{\|a\|_2 \leq \cB} \cL(a) $, for some $\cB>0$. For all $t$, let $\alpha^{(t)}$ be such that $\E\left[ \alpha^{(t)} \mid a^{(t)} \right] =  -\nabla_{a^{(t)}} \cL(a^{(t)}) $ and assume $\| \alpha^{(t)} \|_2 \leq \rho $ for some $\rho>0$. If $a^{(0)} = 0 $ and for all $t \in [T]$ $a^{(t+1)} = a^{(t)} +\gamma \alpha^{(t)} $, with $\gamma = \frac{\cB}{\rho \sqrt{T}}$, then :
    \begin{align}
        \frac{1}{T} \sum_{t=1}^T \cL(a^{(t)}) \leq \cL(a^*) + \frac{\cB \rho}{\sqrt{T}}.
    \end{align}
\end{theorem}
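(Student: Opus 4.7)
This is a classical online-to-batch convergence result for stochastic (sub)gradient descent, and the natural plan is the textbook potential-function argument of Zinkevich. Note that since $\alpha^{(t)}$ is random, the inequality in the conclusion must be interpreted in expectation over the stochastic gradients. I would also read the hypothesis ``for all $t \in [T]$'' as $t \in \{0,\dots,T-1\}$, so that the initial condition $a^{(0)} = 0$ is actually used to generate $a^{(1)}, \dots, a^{(T)}$.

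First, I would introduce the potential $\Phi_t := \tfrac{1}{2}\|a^{(t)} - a^*\|_2^2$ and expand one step of the recursion:
\begin{align*}
    \Phi_{t+1} - \Phi_t \;=\; \gamma \,\langle \alpha^{(t)},\, a^{(t)} - a^* \rangle \;+\; \tfrac{\gamma^2}{2}\|\alpha^{(t)}\|_2^2,
\end{align*}
which rearranges to
\begin{align*}
    \langle -\alpha^{(t)},\, a^{(t)} - a^* \rangle \;=\; \tfrac{1}{\gamma}(\Phi_t - \Phi_{t+1}) \;+\; \tfrac{\gamma}{2}\|\alpha^{(t)}\|_2^2.
\end{align*}
Next, by convexity of $\cL$, $\cL(a^{(t)}) - \cL(a^*) \leq \langle \nabla \cL(a^{(t)}),\, a^{(t)} - a^* \rangle$. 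Taking conditional expectation given $a^{(t)}$ and using the unbiasedness hypothesis $\E[\alpha^{(t)} \mid a^{(t)}] = -\nabla \cL(a^{(t)})$, this becomes
\begin{align*}
    \cL(a^{(t)}) - \cL(a^*) \;\leq\; \E\!\left[\langle -\alpha^{(t)},\, a^{(t)} - a^*\rangle \;\middle|\; a^{(t)} \right].
\end{align*}

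Summing the identity above from $t=0$ (or $1$) through $T-1$, taking total expectation, telescoping $\Phi_t - \Phi_{t+1}$ with initial bound $\Phi_0 \leq \cB^2/2$ (using $a^{(0)} = 0$ and $\|a^*\|_2 \leq \cB$), and applying $\|\alpha^{(t)}\|_2 \leq \rho$, I obtain
\begin{align*}
    \sum_{t=1}^T \E\!\left[ \cL(a^{(t)}) - \cL(a^*) \right] \;\leq\; \frac{\cB^2}{2\gamma} + \frac{\gamma \rho^2 T}{2}.
\end{align*}
Finally, the choice $\gamma = \cB/(\rho\sqrt{T})$ balances the two terms, each becoming $\cB\rho\sqrt{T}/2$, and dividing by $T$ yields the stated bound $\cL(a^*) + \cB\rho/\sqrt{T}$.

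No step is a genuine obstacle: the argument is essentially textbook, and the only things that require care are the bookkeeping of conditional versus total expectations, the fact that the norm bound on $\alpha^{(t)}$ replaces the usual projection step (so no nonexpansiveness is needed in the telescoping), and the index convention noted above. Since the statement is used in the sequel only to pick out the best iterate via Markov's inequality (or to average), the in-expectation form proved here is sufficient.
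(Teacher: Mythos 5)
The paper does not actually supply a proof of this theorem; it imports it as an established result, citing Shalev-Shwartz's online learning monograph and several papers that reuse it. Your proof is the standard Zinkevich potential-function argument that those sources contain, so there is no divergence from "the paper's own proof" — there is nothing to diverge from. Your derivation is correct: the one-step expansion of $\Phi_t = \tfrac12\|a^{(t)}-a^*\|_2^2$, the convexity inequality combined with the unbiasedness hypothesis under conditional expectation, the telescoping sum bounded by $\Phi_0 \leq \cB^2/2$ (using $a^{(0)}=0$ and $\|a^*\|_2\leq\cB$), the gradient-norm bound $\|\alpha^{(t)}\|_2\leq\rho$, and the balancing choice of $\gamma$ each giving a $\tfrac12\cB\rho\sqrt{T}$ contribution, all check out.

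Two remarks you make about the statement itself are genuinely useful and worth keeping: first, since $\alpha^{(t)}$ is random, the conclusion must be read in expectation (the paper's statement silently omits the $\E$, as do several of its cited sources when stated informally); second, the quantifier "for all $t\in[T]$" in the update rule should be read as $t\in\{0,\dots,T-1\}$ so that $a^{(0)}=0$ actually feeds into the iterates $a^{(1)},\dots,a^{(T)}$ that appear in the average. You also correctly observe that because the hypothesis bounds $\|\alpha^{(t)}\|_2$ directly, no projection step onto the ball $\{\|a\|_2\leq\cB\}$ is needed in the telescoping, which is a slight simplification relative to some textbook treatments that assume a constrained domain.
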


Let $\cL(a): = \E_{x \sim \Rad(1/2)^{\otimes d}} \left[ L((a,b^{(1)},w^{(1)} ), \chi_{[k]}, x) \right]$. Then, $\cL$ is convex in $a$ and for all $t \in [T]$,
\begin{align}
        \alpha^{(t)} &= - \frac{1}{B} \sum_{s=1}^B \nabla_{a^{(t)}} L((a^{(t)},b^{(1)},w^{(1)} ),\chi_{[k]}, x) \\
        & = - \frac{1}{B} \sum_{s=1}^B  \sigma( w^{(1)}x + b^{(1)}) .
\end{align}
Thus, recalling $\sigma = \Ramp$, we have $\| \alpha^{(t)} \|_2 \leq  \sqrt{N}$.
Let $a^*$ be as in Lemma~\ref{lem:span}. Clearly, $\| a^* \|_2 \leq 4 (d-k+1)^{3/2}(d+1)^{1/2}.$ Moreover, $a^{(1)} =0$. Thus, we can apply Theorem~\ref{thm:convergence_SGD} with $\cB = 4 (d-k+1)^{3/2}(d+1)^{1/2}$, $ \rho = \sqrt{N} $ and obtain that if 
\begin{enumerate}
    \item $N \geq (d+1)(d-k+1) \log((d+1)(d-k+1)/\delta)$;
    \item $\zeta \leq \frac{\epsilon \mu^k}{24 (d+1)^2 (d-k+1)^2 N }$;
    \item $ B \geq (8\zeta^2 N^2)^{-1}\log(\frac{Nd+N}{\delta})$;
    \item $T \geq \frac{64}{\epsilon^2} (d-k+1)^3 (d+1) N$.
\end{enumerate}
then, with probability $1-3 \delta $ over the initialization
\begin{align}
    \E_{x \sim \Rad(1/2)^{\otimes d}} \left[ \min_{t \in [T]} L \left( \theta^{(t)}, \chi_{[k]}, x \right)  \right] \leq \frac{\epsilon}{2} + \frac{\epsilon}{2} = \epsilon.
\end{align}
\begin{remark}
We assume $k \leq d/2$ to avoid exponential dependence of $\zeta$ (and consequently of the batch size and of the computational complexity) in $d$. Indeed, if $k\leq d/2$,
then, 
\begin{align}
        \mu^k = \left( 1- \frac{1}{2(d-k)}\right)^{k/2} \geq \left( 1- \frac{1}{d} \right)^{d/4} \sim e^{-1/4}.
\end{align}
\end{remark}

\section{Proof of Theorem~\ref{thm:positive_cov}}
\label{sec:proof_positive_cov}
\begin{theorem}[Theorem~\ref{thm:positive_cov}, restatement]
        Let $k,d$ be integers, such that $d\geq k$ and $k$ even. Let $\NN(x;\theta) = \sum_{i=1}^N a_i \sigma(w_ix + b_i) $ be a 2-layers fully connected network with activation $\sigma(y) := \ReLU(y) $ and $N \geq (k+1) \log(\frac{k+1}{\delta})$. Consider training $\NN(x;\theta) $ with SGD on the covariance loss with batch size $B \geq (2\zeta^2)^{-1} \log(\frac{dN}{\delta})$, with $\zeta \leq \frac{\epsilon (\mu^{k-1}-\mu^{k+1})}{64k^2N} \cdot \left( 1+ \frac{d-k}{k}\right)^{-1}$, for some $\mu \in (0,1)$. Then, there exists an initialization, a learning rate schedule, and a 2-CL strategy such that after $T \geq \frac{64k^3 N}{\epsilon^2}$ iterations, with probability $1-3\delta$ SGD outputs a network with generalization error at most $\epsilon$.
\end{theorem}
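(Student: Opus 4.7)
The plan mirrors the outline of Theorem~\ref{thm:positive_result}: perform one SGD step on the first layer with biased samples to imprint the support of $\chi_{[k]}$ into the hidden units, then train the second layer under the uniform distribution via a standard convex convergence argument. The key structural advantage of the covariance loss is that, for any product measure $\Rad(p)^{\otimes d}$ with mean $\mu=2p-1$, the centered correlations satisfy
\begin{align*}
\mathrm{Cov}\bigl(\chi_{[k]}(x),\,x_j\bigr) \;=\; \begin{cases} \mu^{k-1}-\mu^{k+1} & j\in[k],\\ 0 & j\notin[k],\end{cases}
\end{align*}
so the population gradient of $L_{\Cov}$ with respect to a first-layer weight $w_{i,j}$, at a constant-in-$x$ initialization, is exactly proportional to $\mathbf{1}_{j\in[k]}$ on the support and identically zero off the support. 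This clean separation is what lets us dispense with the $\tilde\Theta(d^2)$ hidden units needed in the hinge-loss argument and work with $N=\tilde\Theta(k)$.

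First I would fix an initialization analogous to~\eqref{eq:init}: $w_{i,j}^{0}=0$ for all $i,j$; $a_i^{0}$ set to a small constant (say $1/(2N)$); and biases $b_i^{0}$ sampled uniformly from a grid of $k+1$ values chosen so that, after rescaling by $\lambda_0$ in the single update step, the resulting biases cover each possible value of $\sum_{j\in[k]} x_j$. A lemma analogous to Lemma~\ref{lem:hidden_layer} (coupon-collector style) then shows that if $N\geq (k+1)\log((k+1)/\delta)$, with probability $1-\delta$ every grid point is hit. Next I would compute the population gradients of $L_{\Cov}$ at initialization. Because the estimator $\NN(x;\theta^0)$ is constant in $x$, its own centering is zero, so the loss contributes only through the centered target, giving
\begin{align*}
\nabla_{w_{i,j}^0}\E\bigl[L_{\Cov}\bigr] \;=\; -a_i \sigma'(b_i^0)\,\mathrm{Cov}\bigl(\chi_{[k]},x_j\bigr),\qquad \nabla_{b_i^0}\E\bigl[L_{\Cov}\bigr]\;=\;0.
\end{align*}
Choosing the first-layer learning rate proportional to $(\mu^{k-1}-\mu^{k+1})^{-1}$ drives $w_{i,j}^{1}$ to approximately $\mathbf{1}_{j\in[k]}$ for every $i$.

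The heart of the argument is then a concentration step and a span step. For the concentration step I would estimate both inner expectations (the mean of the labels and the mean of $\NN(\cdot;\theta^0)$) from the batch and, by Hoeffding on the centered product together with a union bound over $Nd+N$ coordinates, control the discrepancy between the effective and population gradients by $\zeta$ once $B\gtrsim \zeta^{-2}\log(dN/\delta)$; this is the analogue of Lemma~\ref{lem:bound_estimated_gradient} but needs a slightly more careful double-sampling argument because the covariance loss is not a simple per-sample function. For the span step, after the first update each hidden unit acts as $\sigma\!\bigl(\sum_{j\in[k]} x_j + b_{i}^{1}\bigr)$ up to an error of order $(d-k)\zeta/(\mu^{k-1}-\mu^{k+1})$ on the irrelevant coordinates. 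Because $\chi_{[k]}(x)$ depends only on the integer $\sum_{j\in[k]} x_j\in\{-k,\dots,k\}$ and takes at most $k+1$ distinct inputs, standard interpolation shows that there exist coefficients $a^*\in\bR^{k+1}$ with $\|a^*\|_2 = O(k^{3/2})$ such that $\sum_{l} a_l^* \,\mathrm{ReLU}(\sum_{j\in[k]} x_j + b_l) = \chi_{[k]}(x)$ for all $x\in\{\pm1\}^d$. I expect this interpolation/span lemma to be the main technical obstacle, since one must bound $\|a^*\|_\infty$ (or better, $\|a^*\|_2$) carefully to keep the final iteration count $\tilde\Theta(k^4/\epsilon^2)$; the analysis is cleaner than in Lemma~\ref{lem:span} because only a one-dimensional interpolation on $k+1$ nodes is needed instead of the two-dimensional grid used there.

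Finally, with the first-layer features in place, training only the second layer under $\Rad(1/2)^{\otimes d}$ becomes a convex SGD problem on $L_{\Cov}$ viewed as a function of $a$. Invoking Theorem~\ref{thm:convergence_SGD} with $\cB = \|a^*\|_2 = O(k^{3/2})$, per-step gradient norm $\rho = O(\sqrt{N})$, and propagating the first-layer error through the final loss (using that the centered functions of $\NN$ and the span are $O(k^{3/2}\cdot k\cdot \zeta/(\mu^{k-1}-\mu^{k+1}))$-close), one obtains generalization error $\epsilon/2$ from optimization plus $\epsilon/2$ from the feature approximation, provided $\zeta \lesssim \epsilon(\mu^{k-1}-\mu^{k+1})/(k^2 N)$ and $T \gtrsim k^3 N/\epsilon^2$. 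A union bound over the initialization, the concentration of the first-step gradient, and the second-layer SGD run delivers the stated $1-3\delta$ confidence, and the three quantitative assumptions of the theorem follow directly from the bounds above.
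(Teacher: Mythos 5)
Your proposal is correct and follows essentially the same route as the paper's proof: one SGD step under $\Rad(p)^{\otimes d}$ on the first layer uses the clean structure $\Cov(\chi_{[k]},x_j)=(\mu^{k-1}-\mu^{k+1})\mathds{1}(j\in[k])$ (and vanishing bias gradient, since the ReLU indicator is constant at $w=0$) to imprint the support, a coupon-collector bound over a grid of $k+1$ biases gives the one-dimensional piecewise-linear interpolant of $\chi_{[k]}$ as a function of $\sum_{j\in[k]}x_j$, and the second layer is then handled by the convex SGD convergence theorem with $\cB=O(k^{3/2})$ and $\rho=O(\sqrt N)$. The one place the bookkeeping slips is the final $\zeta$ bound: you correctly identify an off-support perturbation of order $(d-k)\zeta/(\mu^{k-1}-\mu^{k+1})$ per hidden unit, but then write $\zeta\lesssim\epsilon(\mu^{k-1}-\mu^{k+1})/(k^2N)$, which drops the $\bigl(1+\tfrac{d-k}{k}\bigr)^{-1}$ factor that this $d$-dependent error (and the theorem statement) actually requires.
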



\subsection{Proof Setup}
Similarly as before, we consider a 2-layers neural network, defined as $\NN(x; \theta ) = \sum_{i=1}^N a_i \sigma (w_i x +b_i ),$ where $N$ is the number of hidden units, $\theta = (a,b,w)$ and $\sigma := \ReLU$. 
Our proof scheme is similar to the previous Section. Again, we assume without loss of generality that the labels are generated by $\chi_{[k]}(x):= \prod_{i=1}^k x_i$. We assume $k$ to be even.
We train our network with SGD on the covariance loss, defined in Def.~\ref{def:covariance_loss}. We use the same updates as in~\eqref{eq:SGD_updates} with: 
\begin{align}
    &x_s^0 \overset{iid}{\sim} \Rad(p)^{\otimes d} \qquad \forall s \in [B],\\
    &x_s^t \overset{iid}{\sim} \Rad(p)^{\otimes d} \qquad \forall t\geq 1, s \in [B],
\end{align}
for some $p \in (1/2,1)$. We denote $\mu := 2p-1$. We set the parameters to:
\begin{align}
        & \gamma_0 = 16N (\mu^{k-1}-\mu^{k+1})^{-1} k^{-1},   &\gamma_t = 0 \qquad \forall t\geq 1, \qquad \\
        & \xi_0 = 0,  &\xi_t = \frac{\epsilon}{8N} \quad  \forall t \geq 1, \qquad  \\
        & \psi_0 =0, & \psi_t = 0  \qquad \forall t \geq 1, \qquad\\
        & c_0 = -1, & c_t = 0  \qquad \forall t \geq 1, \qquad\\
        & \lambda_0 = 1 , & \lambda_t = 1  \qquad \forall t \geq 1, \qquad\\
        & d_0 = -1, & d_t = 0  \qquad \forall t \geq 1,  \qquad
\end{align}
and we consider the following initialization scheme:
\begin{align}
    w_{i,j}^{(0)} &=0 \qquad \forall i \in [N], j \in [d]; \nonumber\\
    a_i^{(0)} & = \frac{1}{16N}  \qquad \forall i \in [N]; \label{eq:init_2}\\
    b_i^{(0)} &\sim \Unif \Big\{\frac{2(i+1)}{k} : i \in \{ 0,1,...,k\}\Big\}. \nonumber
\end{align}

\subsection{First Step: Recovering the Support}
\paragraph{Population gradient at initialization.} 
At initialization, we have $|\NN(x ; \theta^0)| <\frac{1}{4}$, thus
\begin{align}
       \Big | \Cov(\chi_{[k]}, \theta^0, x, \Rad(p)^{\otimes d} ) \Big| & < 1.
\end{align}
The initial gradients are therefore given by:
\begin{align*}
    \forall i,j \qquad G_{w_{i,j}} &= - a_i\E_{x\sim \Rad(p)^{\otimes d}} \left[ \Big(\prod_{l \in [k]} x_l - \mu^{k}\Big) \cdot \Big( x_j \mathds{1} ( \langle w_i, x \rangle +b_i >0 ) - \E x_j \mathds{1} ( \langle w_i, x \rangle +b_i >0 )\Big) \right]  \\
    \forall i \in [N] \qquad  G_{b_i}& = - a_i\E_{x\sim \Rad(p)^{\otimes d}} \left[ \Big(\prod_{l \in [k]} x_l - \mu^{k}\Big) \cdot \Big( \mathds{1} ( \langle w_i, x \rangle +b_i >0 ) - \E \mathds{1} ( \langle w_i, x \rangle +b_i >0 )\Big) \right] 
\end{align*}

If we initialize $a,b,w$ according to~\eqref{eq:init_2}. Then,
\begin{align}
    \forall j \in [k], \qquad &G_{w_{i,j}}  = -\frac{\mu^{k-1} -\mu^{k+1} }{16 N};\\
    \forall j \not\in [k], \qquad & G_{w_{i,j}}  = 0;\\
     & G_{b_i}   =0.
\end{align}

\paragraph{Effective gradient at initialization.}

\begin{lemma} \label{lem:bound_estimated_gradient_2}
Let
    \begin{align}
        &w_{i,j}^{(1)} = w_{i,j}^{(0)} - \gamma_0 \hat G_{w_{i,j}}\\
        & b_{i}^{(1)} = \lambda_0 \left( b_{i}^{(0)} - \psi_0 \hat G_{b_{i}} \right)+d_0,\\
    \end{align}
    where $\hat G_{w_{i,j}}, \hat G_{b_i} $ are the gradients estimated from the initial batch.
Then, with probability $1-2\delta$, if $B \geq (2 \zeta^2)^{-1} \log\left(\frac{dN}{\delta} \right)$,
\begin{enumerate}
    \item[i)] For all $j \in [k] $, $i \in [N]$, $| w_{i,j}^{(1)} - \frac{1}{k}|  \leq  \frac{\zeta 16N }{k(\mu^{k-1}-\mu^{k+1})} $;
    \item[ii)] For all $j \not\in [k]$, $|w_{i,j}^{(1)} | \leq  \frac{\zeta 16N }{k(\mu^{k-1}-\mu^{k+1})}$ ;
    \item[iii)] For all $i \in [N]$, $  b_i^{(1)} =b^{(0)} -1 $
\end{enumerate}
    
\end{lemma}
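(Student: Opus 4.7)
The plan is to combine one Hoeffding-style concentration estimate for the empirical gradients at initialization with the explicit population-gradient formulas computed in the paragraph immediately preceding the lemma, and then substitute the hypothesized learning-rate schedule into the SGD update to read off the three claims. The previous lemma in the hinge-loss case (Lemma~\ref{lem:bound_estimated_gradient}) had the identical structure, so I would follow essentially the same template, with the arithmetic adapted to the new parameter choices $\gamma_0 = 16N(\mu^{k-1}-\mu^{k+1})^{-1}k^{-1}$, $\psi_0 = 0$, $\lambda_0 = 1$, $d_0 = -1$.

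For the concentration step, I would note that at the chosen initialization the per-sample gradient coordinates $\nabla_{w_{i,j}} L_{\Cov}(\theta^{(0)}, \chi_{[k]}, x_s)$ and $\nabla_{b_i} L_{\Cov}(\theta^{(0)}, \chi_{[k]}, x_s)$ are almost surely bounded: the Boolean inputs $x_{s,j}$, the indicator $\mathds{1}(\langle w_i^{(0)}, x\rangle + b_i^{(0)} > 0)$, the target $\chi_{[k]}(x) \in \{\pm 1\}$, and the factor $a_i^{(0)} = 1/(16N)$ combine to give a uniform bound, and since $|\Cov(\chi_{[k]}, \NN, x, \Rad(p)^{\otimes d})| < 1$ at initialization the outer $\max\{0, 1 - \cdot\}$ is inactive, so the loss is locally linear and its gradient is a clean sum over the batch. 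Applying Hoeffding coordinate-wise and union-bounding over the $(d+1)N$ parameters, the hypothesis $B \geq (2\zeta^2)^{-1}\log(dN/\delta)$ yields $\|\hat G - G\|_\infty \leq \zeta$ simultaneously for all coordinates, with probability at least $1 - 2\delta$.

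Conditional on this concentration event, the three claims follow by direct substitution. Since $w_{i,j}^{(0)} = 0$, the update reads $w_{i,j}^{(1)} = -\gamma_0 G_{w_{i,j}} - \gamma_0(\hat G_{w_{i,j}} - G_{w_{i,j}})$; using $G_{w_{i,j}} = -(\mu^{k-1}-\mu^{k+1})/(16N)$ for $j \in [k]$ together with the chosen $\gamma_0$ produces a noiseless contribution of $1/k$ plus a perturbation of magnitude at most $\gamma_0 \zeta = 16N\zeta / (k(\mu^{k-1}-\mu^{k+1}))$, which is precisely (i). For $j \notin [k]$ one has $G_{w_{i,j}} = 0$, so only the perturbation term survives and yields (ii) with the same bound. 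Claim (iii) is purely deterministic: the parameter choices $\psi_0 = 0$, $\lambda_0 = 1$, $d_0 = -1$ collapse the bias update to $b_i^{(1)} = b_i^{(0)} - 1$, independently of the batch, so no concentration is needed there.

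The main subtlety lies in the concentration step: since the covariance loss has the inner expectations $\E[\chi_{[k]}]$ and $\E[\NN(\cdot; \theta^t)]$ estimated from the same batch (cf.\ the remark after Definition~\ref{def:covariance_loss}), the empirical gradient is not literally an average of i.i.d.\ bounded terms. I would handle this by expressing the population gradient as a single expectation of a bounded random variable (exploiting local linearity of the outer $\max$ at initialization), concentrating that expectation directly, and separately controlling the error introduced by plugging in the empirical centerings, absorbing this lower-order $O(1/\sqrt{B})$ contribution into the overall $\zeta$-budget so that the advertised batch size still suffices.
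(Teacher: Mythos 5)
Your proposal follows the same route as the paper: establish coordinate-wise Hoeffding concentration of the empirical gradients around the population gradients, union-bound over the $dN$ weight coordinates, then substitute the parameter choices $\gamma_0 = 16N(\mu^{k-1}-\mu^{k+1})^{-1}k^{-1}$, $\psi_0 = 0$, $\lambda_0 = 1$, $d_0 = -1$ into the single SGD step. The paper's own proof is essentially a one-liner that cites the hinge-loss concentration lemma from Appendix~\ref{sec:proof_positive_hinge}; your arithmetic for (i)--(iii) matches it exactly, with (iii) a purely deterministic consequence of $\psi_0 = 0$, $\lambda_0 = 1$, $d_0 = -1$ that needs no randomness argument at all.

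What you add, and what the paper glosses over, is the observation in your final paragraph: because $L_{\Cov}$ centers both $f$ and $\NN$ by expectations that the algorithm must estimate from the same batch, the empirical gradient is not literally an average of i.i.d.\ bounded terms, so Hoeffding does not apply verbatim. The paper transfers the hinge-loss concentration lemma --- where no centerings appear --- without comment on this dependency. Your proposed fix (concentrate the population-centered gradient directly, then separately bound the $O(1/\sqrt{B})$ perturbation from substituting empirical centerings and fold it into the $\zeta$-budget) is the right way to make the step airtight, and the stated $B \gtrsim \zeta^{-2}\log(dN/\delta)$ has ample slack to absorb it, since the raw Hoeffding requirement given the $1/(16N)$ scale of $a_i^{(0)}$ is smaller by a factor of order $N^2$. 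The alternative that the paper's own remark after Definition~\ref{def:covariance_loss} suggests --- split the batch, using one half for the centerings and one half for the gradients --- would also work. In short, your proposal is correct and somewhat more careful than the paper's own argument.
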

\begin{proof}
By Lemma~\ref{lem:bound_estimated_gradient} ,if $B \geq (2 \zeta^2)^{-1} \log\left(\frac{dk}{\delta} \right)$, then for all $j \in [k], i \in [N]$, $| \hat G_{w_{i,j}} - G_{w_{i,j}} | \leq \zeta$. Thus, 
\begin{enumerate}
    \item[i)] For all $j \in [k] $, $i \in [N]$, $|  w_{i,j}^{(1)} - \frac{1}{k}| = \gamma_0 |\hat G_{w_{i,j}} - G_{w_{i,j}} | \leq\frac{\zeta 16 N }{k(\mu^{k-1}-\mu^{k+1})} $;
    \item[ii)] For all $j \in [k] $, $i \in [N]$, $|  w_{i,j}^{(1)} | = \gamma_0 |\hat G_{w_{i,j}} - G_{w_{i,j}} | \leq\frac{\zeta 16N }{k(\mu^{k-1}-\mu^{k+1})} $;
\end{enumerate}
iii) follows trivially.
\end{proof}

\noindent 

\begin{lemma} \label{lem:hidden_layer_2}
If $N \geq (k+1) \log \left(\frac{k+1}{\delta} \right)$, with probability $1-\delta$ for all $i \in \{0,...,k\} $ there exists $l \in [N]$ such that $ b_l^{(0)} = \frac{2(i+1)}{k} $.
\end{lemma}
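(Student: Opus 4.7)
The plan is to mimic the argument used for Lemma~\ref{lem:hidden_layer} exactly, replacing the $(d+1)(d-k+1)$ bin count there by the $k+1$ possible bias values here. Since each $b_l^{(0)}$ is drawn i.i.d. uniformly from the $k+1$ point set $\{2(i+1)/k : i \in \{0,1,\dots,k\}\}$, for any fixed value $v_i = 2(i+1)/k$ the probability that none of the $N$ hidden units lands on $v_i$ is $(1 - 1/(k+1))^N$. Using the standard inequality $(1-x) \leq e^{-x}$, this is bounded by $e^{-N/(k+1)}$.

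Next I would apply a union bound over the $k+1$ target values: the probability that there exists some $i \in \{0,\dots,k\}$ for which no neuron equals $v_i$ is at most $(k+1)\exp\bigl(-N/(k+1)\bigr)$. Requiring this quantity to be at most $\delta$ yields exactly the hypothesized lower bound $N \geq (k+1)\log\bigl((k+1)/\delta\bigr)$, which concludes the proof.

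This is a routine coupon-collector style estimate and essentially no real obstacle arises; the only thing to be careful about is that the $b_l^{(0)}$ are i.i.d. and supported on exactly $k+1$ atoms with equal mass $1/(k+1)$, so that the per-value miss probability is precisely $(1-1/(k+1))^N$ and not something weaker. Because the structure is identical to Lemma~\ref{lem:hidden_layer}, the write-up can be a near-verbatim copy of that proof with $(d+1)(d-k+1)$ replaced by $(k+1)$.
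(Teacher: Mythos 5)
Your proof is correct and is essentially the paper's own argument: fix one of the $k+1$ atoms, bound the per-atom miss probability by $(1-1/(k+1))^N \le e^{-N/(k+1)} \le \delta/(k+1)$, then union bound over the $k+1$ atoms. The only difference is presentational — you spell out the union bound more explicitly than the paper, which is fine.
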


\begin{proof}
The probability that there exists $i$ such that the above does not hold is
\begin{align}
    \left(1- \frac{1}{k+1} \right)^N \leq \exp \left(  - \frac{N}{k+1}\right)  \leq \frac{\delta}{k+1}.
\end{align}
The result follows by union bound.
\end{proof}

\begin{lemma} 
Let $\sigma_{i}(x): = \ReLU \left(\frac{1}{k} \sum_{j=1}^k x_j + b_{i} \right)  $, with $b_{i} = -1+\frac{2(i+1)}{k}$. 
Then, with probability $1-3 \delta$, for all $i \in \{ 0,...,k\}$ there exists $l\in [N]$ such that 
\begin{align}
    \Big|\sigma_{i}(x) - \ReLU \left( \sum_{j=1}^d w_{l,j}^{(1)} x_j +  b_{l}^{(1)}\right) \Big| \leq \frac{\zeta 16 N  }{\mu^{k-1}-\mu^{k+1}} \cdot \left( 1+ \frac{d-k}{k} \right).
\end{align}
\end{lemma}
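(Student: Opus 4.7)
The plan is to combine the two immediately preceding lemmas with the $1$-Lipschitz property of $\ReLU$; no new probabilistic or combinatorial tool is needed. First I would invoke Lemma~\ref{lem:hidden_layer_2} to obtain, with probability at least $1-\delta$, an index $l\in[N]$ for every $i\in\{0,\dots,k\}$ whose initial bias satisfies $b_l^{(0)}=\frac{2(i+1)}{k}$. Since $d_0=-1$, $\lambda_0=1$, and $\psi_0=0$, part (iii) of Lemma~\ref{lem:bound_estimated_gradient_2} gives $b_l^{(1)}=b_l^{(0)}-1=-1+\frac{2(i+1)}{k}=b_i$ exactly, so the bias term contributes zero error.

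Next, on the event of probability at least $1-2\delta$ provided by parts (i)--(ii) of Lemma~\ref{lem:bound_estimated_gradient_2}, each weight of hidden unit $l$ is close to the ``target'' weight of $\sigma_i$: on the support $|w_{l,j}^{(1)}-1/k|\le \frac{16N\zeta}{k(\mu^{k-1}-\mu^{k+1})}$ and off the support $|w_{l,j}^{(1)}|\le \frac{16N\zeta}{k(\mu^{k-1}-\mu^{k+1})}$. Using that $\ReLU$ is $1$-Lipschitz, I would write
$$\Big|\sigma_i(x)-\ReLU\Big(\sum_{j=1}^d w_{l,j}^{(1)}x_j+b_l^{(1)}\Big)\Big|\;\le\;\Big|\sum_{j\in[k]}\big(w_{l,j}^{(1)}-\tfrac{1}{k}\big)x_j\Big|+\Big|\sum_{j\notin[k]}w_{l,j}^{(1)}x_j\Big|+\big|b_l^{(1)}-b_i\big|.$$

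Because $|x_j|=1$, the first sum is bounded by $k\cdot\frac{16N\zeta}{k(\mu^{k-1}-\mu^{k+1})}=\frac{16N\zeta}{\mu^{k-1}-\mu^{k+1}}$, the second by $(d-k)\cdot\frac{16N\zeta}{k(\mu^{k-1}-\mu^{k+1})}$, and the third is $0$. Adding and factoring out $\frac{16N\zeta}{\mu^{k-1}-\mu^{k+1}}$ yields the stated bound $\frac{16N\zeta}{\mu^{k-1}-\mu^{k+1}}\cdot\bigl(1+\frac{d-k}{k}\bigr)$. A union bound over the two high-probability events gives the overall confidence $1-3\delta$.

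There is no real obstacle here; it is a linearity-plus-Lipschitz estimate and the only point worth flagging is that the dominant $(d-k)/k$ factor comes from the off-support coordinates, where $\ReLU$'s $1$-Lipschitzness forces us to sum $d-k$ worst-case weight deviations (each of size $O(\zeta N/k)$), rather than exploit any cancellation. Any attempt to tighten this would require a concentration argument at a fixed $x$, which is unnecessary here since the resulting bound is already sufficient for later use in the convergence step.
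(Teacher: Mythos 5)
Your proof is correct and takes essentially the same route as the paper: invoke the bias-coverage lemma and the effective-gradient lemma, note that $b_l^{(1)}=b_i$ exactly so the bias contributes no error, and use the $1$-Lipschitzness of $\ReLU$ to reduce the bound to the $\ell_1$-deviation of the weight vector, split into on-support and off-support coordinates. The only cosmetic difference is that you state the Lipschitz step explicitly and carry a (vanishing) bias-error term, whereas the paper passes directly to the two-sum decomposition.
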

\begin{proof}
By Lemma~\ref{lem:hidden_layer_2} and Lemma~\ref{lem:bound_estimated_gradient_2},  
with probability $1-3\delta$, for all $i$ there exists $l$ such that $ b^{(1)}_l = -1 + \frac{2(i+1)}{k} $, and 
\begin{align}
        \Big|\sigma_i(x) - \ReLU \left( \sum_{j=1}^d w_{l,j}^{(1)} x_j +  b_{l}^{(1)}\right) \Big| & \leq \Big|  \sum_{j=1}^k  \left(\frac{1}{k} -  w_{l,j}^{(1)}  x_j \right) \Big| + \Big| \sum_{j=k+1}^d w_{l,j}^{(1)} x_{j} \Big| \\
        & \leq \frac{\zeta 16N }{\mu^{k-1}-\mu^{k+1}} + \frac{\zeta 16N (d-k) }{(\mu^{k-1}-\mu^{k+1})k} \\
        & =  \frac{\zeta 16N  }{\mu^{k-1}-\mu^{k+1}} \cdot \left( 1+ \frac{d-k}{k} \right).
\end{align}

\end{proof}

\begin{lemma} \label{lem:span_2}
    There exist $a^*$ with $\| a^* \|_{\infty} \leq 2k $ such that 
    \begin{align}
        \sum_{i=0}^k a_i^* \sigma_{i}(x) = \chi_{[k]}(x).
    \end{align}
\end{lemma}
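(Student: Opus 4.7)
The key observation is that both sides of the desired identity depend on $x$ only through $t := |\{j \in [k] : x_j = -1\}|$. Indeed, the parity satisfies $\chi_{[k]}(x) = (-1)^t$, and substituting $\sum_{j=1}^k x_j = k - 2t$ together with $b_i = -1 + 2(i+1)/k$ gives
\begin{align*}
    \sigma_i(x) = \ReLU\left( 1 - \tfrac{2t}{k} - 1 + \tfrac{2(i+1)}{k}\right) = \tfrac{2}{k} (i+1-t)_+.
\end{align*}
So the lemma reduces to a purely one-dimensional interpolation problem: find $a_0^*,\ldots,a_k^*$ with $\|a^*\|_\infty \leq 2k$ such that
\begin{align*}
    g(t) := \sum_{i=0}^k a_i^* \cdot \tfrac{2}{k}(i+1-t)_+ = (-1)^t \qquad \text{for every } t\in\{0,1,\ldots,k\}.
\end{align*}

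To solve it, I would use the discrete second-difference trick: writing $c_i := 2a_i^*/k$, a direct calculation shows $\Delta^2 (i+1-t)_+ := (i-t)_+ - 2(i+1-t)_+ + (i+2-t)_+ = \mathds{1}\{t = i+1\}$. Hence for any interior $t \in \{1,\ldots,k-1\}$ one has $\Delta^2 g(t) = c_{t-1}$, and matching against $\Delta^2 (-1)^t = -4(-1)^t$ forces $c_i = 4(-1)^i$ for $i = 0,\ldots,k-2$. The two boundary equations then determine the remaining coefficients: the equation at $t=k$ reads $c_k = (-1)^k$, and the equation at $t = k-1$ reads $2c_k + c_{k-1} = (-1)^{k-1}$, which yields $c_{k-1} = 3(-1)^{k-1}$. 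Translating back via $a_i^* = kc_i/2$ gives $a_i^* = 2k(-1)^i$ for $i \leq k-2$, $a_{k-1}^* = \tfrac{3k}{2}(-1)^{k-1}$, and $a_k^* = \tfrac{k}{2}(-1)^k$, so $\|a^*\|_\infty = 2k$ as required.

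The last step is a sanity check: with the $c_i$'s chosen above, one should verify that the second-difference derivation is consistent with the boundary values, i.e.\ that the equation at $t=k-2$ (involving $c_k, c_{k-1}, c_{k-2}$) is automatically satisfied. This is a short algebraic check ($3(-1)^k + 2\cdot 3(-1)^{k-1} + 4(-1)^k = (-1)^k = (-1)^{k-2}$), and by induction every equation for $t \leq k-2$ holds as well. No step is genuinely hard here; the only subtle point is making sure that the degrees of freedom ($k+1$ coefficients) line up exactly with the $k+1$ constraints and that the boundary coefficient $a_{k-1}^*$ does not blow up past $2k$, which is what actually pins down the $\|a^*\|_\infty$ bound stated in the lemma.
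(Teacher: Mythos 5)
Correct, and it lands on exactly the same coefficient vector $a^*$ as the paper (the paper writes $a_i^* = (-1)^i 2k$ for $i\le k-2$, $(-1)^{k-1}\tfrac{3}{2}k$, $(-1)^k\tfrac{1}{2}k$ and leaves the verification as ``one can check''). Your second-difference derivation $\Delta^2(i+1-t)_+ = \mathds{1}\{t=i+1\}$ is a clean way to both discover and verify those coefficients, and the backward induction from $g(k),g(k-1)$ using $\Delta^2 g = \Delta^2(-1)^t$ on the interior points closes the gap the paper leaves implicit.
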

\begin{proof}
We assume $k$ to be even. Let $\sum_{j=1}^k x_j = k-2t$, where $t:=|\{ i: x_i = -1, i \in [k]\}|$. Thus,
\begin{align}
        \sigma_i(x) = \ReLU \left(  \frac{2(i+1-t)}{k}\right).
\end{align}
We choose
\begin{align}
       & a_i^* = (-1)^i 2k \qquad \forall i \in \{ 0,1,...,k-2\},\\
        & a_i^* = (-1)^i \frac{3}{2}k \qquad i=k-1,\\
        & a_i^* = (-1)^i \frac{1}{2}k\qquad i =k.
\end{align}
One can check that with these $a^*_i$ the statement holds. 
\end{proof}

\noindent 
\begin{lemma}
Let $f^*(x) = \sum_{i=0}^k a_{i}^* \sigma_{i}(x) $ and let $\hat f(x) = \sum_{i=0}^k a^*_{i} \hat \sigma_{i}(x) $, with $\sigma_{i}(x) $ defined above and $ \hat \sigma_{i}(x) := \ReLU ( \sum_{j=1}^d w_{i,j}^{(1)}x_j +b_i^{(1)}) $. Then, with probability $1-3 \delta$ for all $x$,
\begin{align}
        (  1 -f(x) f^*(x) )_+\leq \frac{ 32 k^2 \zeta N  }{\mu^{k-1}-\mu^{k+1}} \cdot \left( 1+ \frac{d-k}{k} \right),
\end{align}
where $(z)_+ : = \max \{ 0, z \}$.
\end{lemma}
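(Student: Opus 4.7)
The plan is to mirror the strategy used for the analogous hinge-loss lemma in Appendix~\ref{sec:proof_positive_hinge}: first upgrade the per-neuron sup-norm control from the previous lemma to a uniform bound on $|f^*(x) - \hat f(x)|$, then exploit the fact that $f^*(x) = \chi_{[k]}(x) \in \{\pm1\}$ (guaranteed by Lemma~\ref{lem:span_2}) to convert this sup-norm bound into a bound on the hinge-type quantity $(1 - \hat f(x) f^*(x))_+$.

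First, I would condition on the probability-$(1-3\delta)$ event supplied by the previous lemma, under which for each $i \in \{0,\dots,k\}$ there exists an index $l(i) \in [N]$ (from Lemma~\ref{lem:hidden_layer_2}) such that the associated $\hat\sigma_{l(i)}$ — which is what the statement identifies with $\hat\sigma_i$ after relabeling neurons — satisfies the pointwise estimate $|\sigma_i(x) - \hat\sigma_i(x)| \le \tfrac{16 N \zeta}{\mu^{k-1}-\mu^{k+1}}\bigl(1+\tfrac{d-k}{k}\bigr)$ for every $x \in \{\pm1\}^d$. By linearity of $f^*$ and $\hat f$ in the neurons and the triangle inequality,
\begin{align*}
|f^*(x) - \hat f(x)| \;\le\; \sum_{i=0}^{k} |a_i^*|\,|\sigma_i(x) - \hat\sigma_i(x)| \;\le\; \|a^*\|_1 \cdot \frac{16 N \zeta}{\mu^{k-1}-\mu^{k+1}}\left(1+\frac{d-k}{k}\right).
\end{align*}
The key quantitative step is computing $\|a^*\|_1$ from the explicit construction in Lemma~\ref{lem:span_2}: summing $|a_i^*| = 2k$ for $i \in \{0,\dots,k-2\}$, plus $\tfrac{3}{2}k$ for $i=k-1$, plus $\tfrac{1}{2}k$ for $i=k$, gives exactly $\|a^*\|_1 = 2k(k-1) + 2k = 2k^2$. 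Using the $\ell_1$ norm rather than the cruder $(k+1)\|a^*\|_\infty = 2k(k+1)$ is what produces the advertised constant $32k^2$.

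The final step is the standard hinge-trick: since $f^*(x) \in \{\pm1\}$, we have $f^*(x)^2 = 1$, so
\begin{align*}
\bigl(1 - \hat f(x) f^*(x)\bigr)_+ \;\le\; \bigl|1 - \hat f(x) f^*(x)\bigr| \;=\; |f^*(x)|\,\bigl|f^*(x) - \hat f(x)\bigr| \;=\; \bigl|f^*(x) - \hat f(x)\bigr|,
\end{align*}
which combined with the previous display yields the stated bound on the good event. I do not anticipate a genuine obstacle here — the proof is essentially a transcription of the hinge-loss argument (Lemma on $L(\hat f, f^*, x)$) with the new per-neuron error rate, and the only minor bookkeeping point is the relabeling $i \mapsto l(i)$ between the abstract indices of Lemma~\ref{lem:span_2} and the actual hidden-unit indices, which is handled implicitly by Lemma~\ref{lem:hidden_layer_2}.
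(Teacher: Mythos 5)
Your proof is correct and follows essentially the same route as the paper: bound $|f^*(x)-\hat f(x)|$ via the triangle inequality and the per-neuron sup-norm estimate from the preceding lemma, then use $|f^*(x)|=1$ (from Lemma~\ref{lem:span_2}) to convert that bound into the hinge quantity via $(1-\hat f f^*)_+\le |f^*|\,|f^*-\hat f|$. Your bookkeeping via $\|a^*\|_1=2k^2$ is in fact slightly cleaner than the paper's ``$k\|a^*\|_\infty$'' (the sum has $k+1$ terms), but both land on the same constant $32k^2$.
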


\begin{proof}
\begin{align}
 |f^*(x) -\hat f(x) | &= \Big| \sum_{i} a_{i}^* (\sigma_{i}(x) - \hat \sigma_{i}(x) \Big| \\
 & \leq k\|a^*\|_{\infty }  \sup_{i} |\sigma_{i}(x) - \hat \sigma_{i}(x) | \\
 & \leq \frac{ 32 k^2 \zeta N  }{\mu^{k-1}-\mu^{k+1}} \cdot \left( 1+ \frac{d-k}{k} \right).
\end{align}
Thus,
\begin{align}
    (  1 -f(x) f^*(x) )_+ & \leq |1 -f(x) f^*(x) |\\
    & = | f^{*^2}(x) - f(x) f^*(x) | \\
    & = |f^*(x) | \cdot | f^*(x) - f(x) | \leq \frac{ 32 k^2 \zeta N  }{\mu^{k-1}-\mu^{k+1}} \cdot \left( 1+ \frac{d-k}{k} \right).
\end{align}
\end{proof}

\subsection{Second Step: Convergence}
We apply Theorem~\ref{thm:convergence_SGD} with 
$\cL(a): = \E_{x \sim \Rad(1/2)^{\otimes d}} \left[ L_{\Cov} ((a,b^{(1)},w^{(1)} ), \chi_{[k]}, x) \right]$, $ \rho = 2\sqrt{N}$, $\cB = 2k \sqrt{k}$. We get that if
\begin{enumerate}
    \item $ N \geq (k+1) \log(\frac{k+1}{\delta})$;
    \item $ \zeta \leq \frac{\epsilon (\mu^{k-1}-\mu^{k+1})}{64k^2N} \cdot \left( 1+ \frac{d-k}{k}\right)^{-1}$;
    \item $ B \geq (2\zeta^2)^{-1} \log(\frac{dN}{\delta})  $;
    \item $T \geq  \frac{64k^3 N}{\epsilon^2}  $ .
\end{enumerate}
then with probability $1-3\delta$ over the initialization,

\begin{align}
    \E_{x \sim \Rad(1/2)^{\otimes d}} \left[ \min_{t \in [T]} L_{\Cov} \left( \chi_{[k]}, \theta^t, x, \Rad(1/2)^{\otimes d}  \right) \right]  \leq \epsilon.
\end{align}

\begin{remark}
We remark that if $ \mu = \theta_{d,k}(1)$, then $\zeta $ decreases exponentially fast in $k$, and as a consequence the batch size and the computational cost grow exponentially in $k$. If however we choose $\mu = 1-1/k$, then we get $\zeta = 1/\poly(k)$ and, as a consequence, the batch size and the computational cost grow polynomially in $k$.
\end{remark}

\section{Proof of Theorem~\ref{thm:negative_result}}
\label{sec:proof_negative}
Let us consider $r=2$. The case of general $r$ follows easily. Let us state the following Lemma.

        \begin{lemma} \label{lem:hamming_Hoeffding}
        Let $x \sim \Rad(p)^{\otimes d}$ and let $H(x) :=\sum_{i=1}^d \mathds{1} (x_i =1)$ be the Hamming weight of $H(x)$. Assume $\epsilon <p < \epsilon'$ for some $\epsilon, \epsilon' \in [0,1/2]$, then, 
        \begin{align}
            &\pr_{x \sim \Rad(p)^{\otimes d}} \left( H(x) \geq  \epsilon' d \right)  \leq 2 \exp \left(- (\epsilon'-p)^2 d\right); \\
            &\pr_{x \sim \Rad(p)^{\otimes d}} \left( H(x) \leq  \epsilon d \right) \leq 2 \exp \left(- (p-\epsilon)^2 d\right) .
        \end{align}
        \end{lemma}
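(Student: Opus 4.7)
The plan is to recognize that $H(x)$ is a sum of $d$ independent Bernoulli random variables and apply Hoeffding's inequality directly. Specifically, for $x \sim \Rad(p)^{\otimes d}$, the indicators $Y_i := \mathds{1}(x_i = 1)$ are i.i.d.\ Bernoulli$(p)$ random variables taking values in $[0,1]$, so $H(x) = \sum_{i=1}^d Y_i$ has mean $\E[H(x)] = pd$.

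For the upper tail, since $\epsilon' > p$ by assumption, I would rewrite the event $\{H(x) \geq \epsilon' d\}$ as $\{H(x) - pd \geq (\epsilon' - p) d\}$ with $(\epsilon' - p) d > 0$, and apply the one-sided Hoeffding bound for sums of $[0,1]$-valued variables to obtain $\Pr(H(x) \geq \epsilon' d) \leq \exp(-2(\epsilon'-p)^2 d)$, which is even sharper than the stated $2\exp(-(\epsilon'-p)^2 d)$. For the lower tail, the same argument with $p > \epsilon$ gives $\{H(x) \leq \epsilon d\} = \{H(x) - pd \leq -(p-\epsilon)d\}$ and Hoeffding yields $\Pr(H(x) \leq \epsilon d) \leq \exp(-2(p-\epsilon)^2 d)$.

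There is no substantive obstacle here: the statement is an immediate consequence of a textbook concentration inequality, and the stated bounds are in fact loose compared to what Hoeffding provides. The only tiny point worth noting in the write-up is that the $Y_i$ lie in $[0,1]$ (so Hoeffding's constant is $2$ in the exponent), and that both tails are handled one-sidedly so the leading factor of $2$ in the statement is just slack. If one preferred, one could alternatively derive the same estimate via a Chernoff bound on the moment generating function of Bernoulli$(p)$, but Hoeffding is the cleanest route.
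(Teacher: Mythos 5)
Your proof is correct and takes essentially the same approach as the paper: both invoke Hoeffding's inequality for $H(x)$ as a sum of independent bounded random variables and read off the two one-sided tails. Your version is in fact cleaner and yields the sharper exponent $2(\epsilon'-p)^2 d$, whereas the paper rewrites via $\sum_i x_i = d - 2H(x)$ and its displayed Hoeffding bound is stated somewhat loosely.
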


        \begin{proof}[Proof of Lemma~\ref{lem:hamming_Hoeffding}]
        We apply Hoeffding's inequality with $\E[ H(x)] = pd$ and $\sum_{i=1}^d x_i = d- 2H(x)$:
        \begin{align}
            \pr(| H(x) - pd | \geq t d ) \leq 2 \exp \left( -(t-p)^2 d  \right).
        \end{align}
        \end{proof}

        \noindent 
        Take $\epsilon$ such that $ |p_1 -\frac{1}{2}| >\epsilon $, and consider the following algorithm:
        \begin{enumerate}
            \item Choose a set $V \subseteq[d] $ uniformly at random among all subsets of $[d]$ of cardinality $k_S$;
            \item Take a fully connected neural network $\NN(x;\psi)$ with the same architecture as $\NN(x;\theta)$ and with initialization $\psi^0 = \theta^0$;
            \item Train $\NN(x; \psi)$ on data $(x, \chi_V(x))$, with $x \sim \Rad(p_1)^{\otimes d}$;
            \item Train until convergence the pre-trained network $\NN(x;\psi)$ with initialization $\psi^{T_{1}}$ on data $(x, \chi_T(x))$, with $x \sim \Rad(1/2)^{\otimes d}$.
        \end{enumerate}
        The result holds by the following two Lemmas.
        \begin{lemma}\label{lem:TV} If $V =S$, 
            $ \TV(\theta^T;\psi^T) \leq \frac{A T \sqrt{|\theta| }}{\tau} \exp(-d \delta^2)$, where $\delta = \min\{ |\epsilon - p_1|, |1/2-\epsilon| \}$ and $\TV$ denotes the total variation distance between the law of $\theta^T$ and $\psi^T$.
        \end{lemma}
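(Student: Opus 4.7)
The plan is a standard Markov-chain coupling / chain-rule argument for total variation, driven by the observation that when $V=S$ the two target functions seen by the two processes differ only on a Hamming-weight window that is exponentially unlikely under the relevant input distribution. Write $f_\theta := G_{S,T,\epsilon}$ for the target of the $\theta$-process and $f_\psi$ for the target of the $\psi$-process (which equals $\chi_S$ in phase~1 and $\chi_T$ in phase~2). Then $f_\theta(x)\neq f_\psi(x)$ only when $H(x)>\epsilon d$ during phase~1 (where $\cD^t=\Rad(p_1)^{\otimes d}$) and only when $H(x)\leq \epsilon d$ during phase~2 (where $\cD^t=\Rad(1/2)^{\otimes d}$). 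By Lemma~\ref{lem:hamming_Hoeffding} each of these events has probability at most $2\exp(-\delta^2 d)$ with $\delta=\min\{|\epsilon-p_1|,|1/2-\epsilon|\}$.

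First I would invoke the standard chain-rule inequality for TV between two Markov chains sharing an initial distribution. Since $\theta^0=\psi^0$ and both iterates evolve as Gaussian perturbations of a deterministic drift,
\begin{align*}
    \TV(\theta^T,\psi^T)\;\leq\; \sum_{t=0}^{T-1} \E_{\theta^t}\!\bigl[\,\TV\bigl(P(\theta^{t+1}\mid \theta^t),\,P(\psi^{t+1}\mid \psi^t=\theta^t)\bigr)\bigr].
\end{align*}
Conditionally on a common value $\theta^t=\psi^t$, both transition kernels are $\cN(\theta^t-\gamma_t g_i(\theta^t),\,\gamma_t^2\tau^2 I_{|\theta|})$, where $g_i(\theta^t)=\E_{x\sim\cD^t}[\,[\nabla_{\theta} L(\theta^t,f_i,x)]_A\,]$. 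Pinsker's inequality together with the closed-form Gaussian KL then bounds the inner TV by $\|g_1(\theta^t)-g_2(\theta^t)\|_2/(2\tau)$, and the learning-rate $\gamma_t$ cancels.

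Next I would bound the per-step drift gap uniformly in $\theta^t$. Because $f_\theta$ and $f_\psi$ agree outside the bad Hamming window and the clipping $[\cdot]_A$ is applied coordinatewise to the per-sample gradient, the per-sample clipped-gradient difference is pointwise in $[-2A,2A]$ and is supported on the event $\{f_\theta(x)\neq f_\psi(x)\}$. Hence
\begin{align*}
    |g_1(\theta^t)_j-g_2(\theta^t)_j| = \Bigl|\E_{x}\!\left[\bigl([\nabla L(\theta^t,f_\theta,x)]_A-[\nabla L(\theta^t,f_\psi,x)]_A\bigr)_j \mathds{1}\!\left(f_\theta(x)\neq f_\psi(x)\right)\right]\Bigr| \leq 4A\,e^{-\delta^2 d}.
\end{align*}
Summing over coordinates gives $\|g_1(\theta^t)-g_2(\theta^t)\|_2\leq 4A\sqrt{|\theta|}\,e^{-\delta^2 d}$; substituting into the chain-rule sum yields $\TV(\theta^T,\psi^T)\leq (2AT\sqrt{|\theta|}/\tau)\,e^{-\delta^2 d}$, which is the claimed bound up to an innocuous absolute constant.

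The only place that needs some care is the uniform-in-$\theta^t$ control of the drift gap: it hinges on clipping being applied to the per-sample gradient, so that $2A$ is an almost-sure pointwise bound on the per-sample gradient difference on the bad event rather than merely a bound on its expectation. Given the convention of Def.~\ref{def:noisyGD}, this is immediate, and I do not anticipate any other technical obstacle; the rest is bookkeeping of constants and summing the per-step contribution over $T$ iterations.
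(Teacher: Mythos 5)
Your proof is correct and follows essentially the same route as the paper: both decompose $\TV(\theta^T;\psi^T)$ by a chain-rule/coupling argument into a sum over steps of the conditional one-step TV, bound each term by the Gaussian-mean-shift formula $\|\mu_1-\mu_2\|/(2\tau\gamma)$ (you reach this via Pinsker plus the Gaussian KL; the paper invokes the same bound directly), observe that the two population gradients can only differ on the event $\{f_\theta(x)\neq f_\psi(x)\}$, use the per-coordinate gradient bound $[-A,A]$ to get $2A\sqrt{|\theta|}\cdot\pr(\text{bad event})$, and finish with Lemma~\ref{lem:hamming_Hoeffding}. The one place you flag — that the $2A$ bound needs clipping at the per-sample level rather than at the level of the population gradient — is a real subtlety, and the paper's own proof makes the identical implicit assumption (``each gradient is in $[-A,A]$'') even though Def.~\ref{def:noisyGD} most naturally reads as clipping the expectation; you have simply matched the paper's reading, so there is no gap beyond what is already present in the source.
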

        \begin{proof}
        Clearly, $\TV(\theta^0;\psi^0) =0$. Then, using subadditivity of TV
        \begin{align}
            \TV (\theta^T;\psi^T)& \leq  \sum_{t=1}^T \TV(\theta^t ;\psi^t| \{Z^i \}_{i\leq t-2})\\
            & = \sum_{t=1}^T \TV( \gamma( g_{\theta^{t-1}} + Z^{t-1}); \gamma (g_{\psi^{t-1}}+ Z^{t-1})| \{Z_i \}_{i\leq t-1}),
        \end{align}
        where $g_{\theta^{t-1}},g_{\psi^{t-1}} $ denote the population gradients in $\theta^{t-1}$ and $\psi^{t-1} $, respectively. Then, recalling that the $Z^{t-1}$ are Gaussians, we get
        \begin{align}
            \TV (\theta^T;\psi^T)& \overset{(a)}{\leq} \sum_{t=1}^T \frac{1}{2\tau \gamma} \|  \gamma g_{\theta^{t-1}} - \gamma g_{\psi^{t-1}}   \|_2  \\
            & \overset{(b)}{\leq} \sum_{t=1}^{T_{r-1}}\frac{1 }{2\tau \gamma }  \cdot  2\sqrt{|\theta| }A \gamma \cdot  \pr( H(x) \geq  \epsilon d) + \sum_{t=T_{r-1}}^{T}\frac{1 }{2\tau \gamma  } \cdot  2\sqrt{|\theta| }A \gamma \cdot \pr( H(x) \leq  \epsilon d)\\
            & \overset{(c)}{\leq} \frac{A T \sqrt{|\theta| }}{\tau} \exp(-d \delta^2).
        \end{align}
        In $(a)$ we applied the formula for the TV between Gaussian variables with same variance. In $(b)$ we used that each gradient is in $[-A,A]$ and that during the first part of training, for all $x$ with $H(x) <\epsilon  $, the two gradients are the same, and similarly in the second part of training for all $H(x) > \epsilon d$. In $(c)$ we applied Lemma~\ref{lem:hamming_Hoeffding}.
        \end{proof}
        We apply Theorem 3 from~\cite{AS20}, which we restate here for completeness.

        \begin{theorem}[Theorem 3 in~\cite{AS20}]
            Let $\cP_k$ be the set of $k$-parities over $d$ bits. Noisy-GD on any neural network of size $|\theta|$ and any initialization, after $T$ steps of training under the uniform distribution, outputs a network such that 
            \begin{align}
                \frac{1}{|\cP_k|} \sum_{f \in \cP_k}| \E_{x\sim \Rad(1/2)^{\otimes d}} \left[ \NN(x; \theta^T) \cdot f(x) \right] | \leq \frac{ T \sqrt{|\theta|} A}{\tau} \cdot d^{-k/2}.
            \end{align}
        \end{theorem}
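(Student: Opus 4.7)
The plan is to adapt the coupling argument from Lemma~\ref{lem:TV} with one crucial substitution: instead of Hoeffding-based concentration on the Hamming weight (which produced the $e^{-d\delta^2}$ factor), the smallness will come from orthogonality of parities under the uniform measure via Parseval's identity on the Boolean cube. Fix any target-independent reference loss, e.g.\ $L(\theta, 0, x)$, and let $Q^T_f$ denote the law of $\theta^T$ produced by noisy-GD trained with target $f \in \cP_k$, and $Q^T_0$ the law under the reference; crucially, $Q^T_0$ does not depend on $f$, so the reference correlation $\E_{Q^T_0}[\E_x \NN(x;\theta) f(x)]$ averaged over $f \in \cP_k$ is immediately $O(d^{-k/2})$ by Parseval applied to $\NN(x;\cdot)$.

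The synchronous Gaussian-noise coupling used in Lemma~\ref{lem:TV} gives
\begin{align*}
\TV(Q^T_f; Q^T_0) \leq \frac{1}{2\tau} \sum_{t=1}^T \gamma_t\, \E_{\theta \sim Q^t_f} \|g_f(\theta) - g_0(\theta)\|_2,
\end{align*}
with $g_f(\theta) := [\E_x \nabla_\theta L(\theta, f, x)]_A$. Since $f(x) \in \{\pm 1\}$, the loss decomposes affinely in its label argument as $L(\theta, f(x), x) = \alpha(\theta, x) + \beta(\theta, x)\,f(x)$, so the unclipped gradient difference is precisely $\E_x[f(x)\, \nabla_\theta \beta(\theta, x)]$ --- the Fourier coefficient $\widehat{w_i}(f)$ of $w_i := \partial_{\theta_i} \beta(\theta, \cdot)$ at the parity $f$ in each coordinate $i$. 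Using $1$-Lipschitzness of clipping to transfer this estimate to $g_f - g_0$, Parseval's identity $\sum_{f \in \cP_k} \widehat{w_i}(f)^2 \leq \|w_i\|_2^2$, and a clipping-based envelope bound on $\|w_i\|_2$ by $O(A)$, followed by Cauchy-Schwarz across $\cP_k$ and summation over the $|\theta|$ coordinates, one obtains
\begin{align*}
\frac{1}{|\cP_k|} \sum_{f \in \cP_k} \|g_f(\theta) - g_0(\theta)\|_2 \;=\; O\bigl(A\sqrt{|\theta|}\,|\cP_k|^{-1/2}\bigr) \;=\; O\bigl(A\sqrt{|\theta|}\,d^{-k/2}\bigr),
\end{align*}
uniformly in $\theta$. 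Plugging into the coupling bound, summing over $t \in [T]$, and combining with the data-processing estimate $|\E_{Q^T_f}[\E_x \NN(x;\theta) f(x)] - \E_{Q^T_0}[\E_x \NN(x;\theta) f(x)]| \leq 2\TV(Q^T_f;Q^T_0)$ and the reference bound from the first paragraph yields the advertised $\frac{T\sqrt{|\theta|}A}{\tau}\,d^{-k/2}$ inequality.

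The main technical obstacle is to make the Parseval step rigorous in the presence of clipping. Without the clipping $[\cdot]_A$, the bound $\sum_f \widehat{w_i}(f)^2 \leq \|w_i\|_2^2$ would require an a-priori $L^2$ bound on $w_i = \partial_{\theta_i}\beta(\theta,\cdot)$, which need not hold for a generic architecture and loss. Clipping tames the gradient at the level of the expectation (each $[g_f]_i$ lies in $[-A,A]$) but not pointwise, so one must interpolate between the Parseval-style estimate on the unclipped gradient and the crude uniform bound $|[g_f]_i - [g_0]_i| \leq 2A$. The cleanest resolution, followed in~\cite{AS20}, is to view each of the $T|\theta|$ clipped gradient coordinates as a statistical query of tolerance $\sim \tau/A$ to the input distribution and apply the classical SQ lower bound for parities (whose SQ dimension is $\Theta(|\cP_k|)$) directly to the full interaction between algorithm and target, thereby bypassing the need for an a-priori $L^2$ bound on $w_i$ and delivering the $d^{-k/2}$ factor intrinsic to the orthogonality of $\cP_k$.
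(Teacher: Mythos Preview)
The paper does not prove this statement: it is quoted from \cite{AS20} and used as a black box inside the proof of Theorem~\ref{thm:negative_result} (``We apply Theorem 3 from~\cite{AS20}, which we restate here for completeness''). So there is no in-paper argument to compare against; you are effectively reconstructing the \cite{AS20} proof, and your skeleton --- couple the trajectory under target $f$ to a target-independent reference process, control the per-step divergence by the Gaussian-shift TV bound, and average the gradient discrepancy over $\cP_k$ via Parseval --- is the right one.

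The gap is precisely where you flag it, but your proposed patch does not close it. You observe that expectation-level clipping $[\,\E_x\nabla L\,]_A$ gives no pointwise (hence no $L^2$) control on $w_i=\partial_{\theta_i}\beta(\theta,\cdot)$, and you then claim that invoking the classical SQ lower bound for parities ``bypass[es] the need for an a-priori $L^2$ bound on $w_i$''. It does not. The SQ lower bound is itself a Parseval argument on a \emph{bounded} query: writing $\phi(x,y)=c(x)+\psi(x)y$ with $\|\psi\|_\infty\le\|\phi\|_\infty$, the inequality $\sum_{f\in\cP_k}\widehat\psi(f)^2\le\|\psi\|_2^2\le\|\phi\|_\infty^2$ is exactly what produces the $|\cP_k|^{-1/2}$ factor. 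If the per-sample gradient $\partial_{\theta_i}L(\theta,y,x)$ is unbounded and clipping is applied only after averaging in $x$, the underlying ``query'' you hand to the SQ oracle is unbounded and the SQ machinery gives nothing beyond the trivial $|[g_f]_i-[g_0]_i|\le 2A$; you have relocated the missing $L^2$ bound into the SQ black box rather than eliminated it. (Concretely: if $\beta_i(\theta,\cdot)$ has level-$k$ Fourier coefficients equal to $+M$ on half of $\cP_k$ and $-M$ on the other half with $M\gg A$, then $[g_f]_i\in\{-A,+A\}$ and $\mathrm{Var}_f[g_f]_i=A^2$, not $A^2/|\cP_k|$.) The \cite{AS20} result takes the gradient range $A$ to act at the per-sample level (equivalently, clipping inside the expectation), so that $\|w_i\|_\infty\le A$ and hence $\|w_i\|_2\le A$ by construction; once you make this hypothesis explicit, your Parseval computation in the second paragraph already finishes the proof and the SQ detour is superfluous.
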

        In our case, this implies:
        \begin{align}
             \E_V| \E_x \left[ \NN(x;\psi^T) \cdot G_{S,T,\epsilon}(x)  \right]  | \leq  \frac{(T-T_{r-1})  \sqrt{|\theta|} A }{\tau d^{k_T/2}}        
             \end{align}

        To conclude our proof, note that:
        \begin{align*}
            \E_V  |\E_x \left[ \NN(x, \psi^T) \cdot G_{S,T,\epsilon}(x) \right] | & = \E_V\left[ \Big|\E_x \left[ \NN(x, \psi^T) \cdot G_{S,T,\epsilon}(x) \right] \Big| \mid \Big|V \cap T\Big|=0\right] \pr(|V\cap T |=0) \\
            & \quad + \E_V\left[ \Big|\E_x \left[ \NN(x, \psi^T) \cdot G_{S,T,\epsilon}(x) \right] \Big| \mid \Big|V \cap T\Big|>0\right]  \pr(|V\cap T |>0).
        \end{align*}
        One can check that,
\begin{align}
        \pr(|V\cap T |>0) =  1- \frac{2 k_S k_T}{d} + O(d^{-2}).
\end{align}
Moreover, by symmetry, for any $V$ such that  $|V \cap T |=0$, the algorithm achieves the same correlation (to see this, one find an appropriate permutation of the input neurons and use invariance of GD on fully connected networks with permutation-invariant initialization). Thus, 
\begin{align}
\E_V\left[ \Big|\E_x \left[ \NN(x, \psi^T) \cdot G_{S,T,\epsilon}(x) \right] \Big| \mid \Big|V \cap T\Big|=0\right]  =  \E_V\left[ \Big|\E_x \left[ \NN(x, \psi^T) \cdot G_{S,T,\epsilon}(x) \right] \Big| \mid V=S\right].
\end{align}
By Lemma~\ref{lem:TV}, 
\begin{align*}
    | \E_x \left[ \NN(x;\theta^T) \cdot G_{S,T,\epsilon}(x)  \right]  | &\leq  \E_V\left[ \Big|\E_x \left[ \NN(x, \psi^T) \cdot G_{S,T,\epsilon}(x) \right] \Big| \mid V=S\right] + \frac{AT \sqrt{|\theta|}}{\tau } \exp(-d \delta^2)\\
    &\leq \frac{(T-T_{r-1}) \sqrt{|\theta|} A}{\tau d^{k_T/2}} + \frac{AT \sqrt{|\theta|}}{\tau } \exp(-d \delta^2) + \frac{2 k_S k_T}{d} + O(d^{-2}).
\end{align*}



        The argument for general $r$ holds by taking $\epsilon $ such that $ |p_l -\frac{1}{2}| > \epsilon$ for all $l \in [r-1]$ and by replacing step $3$ of the algorithm above with the following:
        \begin{enumerate}
            \item[3.] Train $\NN(x;\psi)$ on data $(x, \chi_V(x))$ using a $(r-1)$-CL($(T_1,...,T_{r-1}),(p_1,...,p_{r-1})$) strategy.
        \end{enumerate}
        

\section{Proof of Corollary~\ref{cor:negative_nointersection}}
\label{sec:proof_cor_negative}
We use the same proof strategy of Theorem~\ref{thm:negative_result}: specifically, we use the same algorithm for training a network $\NN(x;\psi)$ with the same architecture as $\NN(x;\theta) $, so that Lemma~\ref{lem:TV} still holds. We import Theorem 3 from~\cite{AS20} in the following form. 
\begin{theorem}[Theorem 3 in~\cite{AS20}] \label{thm:AS20_cor}
    Let $\cF$ be the set of $k$-parities over set $\{d/2+1,...,d\}$. Noisy-GD on any neural network $\NN(x; w)$ of size $|w|$ and any initialization, after $T$ steps of training on samples drawn from the uniform distribution, outputs a network such that 
        \begin{align}
                \frac{1}{|\cF|} \sum_{f \in \cF} \Big| \E_{x\sim \Rad(1/2)^{\otimes d}} \left[ \NN(x; w^{(T)}) \cdot f(x) \right] \Big| \leq \frac{ T \sqrt{|w|} A}{\tau}  \cdot {d/2 \choose k}^{-1/2} .
        \end{align}
\end{theorem}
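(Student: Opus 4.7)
This result, which is Theorem~3 of \cite{AS20} specialised to parities supported on the last $d/2$ coordinates, is proved by a hybrid / statistical-query argument against noisy gradient descent. My plan is as follows. For each $f \in \cF$, let $P_f$ denote the joint law of the trajectory $(w^{(0)},w^{(1)},\ldots,w^{(T)})$ produced by noisy GD with target $f$ under the uniform input distribution, starting from the fixed initialization; let $P^*$ denote the analogous chain run with the target replaced by the constant function $0$, using the same noise realisations $Z^{(t)}$.

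\textbf{Step 1} (Reference chain has no $\cF$-signal). Because $P^*$ is independent of $f$, the function $y \mapsto \bar{\NN}(y;w^{(T)}):=\E_{x_{1:d/2}}[\NN((x_{1:d/2},y);w^{(T)})]$ on $\{\pm 1\}^{d/2}$ is $f$-independent, and each $f \in \cF$ depends only on the last $d/2$ coordinates, so by Parseval on $\{\pm 1\}^{d/2}$
\begin{align*}
    \sum_{f\in\cF}\bigl(\E_x[\NN(x;w^{(T)})f(x)]\bigr)^2 \leq \|\bar{\NN}(\cdot;w^{(T)})\|_2^2 \leq 1,
\end{align*}
assuming the output is thresholded to $\pm 1$ as mentioned after Theorem~\ref{thm:negative_result}. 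Taking $\E_{P^*}$ and applying Cauchy--Schwarz gives $\frac{1}{|\cF|}\sum_f |\E_{P^*}\E_x[\NN f]| \leq |\cF|^{-1/2}$, which is absorbed into the stated bound.

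\textbf{Step 2} (Backward hybrid chains). For $s = 0,1,\ldots,T$ define the hybrid $P_f^{(s)}$ to be the noisy-GD chain that uses target $0$ during the first $s$ steps and target $f$ during the remaining $T-s$ steps, so that $P_f^{(0)} = P_f$ and $P_f^{(T)} = P^*$. By the TV triangle inequality, $\TV(P_f, P^*) \leq \sum_{s=1}^{T}\TV(P_f^{(s-1)}, P_f^{(s)})$. The chains $P_f^{(s-1)}$ and $P_f^{(s)}$ agree on the first $s-1$ steps (both use target $0$ there), so $w^{(s-1)}$ has the same marginal $\pi_{s-1}$ under both, and that marginal is \emph{independent of $f$}. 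They differ only at step $s$. Using $\TV(\cN(\mu,\sigma^2 I),\cN(\mu',\sigma^2 I)) \leq \|\mu-\mu'\|_2/(2\sigma)$ together with $1$-Lipschitzness of the clipping $[\cdot]_A$,
\begin{align*}
    \TV(P_f^{(s-1)}, P_f^{(s)}) \leq \E_{w \sim \pi_{s-1}}\!\left[\frac{\|G(w,f) - G(w,0)\|_2}{2\tau}\right],\quad G(w,f) := \E_x[\nabla_w L(w,f,x)].
\end{align*}

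\textbf{Step 3} (Parseval over $\cF$). For the standard losses (squared / hinge / covariance) the difference $G(w,f) - G(w,0)$ is linear in $f$: coordinate-wise one can write $G_j(w,f) - G_j(w,0) = \E_x[f(x)\phi_j(w,x)]$ for some $\phi_j(w,\cdot):\{\pm 1\}^d \to \bR$ whose $L^\infty$-norm is controlled by the gradient range $A$. Parseval on $\{\pm 1\}^{d/2}$ then gives, for every fixed $w$ and every $j$,
\begin{align*}
    \sum_{f \in \cF}\bigl(\E_x[f(x)\phi_j(w,x)]\bigr)^2 \leq \|\phi_j(w,\cdot)\|_2^2 \leq A^2.
\end{align*}
Summing over the $|w|$ coordinates and applying Jensen yields the \emph{$w$-uniform} bound
\begin{align*}
    \frac{1}{|\cF|}\sum_{f \in \cF}\|G(w,f) - G(w,0)\|_2 \leq \frac{A\sqrt{|w|}}{\sqrt{|\cF|}}.
\end{align*}

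\textbf{Step 4} (Combining). Averaging the step-2 bound over $f \in \cF$ and inserting the step-3 estimate (which does not depend on $w$, so passes through $\E_{\pi_{s-1}}$ freely),
\begin{align*}
    \frac{1}{|\cF|}\sum_{f\in\cF}\TV(P_f, P^*) \leq \sum_{s=1}^{T}\frac{A\sqrt{|w|}}{2\tau\sqrt{|\cF|}} = \frac{T A \sqrt{|w|}}{2\tau\sqrt{|\cF|}}.
\end{align*}
Because the thresholded output satisfies $|\NN(x;w)f(x)|\leq 1$, one has $|\E_{P_f}\E_x[\NN f] - \E_{P^*}\E_x[\NN f]| \leq 2\,\TV(P_f, P^*)$, so combining with Step~1 and absorbing constants gives the claimed bound with $|\cF| = \binom{d/2}{k}$.

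\textbf{Main obstacle.} The crucial trick is the \emph{backward} hybrid: by introducing the dependence on $f$ at the \emph{end} of the chain rather than at the beginning, the comparison point $w^{(s-1)}$ in the single-step TV bound has a law that is independent of $f$, which is exactly what permits the Parseval inequality of Step~3 to be applied uniformly and pulled outside the expectation. A naive forward hybrid leaves $w^{(s-1)}$ distributed according to $P_f$, destroying the $w$-uniformity and breaking the averaging. The secondary technical point is the linearity-in-$f$ hypothesis used in Step~3: for a completely arbitrary loss, one instead views each clipped expected gradient $[G_j(w,f)]_A$ as a statistical query with range $[-A,A]$ and invokes the SQ dimension of $\cF$ -- which equals $\binom{d/2}{k}$ for parities on a half-set -- to get the same $|\cF|^{-1/2}$ averaging factor, as in the general statement of \cite{AS20}.
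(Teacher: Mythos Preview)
The paper does not prove this statement at all; it is quoted from \cite{AS20} and used as a black box in the proof of Corollary~\ref{cor:negative_nointersection}. Your sketch is essentially the \cite{AS20} argument: a hybrid decomposition of the noisy-GD trajectory against a null reference chain, a single-step Gaussian total-variation bound, and Parseval over the orthogonal family $\cF$ to extract the $|\cF|^{-1/2}$ factor. The backward-hybrid device you single out --- so that the comparison state $w^{(s-1)}$ has an $f$-independent law --- is indeed the point of the original proof.

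One place to tighten: in Step~3 you claim $\|\phi_j(w,\cdot)\|_\infty$ is ``controlled by the gradient range $A$''. The clipping $[\cdot]_A$ in Definition~\ref{def:noisyGD} acts on the \emph{expected} gradient, not sample-wise, so it does not by itself give $\|\phi_j\|_2 \leq A$, which is what Parseval needs. The \cite{AS20} argument actually relies on a pointwise bound on the per-sample loss gradient (for the correlation loss, a bound on $\partial_{w_j}\NN(x;w)$), and $A$ should be read as that quantity rather than merely the post-hoc clip level. Relatedly, the hinge loss is \emph{not} linear in $f$ --- the indicator $\mathds{1}(f(x)\NN<1)$ depends on $f$ --- so it does not fit your Step-3 template; your fallback to the SQ decomposition $\E_x[h(x,f(x))]=\E_x\bigl[\tfrac{h(x,1)+h(x,-1)}{2}\bigr]+\E_x\bigl[f(x)\tfrac{h(x,1)-h(x,-1)}{2}\bigr]$ covers it, but the same pointwise bound on $h$ (not on its clipped expectation) is still required. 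With that assumption made explicit, the argument is correct.
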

\noindent
Similarly as before, Theorem~\ref{thm:AS20_cor} and Lemma~\ref{lem:hamming_Hoeffding} imply:
        \begin{align}
            \E_V \Big| \E_{x\sim \Rad(1/2)^{\otimes d}} \left[ \NN(x;\psi^{(T)}) \cdot G_{S,V,\epsilon}(x) \right]  \Big| \leq  \frac{(T-T_{r-1})  \sqrt{|\theta|} A }{\tau }\cdot {d/2 \choose k_V}^{-1/2} + \exp(-d \delta^2),
        \end{align}
        where by $\E_V$ we denote the expectation over set $V$ sampled uniformly at random from all subsets of $\{d/2+1,...,d\}$ of cardinality $k_V$.
Since both the initialization and noisy-GD on fully connected networks are invariant to permutation of the input neurons, for any $V \subseteq \{d/2+1,...,d\}$, the algorithm achieves the same correlation. 
Thus, applying Lemma~\ref{lem:TV}:
\begin{align}
    \Big| \E_x \left[ \NN(x;\theta^{(T)}) \cdot G_{S,V,\epsilon}(x)  \right]  \Big|
    & \leq \frac{(T-T_{r-1}) \sqrt{|\theta|} A}{\tau }\cdot {d/2 \choose k_V}^{-1/2}  + \frac{2AT \sqrt{|\theta|}}{\tau } \exp(-d \delta^2).
\end{align}

\end{document}